%% file: main.tex
\documentclass[11pt,letterpaper]{article}

\usepackage[letterpaper,margin=1in]{geometry}
\usepackage{times}
\usepackage[round,authoryear]{natbib}

\usepackage[hypertexnames=false]{hyperref}
\usepackage{url}
\usepackage{booktabs}
\usepackage{graphicx}
\usepackage{placeins}
\usepackage{algorithm}
\usepackage{algpseudocode}
\usepackage{amsmath,amssymb,amsthm,mathtools}
\usepackage{microtype}
\newcommand{\R}{\mathbb{R}}
\newcommand{\N}{\mathbb{N}}
\newcommand{\E}{\mathbb{E}}
\newcommand{\Pcal}{\mathcal{P}}
\newcommand{\Kcal}{\mathcal{K}}
\newcommand{\Ucal}{\mathcal{U}}
\newcommand{\Xcal}{\mathcal{X}}
\newcommand{\Ycal}{\mathcal{Y}}
\newcommand{\Gcal}{\mathcal{G}}
\newcommand{\Ncal}{\mathcal{N}}

\newcommand{\Law}{\mathcal{L}}
\newcommand{\supp}{\operatorname{supp}}
\newcommand{\dist}{\operatorname{dist}}
\newcommand{\diam}{\operatorname{diam}}
\newcommand{\diag}{\operatorname{diag}}
\newcommand{\tr}{\operatorname{tr}}
\newcommand{\Lip}{\operatorname{Lip}}
\newcommand{\SW}{\operatorname{SW}}

\newcommand{\Unif}{\operatorname{Unif}}
\newcommand{\softmax}{\operatorname{softmax}}

\newcommand{\pushforward}[2]{{#1}_{\#}#2}

\newtheorem{theorem}{Theorem}[section]
\newtheorem{lemma}[theorem]{Lemma}
\newtheorem{proposition}[theorem]{Proposition}
\newtheorem{corollary}[theorem]{Corollary}

\hypersetup{
  pdftitle={From Distributions to Stochastic Processes: Neural Approximation of Measure-Valued Maps},
  pdfauthor={Yichen Wang, Ziyi Wang, Wenlian Lu, Chenghuang Shen, Jianfeng Liu, Zhengdong Xiao, Longjiu Luo, Qianrong Wang},
  pdfsubject={Distribution-to-distribution learning and neural approximation of measure-valued maps}
}

\title{\texorpdfstring{From Distributions to Stochastic Processes: \\Neural Approximation of Measure-Valued Maps}
{From Distributions to Stochastic Processes: Neural Approximation of Measure-Valued Maps}}

\author{
Yichen Wang\textsuperscript{3},
Ziyi Wang\textsuperscript{4},
Wenlian Lu\textsuperscript{5},
Chenghuang Shen\textsuperscript{2,3},\\
Jianfeng Liu\textsuperscript{1},
Zhengdong Xiao\textsuperscript{1},
Longjiu Luo\textsuperscript{1},
Qianrong Wang\textsuperscript{1}\\
\textsuperscript{1}Alibaba Group\\
\textsuperscript{2}Alibaba Group, Hangzhou, China\\
\textsuperscript{3}Shanghai Center for Mathematical Sciences, Shanghai, China\\
\textsuperscript{4}Department of Mathematics, Fudan University, Shanghai, China\\
\textsuperscript{5}Center for Applied Mathematics, Fudan University, Shanghai, China\\
\texttt{yichenwang23@m.fudan.edu.cn, 21110180058@m.fudan.edu.cn}\\
\texttt{wenlian@fudan.edu.cn, austinshen9618@hotmail.com}\\
\texttt{ljf7451320@hotmail.com, tianqiao.xzd@alibaba-inc.com}\\
\texttt{luolongjiu@gmail.com, qianrong.wqr@alibaba-inc.com}
}

\begin{document}

\maketitle

\begin{abstract}
Learning mappings between probability distributions arises naturally 
in settings where both inputs and outputs are represented by populations 
of samples rather than individual observations. 
This paper develops an approximation-theoretic framework for such 
distribution-to-distribution learning problems and extends it to mappings between 
stochastic processes. For continuous operators acting on $W_2$ -compact families 
of finite-dimensional input laws, we establish uniform neural approximation in the
2-Wasserstein metric. Our construction relies on finite law statistics as input 
representations, a simplex-valued neural map, and a shared atomic output support 
that guarantees the output is always a valid probability measure. We then extend 
this approximation principle to probability laws on separable Hilbert spaces via 
fixed finite-rank orthogonal projections. 
Together, these results establish the 
representational feasibility of learning transformations whose inputs and outputs 
are probability laws rather than deterministic vectors or functions.

To assess practical relevance, we construct numerical problems whose target transformations 
are naturally defined at the level of distributions or probability laws: predicting 
the first-passage-time distribution of an Ornstein--Uhlenbeck process and the nonlinear 
response-path laws of a Duffing oscillator. Since the theoretical framework is 
deliberately model-agnostic and covers a substantially broader class of operators 
than can be efficiently represented by any single practical architecture, our experiments 
instantiate the framework using task-adapted neural models rather than reproducing 
the theoretical construction verbatim. Both models outperform a fixed-feature MLP 
baseline and distribution-space kernel regression. These proof-of-concept demonstrations 
complement the approximation theory by supporting the learnability and practical relevance 
of distribution-to-distribution transformations for random systems.
\end{abstract}

\section{Introduction}
\label{sec:introduction}

\paragraph{Motivation and problem setup.}
Many random systems are naturally described at the population level. In
uncertainty propagation, mean-field dynamics, ensemble forecasting, stochastic
PDEs, and Bayesian inverse problems, the desired prediction is an output law
rather than the response of one prescribed input
realization~\citep{lord2014spde,walsh1986spde,sznitman1991chaos,
carmona2018meanfield,stuart2010inverse,evensen2003enkf}. When the observations
are paths or fields, the same question concerns laws on function spaces. In
both cases, probability distributions are the mathematical objects of
interest, while finite ensembles are observations of those objects.

We formulate this task as learning a \emph{distributional operator}
\begin{equation*}
\Gcal:\Kcal\subset\Pcal_2(\Xcal)\longrightarrow\Pcal_2(\Ycal),
\qquad \mu\longmapsto\Gcal(\mu),
\end{equation*}
where $\Pcal_2$ denotes probability measures with finite second moment.
Training observations are pairs of empirical laws
$\{(\widehat\mu_i,\widehat\nu_i)\}_{i=1}^n$. The input and output ensembles
within a pair may be sampled independently, so there need not be a meaningful
map between individual particles. The target is instead the law-level relation
$\nu_i=\Gcal(\mu_i)$. We measure approximation in the 2-Wasserstein metric
$W_2$. The finite-dimensional setting takes $\Xcal=\R^{d_x}$ and
$\Ycal=\R^{d_y}$; the process setting takes $\Xcal$ and $\Ycal$ to be
separable Hilbert spaces of paths or fields.

\paragraph{Related work and the theoretical gap.}
Classical universal approximation theory treats continuous maps between
finite-dimensional vectors~\citep{cybenko1989approximation}. Neural-operator
theory extends the encoding--mapping--reconstruction principle to
deterministic functions, with DeepONet and Fourier neural operators providing
prominent examples~\citep{lu2021deeponet,li2021fno,
kovachki2024neuraloperator}. Our input and output, however, are probability
laws. A deterministic operator between functions does not establish that an
input law can be encoded by finitely many observable statistics, transformed
continuously, and decoded into a valid output law with uniform error control.

Distribution regression and invariant set networks address complementary
parts of the problem. Distribution regression maps an empirical law to a
scalar or vector, while distribution-to-distribution regression permits a
distributional response~\citep{szabo2016distribution,
oliva2013distribution}. DeepSets and Set Transformers provide practical
permutation-invariant encoders for unordered samples~\citep{zaheer2017deepsets,
lee2019settransformer}, but an architecture for finite sets alone does not
give a uniform approximation theorem on a space of laws. More specifically,
\citet{oliva2013distribution} assume densities on compact Euclidean domains
and derive an expected $L_2$ density-risk bound for a kernel-smoothed
orthogonal-series estimator. Their setting does not yield uniform $W_2$
approximation or guarantee that a truncated output series is a probability
density. \citet{pham2023meanfieldnn} approximate maps of the form
$(x,\mu)\mapsto V(x,\mu)$, whose outputs are vectors or functions rather than
probability laws.

Measure-valued approximation is closer to our objective. The probabilistic
transformer maps finite-dimensional features to mixtures of fixed measures
and proves compact-uniform approximation for $\Pcal_1(\R^D)$-valued maps in
$W_1$~\citep{kratsios2021universal}; the general framework of
\citet{kratsios2023metric} covers maps between broad classes of metric spaces,
including probability-valued targets. Transformer dynamics can also
interpolate finitely many prescribed input--target measure
pairs~\citep{geshkovski2024measure}. These results establish important general
measure-valued approximation principles. What remains unresolved for our
setting is their combination with a law-valued input accessed through an
empirical ensemble: a finite encoder that captures $W_2$ rather than only weak
behavior, a decoder that always returns a probability measure, a uniform bound
for replacing feature integrals by sample averages, and a lifting from
finite-dimensional laws to laws on path spaces. The gap is particularly
delicate because $W_2$ convergence requires both weak convergence and control
of second moments, and path-space reduction adds input- and output-projection
errors.

\paragraph{Our theory.}
For a $W_2$-compact family
$\Kcal\subset\Pcal_2(\R^{d_x})$ and a $W_2$-continuous operator
$\Gcal:\Kcal\to\Pcal_2(\R^{d_y})$, Theorem~\ref{thm:neural-realization}
constructs a finite law encoder from bounded test-function expectations and a
truncated second-moment feature. A neural map with a softmax output assigns
weights to a common finite set of output atoms. The bounded tests control weak
behavior, the moment feature supplies the additional information required by
$W_2$, and the simplex-valued decoder guarantees nonnegative weights with unit
mass. Compactness and continuity then give uniform $W_2$ approximation of the
target operator. For empirical inputs, Proposition~\ref{prop:empirical}
replaces population feature integrals by independent sample averages and
separates the population approximation error from a uniform high-probability
query-sampling error.

Theorem~\ref{thm:path-space} extends the result to laws on separable Hilbert
spaces. Fixed finite-rank orthogonal maps encode paths into coordinate vectors,
the finite-dimensional distributional operator acts on the induced coordinate
laws, and synthesis maps reconstruct output paths. Uniform projection on
$W_2$-compact law families decomposes the total error into input projection,
finite-dimensional neural approximation, and output reconstruction. The
argument remains entirely at the level of probability laws and therefore does
not require eventwise pairing of input and output paths. Together, the two
theorems provide the missing $W_2$-specific link between empirical law
encoding, valid probability-valued decoding, and process-law approximation.
They establish representational feasibility rather than statistical training
rates or a mandatory practical architecture.

\paragraph{Contributions.}
Our contributions are:
\begin{itemize}
\item We formulate distribution-to-distribution learning from finite,
unpaired input and output ensembles as approximation of a continuous operator
between probability-law spaces.
\item We prove uniform neural approximation in $W_2$ on compact families of
finite-dimensional input laws using finitely many law statistics, neural
simplex weights, and a valid common-support output law. We also give a uniform
query-sample bound for empirical feature averages.
\item We lift the approximation result to process and random-field laws on
separable Hilbert spaces through fixed finite-rank projections, with explicit
control of input projection, finite-dimensional approximation, and output
reconstruction.
\item We demonstrate the framework on a censored Ornstein--Uhlenbeck
first-passage law and nonlinear Duffing response-path laws. Task-specific
categorical and generative decoders assess practical learnability against
fixed-feature and distribution-space kernel baselines without identifying the
existence construction with a unique architecture.
\end{itemize}

\section{Universal Approximation Theory for Operators between Probability Laws}
\label{sec:chapter3}

We establish two universal approximation results: one for operators between
finite-dimensional probability laws and one for laws on separable Hilbert
spaces. Both use encoding, neural mapping, and valid probability-law
reconstruction. Supporting constructions, empirical-feature bounds, and
projection lemmas are summarized below; their formal statements and all
proofs appear in Appendix~\ref{app:proofs}.

\subsection{Problem setup and assumptions}
\label{subsec:problem-setup}

We begin with the standard probability-space notation used throughout.  If
$(\Xcal,d_\Xcal)$ is a Polish space, $\Pcal(\Xcal)$ denotes its Borel
probability measures and, for $1\leq p<\infty$,
\begin{equation}
 \Pcal_p(\Xcal)=\left\{\mu\in\Pcal(\Xcal):
 \int_\Xcal d_\Xcal(x,x_0)^p\,d\mu(x)<\infty\right\},
 \label{eq:p-moment-space}
\end{equation}
where the definition does not depend on the fixed point $x_0\in\Xcal$.  For
$\mu,\nu\in\Pcal_p(\Xcal)$, let $\Pi(\mu,\nu)$ be their set of couplings and
define
\begin{equation}
 W_p^\Xcal(\mu,\nu)=
 \inf_{\pi\in\Pi(\mu,\nu)}
 \left(\int_{\Xcal\times\Xcal}d_\Xcal(x,y)^p\,d\pi(x,y)\right)^{1/p}.
 \label{eq:wasserstein-definition}
\end{equation}
When the underlying space is clear, we simply write $W_p$.  For a measurable
map $F:\Xcal\to\Ycal$, $F_\#\mu$ denotes the pushforward law, and for
i.i.d. samples $X_1,\ldots,X_{N_{\mathrm{in}}}\sim\mu$ we write
\begin{equation}
 \widehat\mu_{N_{\mathrm{in}}}
 =\frac1{N_{\mathrm{in}}}\sum_{b=1}^{N_{\mathrm{in}}}\delta_{X_b}
 \label{eq:empirical-measure}
\end{equation}
for the empirical measure.

Let $\Xcal$ and $\Ycal$ be Polish spaces.  A distributional operator is a map
\begin{equation*}
 \Gcal:\Kcal\subset\Pcal_2(\Xcal)\longrightarrow\Pcal_2(\Ycal).
\end{equation*}
Its uniform approximation error on $\Kcal$ is measured by
$\sup_{\mu\in\Kcal}W_2^\Ycal(\Ncal(\mu),\Gcal(\mu))$.

The finite-dimensional theory takes $\Xcal=\R^{d_x}$ and
$\Ycal=\R^{d_y}$ and uses the following assumptions throughout:
\begin{enumerate}
 \item[(A1)] $\Kcal\subset\Pcal_2(\R^{d_x})$ is $W_2$-compact.
 Consequently, the family is tight and its second moments are uniformly
 integrable:
 \begin{equation}
  \lim_{R\to\infty}\sup_{\mu\in\Kcal}
  \int_{\{\|x\|_2>R\}}\|x\|_2^2\,d\mu(x)=0.
  \label{eq:ui-second-moment}
 \end{equation}
 \item[(A2)] $\Gcal:\Kcal\to\Pcal_2(\R^{d_y})$ is $W_2$-continuous.
 Hence $\Gcal$ is uniformly continuous on $\Kcal$ and $\Gcal(\Kcal)$ is
 $W_2$-compact.
 \item[(A3)] The chosen continuous, nonpolynomial activation gives uniform
 neural-network approximation on compact Euclidean sets, as in the classical
 universal approximation theorem.  Bounded clipping, or an equivalent
 bounded output activation, is available for the feature networks.
\end{enumerate}

Two kinds of input statistics are needed.  Choose bounded
weak-convergence tests $\psi_1,\ldots,\psi_M\in C_b(\R^{d_x})$ and retain the
second moment as a separate coordinate:
\begin{align}
 E_{\rm wk}(\mu)&=\left(\int\psi_1\,d\mu,\ldots,
 \int\psi_M\,d\mu\right),\nonumber\\
 m_2(\mu)&=\int\|x\|_2^2\,d\mu(x),\nonumber\\
 E(\mu)&=(E_{\rm wk}(\mu),m_2(\mu))\in\R^{M+1}.
 \label{eq:test-encoder}
\end{align}
The bounded coordinates determine weak behavior, while convergence of the
second moments upgrades weak convergence to $W_2$ convergence.  Compactness
then permits a finite choice of tests at any prescribed accuracy.

The construction has three layers.  First, $E$ represents the input law to the
accuracy required by the uniform continuity of $\Gcal$.  Second, a continuous
simplex-valued map assigns weights to a common finite output support.  Third,
ordinary neural networks approximate the feature functions and simplex map.
The Tietze--Dugundji extension theorem supplies the simplex-valued
extension needed in the second layer, while tightness and
\eqref{eq:ui-second-moment} control neural approximation of the feature
integrals.

\subsection{Universal approximation of finite-dimensional laws}
\label{subsec:finite-laws}

The first theorem combines finite law representation with neural realization.
The encoder integrates bounded neural features, including a truncated
approximation of the second moment, and a softmax network predicts weights
on a common finite output support.

\begin{theorem}[Universal approximation of distributional operators]
\label{thm:neural-realization}
Assume \textup{(A1)--(A3)}. Let
$\Delta_N=\{w\in[0,1]^N:\sum_{j=1}^Nw_j=1\}$.  For every $\varepsilon>0$, there exist $M,N\in\N$, common output atoms
$a_1,\ldots,a_N\in\R^{d_y}$, bounded scalar feature networks $\widehat\psi_i$, a bounded neural approximation $\widehat q_R$ of
$q(x)=\|x\|_2^2$ obtained by truncation at some radius $R$, and a softmax
network $\widehat p:\R^{M+1}\to\Delta_N$.  Define
\begin{align}
 \widehat E_{{\rm wk},R}(\mu)
 &=\left(\int\widehat\psi_1\,d\mu,\ldots,
 \int\widehat\psi_M\,d\mu\right),\nonumber\\
 \widehat E_R(\mu)
 &=\left(\widehat E_{{\rm wk},R}(\mu),
 \int\widehat q_R\,d\mu\right),
 \label{eq:neural-realization}
\end{align}
and
\begin{equation}
 \widehat\Ncal_\theta(\mu)
 =\sum_{j=1}^N\widehat p_j(\widehat E_R(\mu))\delta_{a_j}.
 \label{eq:neural-finite-support}
\end{equation}
Then the networks can be chosen so that
\begin{equation*}
 \sup_{\mu\in\Kcal}
 W_2(\widehat\Ncal_\theta(\mu),\Gcal(\mu))<\varepsilon.
\end{equation*}
\end{theorem}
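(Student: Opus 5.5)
The plan is to build the approximant in three layers, following the construction outlined in Section~\ref{subsec:problem-setup}, and to push all errors through the uniform continuity of $\Gcal$ from (A2).

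\emph{Step 1 (output discretization).} Since $\Gcal(\Kcal)$ is $W_2$-compact, its second moments are uniformly integrable. Choose a radius $R_y$ with $\sup_{\nu\in\Gcal(\Kcal)}\int_{\|y\|>R_y}\|y\|^2d\nu$ small. Take a fine grid $a_1,\ldots,a_N$ of the ball $\overline B(0,R_y)$, with a nearest-point projection $T$. Then $W_2(\nu,T_\#\nu)<\varepsilon/4$ for all $\nu\in\Gcal(\Kcal)$, where the tail mass is sent to the grid point nearest the origin side. The map $\mu\mapsto T_\#\Gcal(\mu)$ need not be continuous, since $T$ is discontinuous. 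I would therefore use a partition of unity $\{\phi_j\}$ subordinate to small balls around the $a_j$, and set $P(\mu)_j=\int\phi_j\,d\Gcal(\mu)$, handling the tail in the same way. Because $\phi_j$ is bounded continuous, $P:\Kcal\to\Delta_N$ is $W_2$-continuous. We also have $W_2(\sum_j P(\mu)_j\delta_{a_j},\Gcal(\mu))<\varepsilon/4$ uniformly, by the standard coupling that moves mass only within the support of each $\phi_j$ (small distance) or along tails. The key fact used afterwards is $W_2(\sum p_j\delta_{a_j},\sum p'_j\delta_{a_j})^2\le(\diam\{a_j\})^2\|p-p'\|_1/2$, so perturbing the weights costs little.

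\emph{Step 2 (finite encoder and simplex map).} The claim is that $E:\Kcal\to\R^{M+1}$ from \eqref{eq:test-encoder} can be chosen injective, for a suitable finite family $\psi_i$. Then $E$ is a homeomorphism onto the compact image $E(\Kcal)$, since the map is continuous from a compact space to a Hausdorff one. Continuity holds because $W_2$ convergence gives weak convergence and convergence of $m_2$. Injectivity with finitely many tests may fail, though, so I would instead argue quantitatively. By uniform continuity of $P$ there is $\eta>0$ with $W_2(\mu,\mu')<\eta\Rightarrow\|P(\mu)-P(\mu')\|_1$ small. Using a countable convergence-determining family $\{\psi_i\}$ and compactness, pick $M$ and $\delta>0$ such that $\|E(\mu)-E(\mu')\|<\delta$ implies $W_2(\mu,\mu')<\eta$. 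If no such $M,\delta$ existed, we could extract sequences and limits $\mu\neq\mu'$ agreeing on all tests and second moments, a contradiction. Then $P$ factors approximately through $E$: the set-valued relation $E(\mu)\mapsto P(\mu)$ has oscillation small at scale $\delta$. Averaging via a partition of unity on $E(\Kcal)$ gives a continuous $F:E(\Kcal)\to\Delta_N$ with $\|F\circ E-P\|_1$ small. Convexity of $\Delta_N$ makes the averaging valid. Tietze--Dugundji then extends $F$ continuously to a compact neighborhood in $\R^{M+1}$, still $\Delta_N$-valued.

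\emph{Step 3 (neural realization).} By (A3), approximate each $\psi_i$ uniformly on $\R^{d_x}$ by clipped networks $\widehat\psi_i$. Uniformity on all of $\R^{d_x}$ is needed, so I would first use tightness to get a uniform bound on a large ball and clip outside, with the error in the integrals controlled by $2\|\psi_i\|_\infty\sup_\mu\mu(\|x\|>R)$. Next, take $\widehat q_R\approx\min(\|x\|^2,R^2)$ with clipping. By \eqref{eq:ui-second-moment} we get $\sup_\mu|\int\widehat q_R\,d\mu-m_2(\mu)|$ small. Hence $\widehat E_R(\Kcal)$ lies in the neighborhood where $F$ is defined, and $\widehat E_R$ is uniformly close to $E$. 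Uniform continuity of $F$ on that compact set transfers this to the weights. Finally, approximate $\log$ of the extended $F$ (after mixing with the uniform vector to keep it away from the simplex boundary) by a network, and apply softmax to get $\widehat p$ uniformly close to $F$. Softmax is Lipschitz, so this works. Summing the three error contributions via the weight-perturbation bound gives $<\varepsilon$.

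The main obstacle is Step 2: finitely many statistics must control $W_2$ closeness on $\Kcal$ uniformly. I would handle it by the contradiction/compactness argument above, relying on the fact that weak convergence plus convergence of second moments is equivalent to $W_2$ convergence.
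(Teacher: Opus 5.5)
Your argument is correct. Steps 2 and 3 essentially coincide with the paper's proof: the finite-separation lemma by compactness and a countable separating family, a partition of unity on $E(\Kcal)$ with Dugundji extension, clipped feature networks controlled by tightness and uniform integrability, mixing with the uniform vector before taking logits, and the common-support bound $W_2^2\le D^2\|w-v\|_1/2$.

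The real difference is Step 1, the continuous finite-support construction (the paper's Theorem~\ref{thm:finite-support}). The paper never discretizes the output space. It takes laws $\mu_\ell$ whose features $E(\mu_\ell)$ center a finite cover of $E(\Kcal)$, and chooses finitely supported $\widehat\nu_\ell$ close to $\Gcal(\mu_\ell)$. It then pools their atoms and uses weights $\sum_\ell\alpha_\ell(E(\mu))\beta_{\ell j}$. The error is bounded by mixing couplings, $W_2^2(\Gcal(\mu),\sum_\ell\alpha_\ell\widehat\nu_\ell)\le\sum_\ell\alpha_\ell W_2^2(\Gcal(\mu),\widehat\nu_\ell)$.

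You instead quantize the output with a fixed grid and a continuous partition of unity, so $P(\mu)_j=\int\phi_j\,d\Gcal(\mu)$ is an explicit $W_2$-continuous ``soft histogram'' of the target law. The problem then becomes approximating a single continuous $\Delta_N$-valued map on $\Kcal$. The only transport estimates you need are one quantization bound and the common-support lemma. This is the mechanism of the paper's Proposition~\ref{prop:structured-shared-support}, extended to non-compactly supported outputs by your tail argument.

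Each route has its costs. Yours needs uniform integrability of second moments over $\Gcal(\Kcal)$ to fix one radius $R_y$; this is available since $\Gcal(\Kcal)$ is compact. Your atom count is of the order of a covering number of a ball in $\R^{d_y}$, so exponential in $d_y$. The paper's route needs only density of finitely supported laws at finitely many points, and its atom count is governed by the cover of $E(\Kcal)$ and the local support sizes.

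When writing Step 1 up, make the tail explicit. For example, take a cutoff $\chi$ equal to $1$ on $\overline B(0,R_y)$ and vanishing outside $\overline B(0,R_y+1)$. Use $\{\chi\phi_j\}$ with the grid covering the larger ball. Send the mass $\int(1-\chi)\,d\Gcal(\mu)$ to an atom at the origin, at cost at most $\int_{\|y\|>R_y}\|y\|^2\,d\Gcal(\mu)$.
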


\paragraph{Construction and proof outline.}
Compactness permits finitely many bounded tests and a second-moment
coordinate to distinguish input laws to any required $W_2$ tolerance
(Lemma~\ref{lem:finite-separation}). Uniform continuity of $\Gcal$ transfers
this tolerance to the outputs. Local finite-support output approximations
are collected on common atoms, and a partition of unity gives continuous
simplex weights. The formal continuous construction is
Theorem~\ref{thm:finite-support} in Appendix~\ref{app:finite-construction}.
Bounded tests are then approximated by clipped networks on large balls;
tightness controls the complement, and uniform integrability controls
truncation of the quadratic feature (Lemma~\ref{lem:feature-approx}).
Regularizing the weights, approximating their logits, and applying softmax
preserves nonnegativity and unit mass. Common-support stability converts
weight error to $W_2$ error (Lemma~\ref{lem:common-support}). The full proof
of Theorem~\ref{thm:neural-realization} is in
Appendix~\ref{app:neural-proof}.

The fixed atoms are an existence device shared by all input laws. Practical
models may use richer structured decoders, but their approximation properties
require separate justification.

\paragraph{Access through empirical ensembles.}
For an already realized neural operator, the population feature integrals can
be replaced by averages of independent query samples. Hoeffding concentration
and the output network's modulus of continuity give a high-probability bound
uniform over the queried law in $\Kcal$. This separates population
approximation error from empirical feature-estimation error. With a Lipschitz
weight network, the latter contributes an output $W_2$ bound of order
$N_{\mathrm{in}}^{-1/4}$ for fixed confidence and realized architecture. The formal bound
and its constants are in Proposition~\ref{prop:empirical},
Appendix~\ref{app:empirical-bound}. This is a query-sample guarantee, not a
training, optimization, or network-width rate. Boundedness includes the
truncated quadratic feature; the same concentration conclusion need not
hold for an untruncated second moment.

\subsection{Universal approximation of Hilbert-space laws}
\label{subsec:path-laws}

For process and random-field laws, finite-dimensionalization acts on each
path and therefore produces a coordinate law, rather than a single feature
vector representing the whole input distribution. Let $\Xcal$ and $\Ycal$
be separable Hilbert spaces. Fix increasing finite-dimensional subspaces
$X_n\subset\Xcal$ and $Y_m\subset\Ycal$ with dense unions, with
$d_n=\dim X_n$ and $q_m=\dim Y_m$. Orthonormal coordinates define analysis
maps $\mathcal E_\Xcal^n:\Xcal\to\R^{d_n}$ and
$\mathcal E_\Ycal^m:\Ycal\to\R^{q_m}$ and synthesis maps
$\mathcal D_\Xcal^n:\R^{d_n}\to\Xcal$ and
$\mathcal D_\Ycal^m:\R^{q_m}\to\Ycal$. Their compositions
\begin{equation*}
P_\Xcal^n=\mathcal D_\Xcal^n\circ\mathcal E_\Xcal^n,
\qquad P_\Ycal^m=\mathcal D_\Ycal^m\circ\mathcal E_\Ycal^m
\end{equation*}
are orthogonal projections. Their pushforwards approximate every
$W_2$-compact law family uniformly; the formal reconstruction and stability
results are Lemmas~\ref{lem:hilbert-autoencoding} and~\ref{lem:pushforward}
in Appendix~\ref{app:hilbert-proof}.

\begin{theorem}[Universal approximation on separable Hilbert-space laws]
\label{thm:path-space}
Assume the activation condition \textup{(A3)}. Let $\Xcal$ and $\Ycal$
be separable Hilbert spaces equipped with the
orthogonal coordinate systems above.  Let
$\Kcal\subset\Pcal_2(\Xcal)$ be $W_2^\Xcal$-compact, and let
\begin{equation*}
 \Gcal:\Ucal\longrightarrow\Pcal_2(\Ycal)
\end{equation*}
be $W_2$-continuous on an open neighborhood $\Ucal$ of $\Kcal$.  For every
$\varepsilon>0$, there exist $n,m\in\N$ and a finite-dimensional neural
distributional operator
$\widehat\Ncal_\theta:\Pcal_2(\R^{d_n})\to\Pcal_2(\R^{q_m})$ of the form in
Theorem~\ref{thm:neural-realization} such that
\begin{equation}
 \Ncal_{\theta,n,m}
 = (\mathcal D_\Ycal^m)_\#\circ\widehat\Ncal_\theta
 \circ(\mathcal E_\Xcal^n)_\#
 \label{eq:path-law-theorem-construction}
\end{equation}
satisfies
\begin{equation*}
 \sup_{\mu\in\Kcal}W_2^\Ycal(
 \Ncal_{\theta,n,m}(\mu),\Gcal(\mu))<\varepsilon.
\end{equation*}
\end{theorem}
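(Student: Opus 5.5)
The plan is a three-term triangle inequality in $W_2^\Ycal$ after fixing $\varepsilon>0$. For $\mu\in\Kcal$, write $\mu_n=(P_\Xcal^n)_\#\mu$. Then
\begin{align*}
W_2^\Ycal(\Ncal_{\theta,n,m}(\mu),\Gcal(\mu))
&\le W_2^\Ycal\bigl((\mathcal D_\Ycal^m)_\#\widehat\Ncal_\theta((\mathcal E_\Xcal^n)_\#\mu),(\mathcal D_\Ycal^m)_\#(\mathcal E_\Ycal^m)_\#\Gcal(\mu_n)\bigr)\\
&\quad+W_2^\Ycal\bigl((P_\Ycal^m)_\#\Gcal(\mu_n),(P_\Ycal^m)_\#\Gcal(\mu)\bigr)
+W_2^\Ycal\bigl((P_\Ycal^m)_\#\Gcal(\mu),\Gcal(\mu)\bigr).
\end{align*}
The three terms are, respectively, the finite-dimensional neural error, the input-projection error propagated through $\Gcal$, and the output reconstruction error. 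Two elementary facts from Lemmas~\ref{lem:hilbert-autoencoding} and~\ref{lem:pushforward} will be used throughout. First, pushforward by a $1$-Lipschitz map does not increase $W_2$. Second, the synthesis map $\mathcal D_\Ycal^m$ is an isometry onto $Y_m$, so the first term equals the corresponding $W_2$ distance in $\R^{q_m}$. Likewise $\mathcal E_\Xcal^n$ is an isometry from $X_n$ onto $\R^{d_n}$, so $W_2((\mathcal E_\Xcal^n)_\#\mu,(\mathcal E_\Xcal^n)_\#\nu)\le W_2(\mu,\nu)$.

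\emph{Choosing $n$.} The map $\mu\mapsto\mu_n$ converges to the identity uniformly on $\Kcal$, since
\[
W_2(\mu_n,\mu)^2\le\int\|x-P_\Xcal^nx\|^2\,d\mu .
\]
This tends to zero pointwise by dominated convergence. Because the bound is $1$-Lipschitz in $\mu$ (a coupling argument) and decreasing in $n$, it converges uniformly on the compact $\Kcal$ (Dini). Since $\Ucal$ is open and $\Kcal$ compact, there is $\rho>0$ such that the $\rho$-neighbourhood $\Kcal^\rho$ lies in $\Ucal$. A Lebesgue-number argument then shows that $\Gcal$ is uniformly continuous in the sense that, for some $\delta\le\rho$, $W_2(\mu,\nu)<\delta$ with $\mu\in\Kcal$ implies $W_2(\Gcal\mu,\Gcal\nu)<\varepsilon/3$. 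Choose $n$ so that $\sup_\Kcal W_2(\mu_n,\mu)<\delta$. Then $\mu_n\in\Ucal$ and the middle term is below $\varepsilon/3$. The set $\Kcal_n=\{\mu_n:\mu\in\Kcal\}$ is a continuous image of $\Kcal$ and hence compact.

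\emph{Choosing $m$.} The set $\Gcal(\Kcal)\cup\Gcal(\Kcal_n)$ is $W_2$-compact as a continuous image of compacts. The same Dini argument on $\Ycal$ gives $m$ with $\sup W_2((P_\Ycal^m)_\#\nu,\nu)<\varepsilon/3$ over this set, which controls the third term.

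\emph{Finite-dimensional step.} Set $\widetilde\Kcal=(\mathcal E_\Xcal^n)_\#\Kcal\subset\Pcal_2(\R^{d_n})$, which is compact. Define
\[
\widetilde\Gcal(\widetilde\mu)=(\mathcal E_\Ycal^m)_\#\Gcal\bigl((\mathcal D_\Xcal^n)_\#\widetilde\mu\bigr).
\]
It is well defined because $(\mathcal D_\Xcal^n)_\#(\mathcal E_\Xcal^n)_\#\mu=\mu_n\in\Ucal$. It is $W_2$-continuous as a composition of continuous maps, using that $\mathcal D_\Xcal^n$ is an isometry and $\mathcal E_\Ycal^m$ is $1$-Lipschitz. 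Thus (A1)--(A2) hold, and Theorem~\ref{thm:neural-realization} yields $\widehat\Ncal_\theta$ with $\sup_{\widetilde\Kcal}W_2(\widehat\Ncal_\theta,\widetilde\Gcal)<\varepsilon/3$. Since $\mathcal D_\Ycal^m$ is an isometry, this bounds the first term. The main obstacle is not this step but the careful uniform-continuity argument on the neighbourhood $\Ucal$. Continuity is only assumed on $\Ucal$, and the projected laws $\mu_n$ leave $\Kcal$, so the neighbourhood radius $\rho$ and the Lebesgue-number argument must be stated precisely. A secondary point is that $\widehat\Ncal_\theta$ is only needed on $\widetilde\Kcal$, even though its formula extends to all of $\Pcal_2(\R^{d_n})$.
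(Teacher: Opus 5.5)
Your proof is correct and follows the paper's argument. It uses the same three-term split (finite-dimensional neural error, input-projection error propagated through $\Gcal$ with projected laws kept inside $\Ucal$, and output-projection error), and it feeds the same induced operator $(\mathcal E_\Ycal^m)_\#\circ\Gcal\circ(\mathcal D_\Xcal^n)_\#$ on $(\mathcal E_\Xcal^n)_\#\Kcal$ to Theorem~\ref{thm:neural-realization}. The differences are minor. First, you take $(P_\Ycal^m)_\#\Gcal(\mu)$ rather than $\Gcal((P_\Xcal^n)_\#\mu)$ as the intermediate law, so contractivity of $P_\Ycal^m$ handles the middle term and $m$ need only be chosen for $\Gcal(\Kcal)$; your union with $\Gcal(\Kcal_n)$ is therefore unnecessary. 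Second, you use Dini and a Lebesgue-number argument where the paper uses an equi-Lipschitz pointwise-convergence argument and a sequential-compactness contradiction.
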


\paragraph{Proof outline and interpretation.}
Uniform projection accuracy first makes $(P_\Xcal^n)_\#\mu$ close to
$\mu$ over $\Kcal$. The open neighborhood ensures that projected input
laws remain in the domain of $\Gcal$, and continuity controls the induced
output perturbation. Uniform output projection reduces the target to a
finite-dimensional coordinate law. Theorem~\ref{thm:neural-realization}
approximates the resulting law operator, and synthesis is an isometry.
The triangle inequality thus separates input projection, finite-dimensional
neural approximation, and output reconstruction errors. The complete proof
is in Appendix~\ref{app:hilbert-proof}; further path-law construction details
are in Appendix~\ref{app:path-law-details}.

The result requires no eventwise coupling of input and output paths. Its
orthogonal coordinate systems are fixed independently of the input law;
a distribution-specific PCA or Karhunen--Lo\`eve basis, or a learned encoder,
requires additional stability and uniform-reconstruction analysis.

\section{Experiments}
\label{sec:experiments}

Theoretical results are typically established through general constructions
to ensure broad applicability, but such constructions are often not directly
implementable in practice. We therefore design concrete and computationally
feasible distributional neural operators for controlled finite-dimensional
Gaussian distributions, first-passage times of Ornstein--Uhlenbeck (OU)
processes, and nonlinear Duffing oscillators. Rather than mechanically
reproducing the theoretical
constructions, we follow their underlying principles and tailor each model to
the structure of its problem. The controlled Gaussian experiment acts on
finite-dimensional distributions, whereas the OU and Duffing tasks use
empirical laws of stochastic-process paths. Owing to space limitations, the main text focuses on
the OU and Duffing tasks; complete results for the controlled Gaussian task
are provided in Appendix~\ref{app:supplementary-experiments}, and the
corresponding theoretical extensions appear in
Appendix~\ref{app:structured-output-theory}.

\begin{figure}[t]
\centering
\includegraphics[width=\textwidth]{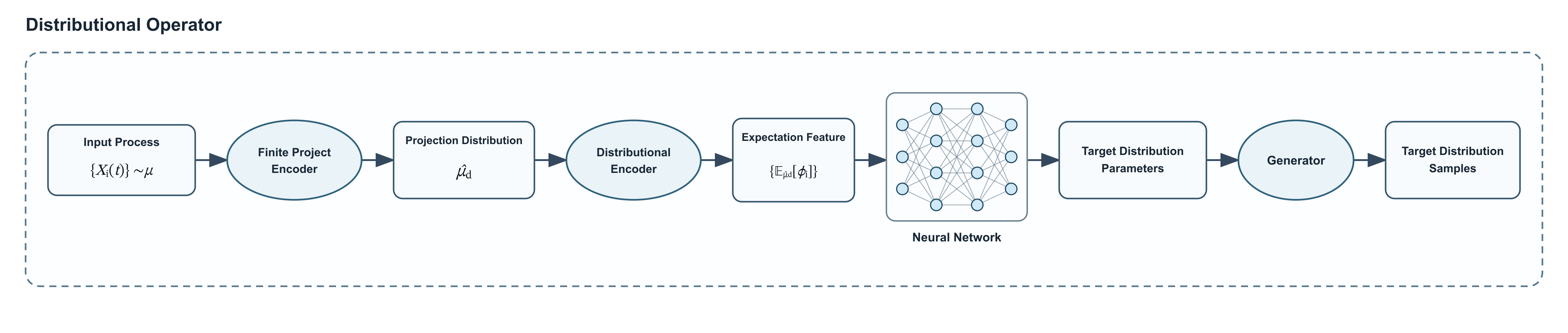}
\caption{A general example of the model pipeline. The model used for a
specific task may differ in certain details, but broadly follows this design
rationale. For task-specific model structures, see
Figures~\ref{fig:ou-distributional-operator-pipeline}
and~\ref{fig:duffing-process-operator-pipeline} in the appendix.}
\label{fig:distributional-operator-overview}
\end{figure}

To the best of our knowledge, these tasks have no established benchmark or
off-the-shelf baseline. We therefore compare against task-appropriate
fixed-feature and kernel-regression alternatives. Implementation
details for data generation and the comparison methods are provided in
Appendix~\ref{app:experiment-protocols}.

Unless otherwise specified, input and output ensembles are sampled
independently within each law pair, so the models cannot exploit
trajectory-level correspondence. Every experiment uses 1,000 training law
pairs and 200 test law pairs, which exhaust the 1,200-law dataset. The test
set is used only to evaluate final model performance. Neural experiments are
repeated with five random seeds. The default neural training configuration is
AdamW for 1,000 epochs with initial learning rate $10^{-3}$ and weight decay
$10^{-2}$. All computations are performed on one
NVIDIA GeForce RTX 4090 GPU with 24 GB of memory. Source code is available at
\url{https://anonymous.4open.science/r/distributional_operator-3E6C}.

\subsection{Experiment I: OU First-Passage Distribution}
\label{sec:ou-first-passage}

\subsubsection{Problem}

The OU first-passage problem has a standard interpretation as a
leaky integrate-and-fire (LIF) neuron. In this model, the membrane potential
evolves until it first reaches a firing threshold, at which point a spike is
recorded and the potential is reset~\citep{burkitt2006lif}. Under the classical
diffusion approximation with Gaussian white-noise input, the subthreshold
membrane potential is an OU process. Consequently, the elapsed time from a
reset to the next threshold crossing is one inter-spike interval (ISI), whose
law is the OU first-passage-time law~\citep{ricciardi1988ou,alili2005ou}.
Specifically, the membrane potential follows
\begin{equation}
dV_t=\left(-\frac{V_t}{\gamma}+m\right)dt+\sqrt q\,dW_t,
\qquad V(0)=v_0,
\label{eq:ou-process}
\end{equation}
with threshold $V_{\mathrm{th}}$ and first-passage time
\begin{equation}
T=\inf\{t\geq0:V(0)=v_0, V_t\geq V_{\mathrm{th}}\}.
\end{equation}
The model does not observe $(m,q)$ directly. Instead, the input law is that of
the drifted Brownian process
\begin{equation}
dI_t=m\,dt+\sqrt q\,dW_t,
\qquad I(0)=0,
\label{eq:ou-input-process}
\end{equation}
with the same $(m,q)$. The target distributional operator is
\begin{equation*}
\Gcal:\Law(I)\longmapsto\Law(T),
\end{equation*}
where the first-passage law is represented by 48 finite-time categories and
one censoring category. The dataset contains 1,200 input--output law pairs.
Each input law is represented by $N_{\mathrm{in}}$ paths on 256 observation
times, and each target law by $N_{\mathrm{out}}$ independently simulated
reset trials. The ensembles share the same law parameters but have no
samplewise pairing. The parameter family, simulation scheme, censoring rule,
binning, and split are detailed in Appendix~\ref{app:ou-data}.

\subsubsection{Model Instantiation}

Following the process pipeline, we use a training-fitted PCA encoder
$\mathcal E_I$ to preprocess the paths, selecting the truncation dimension
$d_X$ so that the cumulative explained-variance ratio exceeds $0.99$. For the
$i$th input ensemble, the Distributional Operator computes
\begin{align*}
\alpha_{ij}&=\mathcal E_I(I_{ij})\in\R^{d_X},\\
f_{ij}&=\phi_{\mathrm{path}}(\alpha_{ij}),\\
h_i&=\frac1{N_{\mathrm{in}}}\sum_{j=1}^{N_{\mathrm{in}}}
\phi_{\mathrm{set}}(f_{ij}),\\
\ell_i&=\rho_\theta(h_i),\qquad
\widehat p_i=\softmax(\ell_i)\in\Delta_{49}.
\end{align*}
The shared path and set encoders followed by mean aggregation make the model
invariant to the ordering of the input paths. The first 48 entries of
$\widehat p_i$
parameterize the finite first-passage intervals and the last entry represents
censoring. Training minimizes the count-weighted categorical negative
log-likelihood. PCA fitting, layer widths, parameter counts, optimization,
checkpointing, and complete baseline configurations are reported in
Appendix~\ref{app:ou-baselines}.

\subsubsection{Evaluation Metrics and Results}

Negative log-likelihood is the categorical training objective and measures
predictive fit. Hellinger distance and KL divergence compare the full
49-category distributions from complementary symmetric and directed
perspectives. Finite-bin $W_2$ measures displacement of probability mass over
passage time after removing and renormalizing the censoring category, while
tail error measures the absolute error in the probability of no crossing.
Lower is better for every metric; formal definitions are in
Appendix~\ref{app:ou-metrics}.

\begin{table}[htbp]
\centering
\small
\setlength{\tabcolsep}{3.5pt}
\resizebox{\textwidth}{!}{%
\begin{tabular}{lrrrrr}
\toprule
Model & NLL & Hellinger & KL divergence & $W_{2,\mathrm{fin}}$ & Tail error\\
\midrule
Distributional Operator & $3.11176 \pm 0.00085$ & $0.18055 \pm 0.00048$ & $0.10636 \pm 0.00085$ & $0.16634 \pm 0.00286$ & $0.00373 \pm 0.00018$\\
Fixed-Feature MLP & $3.11680 \pm 0.00011$ & $0.18592 \pm 0.00013$ & $0.11140 \pm 0.00011$ & $0.20957 \pm 0.00026$ & $0.00399 \pm 0.00003$\\
Kernel regression & $3.15241$ & $0.21616$ & $0.14701$ & $0.34567$ & $0.00639$\\
\bottomrule
\end{tabular}%
}
\caption{Test results for input process laws mapped to OU first-passage-time
laws. Entries are the mean $\pm$ standard deviation over five runs with random
initializations. Kernel regression is deterministic and therefore has
zero standard deviation.  Lower is better for every metric.}
\label{tab:ou}
\end{table}

For a qualitative theoretical reference, we evaluate the OU first-passage-time
density using its Bessel-bridge representation~\citep{alili2005ou}. This
reference density is independent of the empirical target histograms used for
training and lets us compare the shapes predicted by the learned and
kernel-based models against a continuous first-passage-time calculation.

\begin{figure}[t]
\centering
\includegraphics[width=\textwidth]{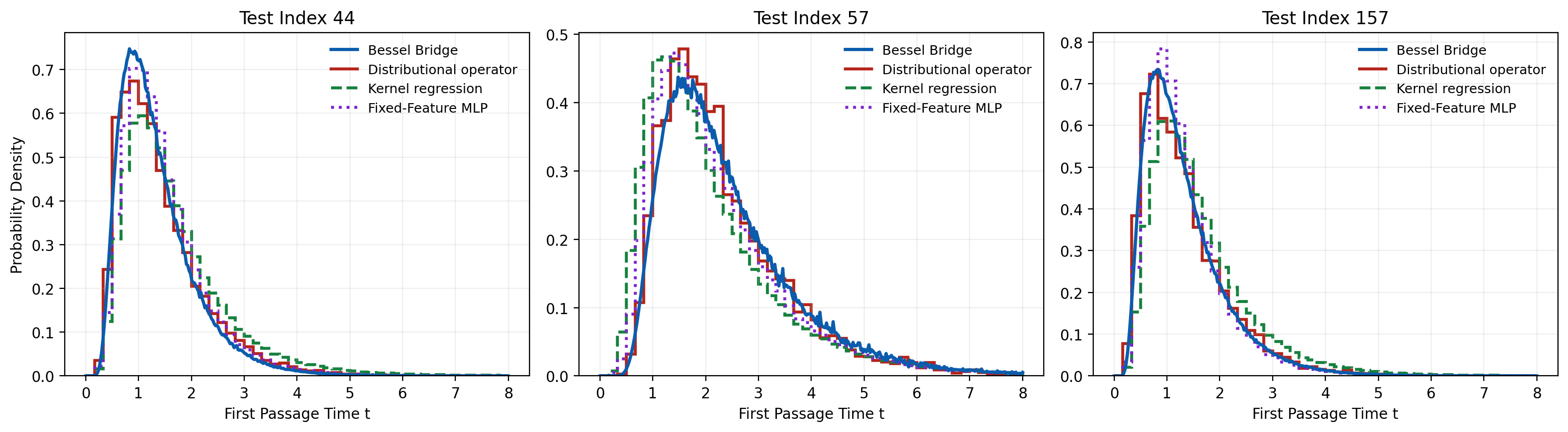}
\caption{OU first-passage-time probability densities for three representative
test laws. The continuous blue curve is the theoretical reference calculated
from the Bessel-bridge representation. The step curves show the densities
predicted by the Distributional Operator, kernel regression, and the
Fixed-Feature MLP on the 48 finite time bins. Agreement with the reference
curve compares how well the models recover the location, height, and tail of
the first-passage-time density.}
\label{fig:ou-out-examples}
\end{figure}

The Distributional Operator gives the lowest mean NLL, Hellinger
distance, KL divergence, finite-bin $W_2$, and tail error. Kernel regression
is worse on every metric, indicating that local
averaging between training laws does not capture the map as accurately. The
density comparisons in Figure~\ref{fig:ou-out-examples} show that the learned
operator recovers the location, peak, and tail of the theoretical
first-passage-time density across representative test laws.

\subsection{Experiment II: Duffing Uncertainty Propagation}
\label{sec:duffing}

\subsubsection{Problem}

We study a distribution-to-distribution map induced by the fixed hardening
Duffing oscillator
\begin{equation*}
\ddot Y(t)+0.2\dot Y(t)+Y(t)+Y(t)^3=X(t),
\qquad Y(0)=\dot Y(0)=0,
\end{equation*}
on $t\in[0,20]$. Given a probability law $\mu$ of stochastic forcing paths,
the target is the response law under the Duffing solution operator
$\mathcal S$, namely
\begin{equation*}
\Gcal:\mu\longmapsto\mathcal S_{\#}\mu.
\end{equation*}
This is a distributional uncertainty-propagation problem: the model must
characterize an ensemble of nonlinear responses rather than predict the
response to one prescribed forcing trajectory. Such laws arise in nonlinear
vibration and energy-harvesting models under stochastic or broadband
excitation~\citep{daqaq2010duffing,jia2020energy}. The dataset contains 1,200
input--output law pairs. Each law is represented by $N_{\mathrm{in}}$ forcing
paths and $N_{\mathrm{out}}$ independently sampled response paths on 256
observation times. The ensembles share the same forcing-law parameters but
have no samplewise pairing. The forcing family, numerical solver, and
measure-level construction are detailed in Appendix~\ref{app:duffing-data}.

\subsubsection{Model Instantiation}

Following the process pipeline, we use training-fitted PCA encoders
$\mathcal E_X$ and $\mathcal E_Y$ to preprocess the forcing and response
paths. Their truncation dimensions $d_X$ and $d_Y$ are selected so that each
cumulative explained-variance ratio exceeds $0.99$. For the $i$th input
ensemble,
\begin{align*}
\alpha_{ij}&=\mathcal E_X(X_{ij})\in\R^{d_X},\\
c_i&=\rho_\theta\left(
\frac1{N_{\mathrm{in}}}\sum_{j=1}^{N_{\mathrm{in}}}
\phi_\theta(\alpha_{ij})\right),\\
z_{ir}&\sim\mathcal N(0,I_{d_Y}),\qquad
\beta_{ir}=F_\theta(z_{ir};c_i),\\
\widehat Y_{ir}&=\mathcal D_Y(\beta_{ir}),\qquad
r=1,\ldots,N_{\mathrm{out}},
\end{align*}
where $F_\theta$ is a conditional RealNVP and $\mathcal D_Y$ is the inverse
output-PCA reconstruction map~\citep{dinh2017realnvp}. Thus the empirical
forcing-process law is projected, encoded by expectation features, combined
with independent latent draws, and decoded into samples from the predicted
response-process law.

The conditional generator uses six affine coupling transformations. For a
masked partition $h=(h_{\mathcal I},h_{\mathcal J})$, each transformation has
the form
\begin{align*}
h'_{\mathcal I}&=h_{\mathcal I},\\
h'_{\mathcal J}&=h_{\mathcal J}\odot
\exp(\widetilde s_\ell(h_{\mathcal I},c_i))
+t_\ell(h_{\mathcal I},c_i),
\qquad \widetilde s_\ell=2\tanh(s_\ell/2).
\end{align*}
Training minimizes the debiased Sinkhorn divergence between
$N_{\mathrm{out}}$ generated and $N_{\mathrm{out}}$ target response paths.
Layer widths, parameter counts, PCA
fitting, coupling masks, optimization schedules, and complete baseline
configurations are reported in Appendices~\ref{app:duffing-model}
and~\ref{app:duffing-baselines}.

\subsubsection{Evaluation Metrics and Results}

Sinkhorn divergence is the training objective and
measures regularized transport discrepancy between response ensembles. RBF
MMD compares kernel mean embeddings and is sensitive to broad distributional
differences. Sliced $W_2$ averages one-dimensional transport discrepancies
over random path-space projections, while energy distance compares within-
and between-ensemble Euclidean distances. Formal definitions and evaluation settings
are in Appendix~\ref{app:duffing-metrics}. We compared the performance of distributional operator (proposed model)
 with that of two baseline models on the test data in Table~\ref{tab:duffing}.Lower values indicate closer
agreement for all four metrics. To mitigate the effects of randomness, the evaluation results 
 were computed using five different random seeds and are reported as the mean ± standard deviation of the returns, 
 except for the kernel regression model, which is deterministic and therefore reported as a single value. 
 To intuitively illustrate the predictive performance, we present \ref{fig:duffing-results}the results of the models on three randomly selected test datasets.

\begin{table}[htbp]
\centering
\scriptsize
\begin{tabular}{lcccc}
\toprule
Model & Sinkhorn & MMD & $\SW_2$ & Energy\\
\midrule
Distributional Operator 
& $8.2050\pm0.2786$
& $0.0760\pm0.0010$
& $0.1639\pm0.0036$
& $0.3407\pm0.0104$\\
Fixed-Feature MLP
& $9.9426\pm0.6795$
& $0.0777\pm0.0017$
& $0.1718\pm0.0036$
& $0.3890\pm0.0173$\\
Kernel regression
& $18.6291$
& $0.1255$
& $0.2498$
& $0.8650$\\
\bottomrule
\end{tabular}
\caption{Test performance on the Duffing dataset. Neural-model entries are the
mean $\pm$ sample standard deviation on the 200-law test set. Kernel regression
is a single fit with a fixed bandwidth. Sinkhorn denotes the Sinkhorn divergence, MMD
the maximum mean discrepancy, $\SW_2$ the sliced 2-Wasserstein distance, and
Energy the energy distance; lower values indicate better agreement.}
\label{tab:duffing}
\end{table}

\begin{figure}[t]
\centering
\includegraphics[width=0.95\textwidth]{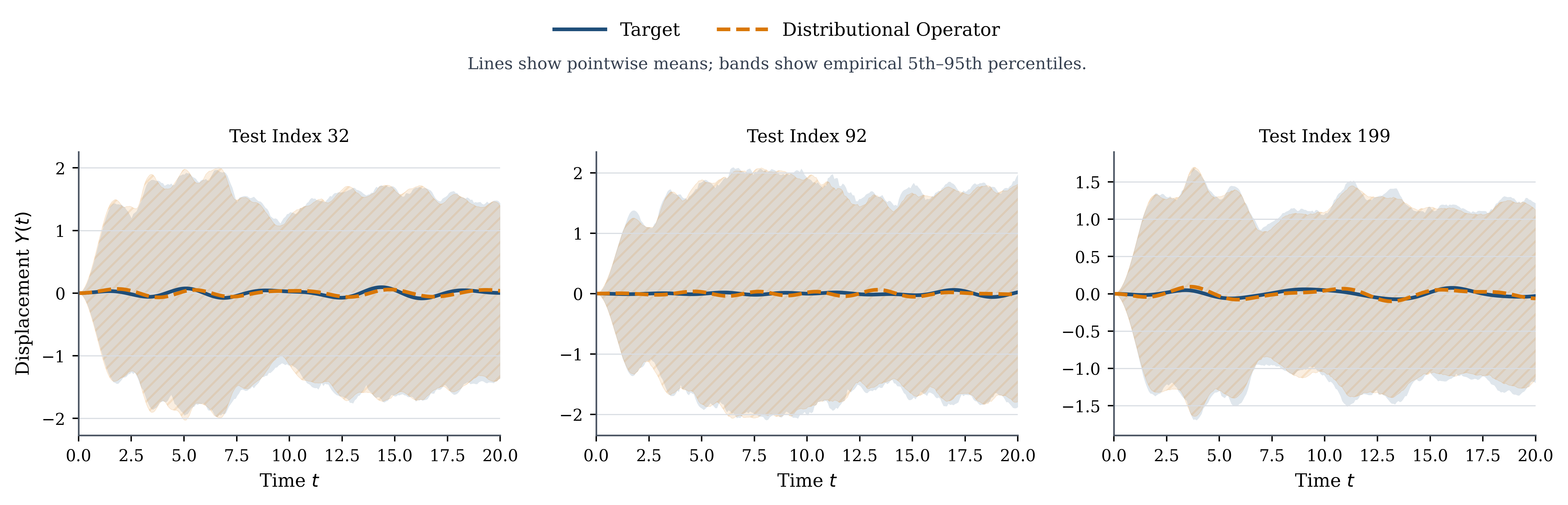}
\caption{Conditional distribution prediction for the Duffing oscillator. 
Target and predicted response distributions are shown for three randomly chosen test forcing measures. 
Solid blue lines and shaded regions denote the reference mean and 5th--95th percentile envelope, 
while dashed orange lines and hatched regions show the corresponding model predictions. 
Agreement in both statistics demonstrates the model's accuracy in approximating the nonlinear pushforward 
from forcing to response distributions.}
\label{fig:duffing-results}
\end{figure}

\FloatBarrier

\section{Conclusion}

We developed a constructive approximation theory for continuous
distribution-to-distribution maps in the 2-Wasserstein metric. On
$W_2$-compact input families, finitely many law statistics, a neural
simplex-valued map, and a shared atomic support suffice for uniform
approximation by valid output measures. We also quantified the error incurred
when those statistics are estimated from a finite unordered ensemble. By
combining finite-rank analysis and synthesis maps with the finite-dimensional
construction, the same principle extends to probability laws on separable
Hilbert spaces and therefore to distributional operators acting on stochastic
processes.

The OU first-passage and Duffing uncertainty-propagation experiments show that
this law-level formulation is computationally meaningful and learnable from
unpaired ensembles. The task-adapted categorical and conditional-flow models
outperform the fixed-feature and kernel-regression comparators on the reported
metrics. These architectures instantiate the framework's central principles
but do not reproduce the proof's finite-support construction, and the reported
comparisons are not evidence of architectural optimality: the baseline set is
necessarily limited, stronger designs may exist, and the remaining predictive
errors leave substantial room for improvement. Future work should tighten the
connection between constructive theory and practical architectures, weaken the
compactness and neighborhood assumptions, obtain sharper statistical and
optimization rates, and evaluate the framework on broader and more demanding
stochastic systems.

\bibliographystyle{iclr2026_conference}
\bibliography{references}

\section*{AI Use Disclosure}
Generative AI tools were used to polish the writing and improve readability;
to search for and retrieve relevant literature and technical information; to
prepare initial drafts of portions of the manuscript; and to assist in
generating the synthetic datasets used in the experiments. The authors
independently checked the retrieved sources and reviewed, revised, and
validated all AI-assisted text, citations, data-generation procedures, and
resulting datasets. The authors take responsibility for the final content of
this work, including all text, claims, and artifacts produced with the aid of
generative AI.

\appendix
\section{Table of Notations}
\label{app:notations}

\begin{center}
\centering
\small
\renewcommand{\arraystretch}{1.12}
\begin{tabular}{p{0.18\textwidth}p{0.75\textwidth}}
\toprule
Notation & Meaning \\
\midrule
$\R$, $\N$ & Real numbers and positive integers. \\
$\Xcal$, $\Ycal$ & Input and output state spaces; in the process setting these are separable Hilbert spaces. \\
$\Pcal_2(\Xcal)$ & Borel probability measures on $\Xcal$ with finite second moment. \\
$W_2$, $W_2^{\Xcal}$ & Quadratic Wasserstein distance, with the superscript indicating the underlying state space when needed. \\
$\Kcal$ & A $W_2$-compact family of admissible input laws. \\
$\mu$, $\nu$ & Generic input and output probability laws. \\
$\Gcal$ & Target distributional operator, mapping an input law to an output law. \\
$\Ncal$, $\widehat\Ncal_\theta$ & Continuous finite-support approximation and its neural realization. \\
$\delta_x$ & Dirac probability measure concentrated at $x$. \\
$F_\#$, $\pushforward{F}{\mu}$ & Pushforward of a measure through a measurable map $F$. \\
$E(\mu)$, $\widehat E(\mu)$ & Exact finite-dimensional law encoder and its neural approximation. \\
$\psi_i$, $\widehat\psi_i$ & Bounded test functions and their neural approximations. \\
$m_2(\mu)$ & Second moment $\int \lVert x\rVert_2^2\,d\mu(x)$. \\
$a_j$, $p_j$ & Shared output-support points and their simplex-valued weights. \\
$\Delta_N$ & Probability simplex $\{q\in[0,1]^N:\sum_{j=1}^N q_j=1\}$. \\
$M$, $N$ & Number of input features and output support points in the theoretical construction. \\
$N_{\mathrm{in}}$ & Number of samples representing an input distribution in a theoretical query or experiment. \\
$N_{\mathrm{out}}$ & Number of samples representing an output distribution in an experiment. \\
$N_{\mathrm{batch}}$ & Number of law-level examples in an optimization mini-batch. \\
$\mathcal E_\Xcal^n$, $\mathcal D_\Xcal^n$ & Finite-rank encoder and decoder for the Hilbert-space state $\Xcal$. \\
$P_\Xcal^n$ & Orthogonal projection $\mathcal D_\Xcal^n\circ\mathcal E_\Xcal^n$. \\
$\E$, $\Pr$ & Expectation and probability. \\
$\supp$, $\Lip$, $\diam$ & Support, Lipschitz constant, and diameter. \\
$\varepsilon$, $\delta$ & Approximation tolerance and confidence level. \\
\bottomrule
\end{tabular}
\end{center}

\section{Definitions and Theory Details}
\label{app:proofs}

We collect the definitions, supporting results, and proofs for the main-text
universal approximation theorems.

\subsection{Continuous finite-support construction}
\label{app:finite-construction}

\begin{lemma}[Finite separation in $W_2$]
\label{lem:finite-separation}
Let $\Kcal\subset\Pcal_2(\R^d)$ be $W_2$-compact and let $\eta>0$. Then there
are $M\in\N$, functions $\psi_1,\ldots,\psi_M\in C_b(\R^d)$, and $r>0$ such
that, for all $\mu,\nu\in\Kcal$,
\begin{equation*}
 \begin{aligned}
 &\|E_{\rm wk}(\mu)-E_{\rm wk}(\nu)\|_2<r,\\
 &|m_2(\mu)-m_2(\nu)|<r
 \quad\Longrightarrow\quad W_2(\mu,\nu)<\eta.
 \end{aligned}
\end{equation*}
\end{lemma}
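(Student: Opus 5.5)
We argue by contradiction combined with a compactness construction of the tests. The main point is that weak convergence plus convergence of second moments is equivalent to $W_2$ convergence. So the encoder $E=(E_{\rm wk},m_2)$ separates laws in $W_2$ once the bounded tests separate laws weakly. Compactness then lets finitely many tests and a single threshold $r$ suffice.

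\emph{Choice of tests.} Fix a countable family $\{\phi_k\}_{k\ge1}\subset C_b(\R^d)$ that is convergence-determining for weak convergence on $\Pcal(\R^d)$. For instance, take the functions $x\mapsto \max(0,1-\rho\,\dist(x,B))$ with $B$ ranging over balls with rational centers and radii and $\rho\in\N$; alternatively, use real and imaginary parts of $e^{i\langle\xi,x\rangle}$ for rational $\xi$. Normalize so that $\|\phi_k\|_\infty\le1$. Then define the metric
\begin{equation*}
 \rho_{\rm wk}(\mu,\nu)=\sum_k 2^{-k}\Bigl|\int\phi_k\,d\mu-\int\phi_k\,d\nu\Bigr|.
\end{equation*}
On a tight family, $\rho_{\rm wk}(\mu_n,\mu)\to0$ iff $\mu_n\to\mu$ weakly. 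The candidate tests are $\psi_i=\phi_i$ for $i\le M$, with $M$ to be determined.

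\emph{Contradiction argument.} Suppose that for every $M$ and every $r>0$ the implication fails. Taking $M=n$ and $r=1/n$ gives $\mu_n,\nu_n\in\Kcal$ such that
\begin{itemize}
\item $|\int\phi_i\,d\mu_n-\int\phi_i\,d\nu_n|<1/n$ for all $i\le n$,
\item $|m_2(\mu_n)-m_2(\nu_n)|<1/n$,
\item $W_2(\mu_n,\nu_n)\ge\eta$.
\end{itemize}
By $W_2$-compactness of $\Kcal$ (A1), pass to a subsequence with $\mu_n\to\mu$ and $\nu_n\to\nu$ in $W_2$ for some $\mu,\nu\in\Kcal$. $W_2$ convergence implies weak convergence and $m_2$ convergence. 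Hence, for each fixed $k$, $\int\phi_k\,d\mu=\int\phi_k\,d\nu$, and therefore $\mu=\nu$ because the family is measure-determining. Then $W_2(\mu_n,\nu_n)\le W_2(\mu_n,\mu)+W_2(\nu,\nu_n)\to0$, contradicting $W_2(\mu_n,\nu_n)\ge\eta$. So some $M$ and $r$ work.

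In this route the second-moment coordinate is not even needed, since compactness in $W_2$ already does the upgrading. The statement includes it because later constructions use $E$ with that coordinate. A more constructive variant would use it genuinely: first take finite tests separating weakly, then combine weak closeness with $|m_2(\mu)-m_2(\nu)|<r$ and the uniform integrability \eqref{eq:ui-second-moment} to bound $W_2$ through a truncation argument.

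\emph{Main obstacle.} The delicate step is ensuring the countable test family determines measures with bounded continuous functions, and that finitely many of them give a uniform threshold. The contradiction argument handles uniformity, but it requires $\Kcal$ to be closed, which compactness gives. It also requires that $\mu,\nu$ agreeing on all $\phi_k$ forces $\mu=\nu$. For this I would rely on the standard fact that indicators of open balls are monotone limits of the Lipschitz bumps above, and that such balls generate the Borel $\sigma$-algebra as a $\pi$-system after taking finite intersections. Alternatively, Fourier characters at rational frequencies suffice by continuity of characteristic functions. The constructive variant is less clean, so I would present the compactness argument.
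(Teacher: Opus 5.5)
Your proof is correct and follows the paper's argument: a countable convergence-determining family of bounded tests, a contradiction sequence with $M=n$ and $r=1/n$, extraction of $W_2$-convergent subsequences by compactness, and identification of the two limits. Your remark that the $m_2$ coordinate is not needed is also right, because the paper's closing appeal to second moments is redundant once both subsequences converge in $W_2$ to the same limit.

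One small caveat concerns your justification that the ball bumps separate measures. Agreement on balls does not by itself give agreement on their finite intersections, and those bumps decrease to indicators of closed rather than open balls, so the $\pi$-system argument you sketch does not go through as written. Certify separation instead with your Fourier-character alternative, or with a countable sup-norm-dense subset of $C_c(\R^d)$.
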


\begin{proof}
Choose a countable convergence-determining family in $C_b(\R^d)$.  If no
finite subfamily worked, compactness would give sequences $\mu_k,\nu_k$ with
all first $k$ test integrals and their second moments approaching one another,
but with $W_2(\mu_k,\nu_k)\geq\eta$.  Passing to $W_2$-convergent subsequences
gives limits with identical bounded-test integrals and hence the same weak
limit.  Convergence of the second moments then characterizes $W_2$
convergence, a contradiction.
\end{proof}

\begin{theorem}[Continuous finite-support approximation]
\label{thm:finite-support}
Let $\Kcal\subset\Pcal_2(\R^{d_x})$ be $W_2$-compact and let
$\Gcal:\Kcal\to\Pcal_2(\R^{d_y})$ be $W_2$-continuous.  For every
$\varepsilon>0$, there exist $M,N\in\N$, functions
$\psi_1,\ldots,\psi_M\in C_b(\R^{d_x})$, fixed points
$a_1,\ldots,a_N\in\R^{d_y}$, and a continuous map
$p=(p_1,\ldots,p_N):\R^{M+1}\to\Delta_N$, where
$\Delta_N=\{q\in[0,1]^N:\sum_{j=1}^Nq_j=1\}$, such that
\begin{equation}
 \Ncal_0(\mu)=\sum_{j=1}^N p_j(E(\mu))\delta_{a_j}
 \label{eq:continuous-finite-support}
\end{equation}
satisfies
\begin{equation*}
 \sup_{\mu\in\Kcal}W_2(\Ncal_0(\mu),\Gcal(\mu))<\varepsilon,
\end{equation*}
with $E$ defined in \eqref{eq:test-encoder}.
\end{theorem}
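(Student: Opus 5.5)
We prove Theorem~\ref{thm:finite-support} by transferring input tolerance to output tolerance, building local finite-support approximations at finitely many centers, and gluing them with a continuous partition of unity defined on the feature space $\R^{M+1}$. The Tietze--Dugundji step then extends the resulting simplex-valued map from $E(\Kcal)$ to all of $\R^{M+1}$.

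\textbf{Step 1: tolerances.} By (A2), $\Gcal$ is uniformly continuous on $\Kcal$. Choose $\eta>0$ such that $W_2(\mu,\nu)<\eta$ implies $W_2(\Gcal(\mu),\Gcal(\nu))<\varepsilon/3$. Apply Lemma~\ref{lem:finite-separation} with this $\eta$ to obtain $\psi_1,\ldots,\psi_M$ and $r>0$. Both $E_{\rm wk}$ and $m_2$ are $W_2$-continuous on $\Kcal$, since $W_2$ convergence implies weak convergence and convergence of second moments. Hence $E$ is continuous and $E(\Kcal)\subset\R^{M+1}$ is compact. Cover $E(\Kcal)$ by finitely many open balls $B_k=B(E(\mu_k),r/2)$, $k=1,\ldots,K$, with $\mu_k\in\Kcal$. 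If $E(\mu)\in B_k$, then each block of $E(\mu)-E(\mu_k)$ has norm less than $r$. The lemma then gives $W_2(\mu,\mu_k)<\eta$, and therefore $W_2(\Gcal(\mu),\Gcal(\mu_k))<\varepsilon/3$.

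\textbf{Step 2: local atoms.} Each $\Gcal(\mu_k)\in\Pcal_2(\R^{d_y})$ can be approximated within $\varepsilon/3$ in $W_2$ by a finitely supported measure $\nu_k$. To do this, truncate to a large ball, using the finite second moment to control the tail, and quantize on a fine grid. Let $\{a_1,\ldots,a_N\}$ be the union of the supports of all $\nu_k$, and write $\nu_k=\sum_j c_{kj}\delta_{a_j}$ with $c_k\in\Delta_N$.

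\textbf{Step 3: gluing.} Let $\{\chi_k\}$ be a continuous partition of unity on $U=\bigcup_k B_k$ subordinate to $\{B_k\}$, for instance $\chi_k=\dist(\cdot,B_k^c)/\sum_l\dist(\cdot,B_l^c)$. Define $p(z)=\sum_k\chi_k(z)c_k\in\Delta_N$ on $U$. Since $\Delta_N$ is a convex closed set, Dugundji's extension theorem extends $p$ from the closed set $E(\Kcal)$ to a continuous map $\R^{M+1}\to\Delta_N$. Concretely, one can also multiply by a cutoff and add $(1-\text{cutoff})$ times a fixed vertex.

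The key estimate uses joint convexity of $W_2^2$ under mixtures. Coupling component-wise gives
\[
W_2^2\Bigl(\sum_k\chi_k\nu_k,\ \sum_k\chi_k\Gcal(\mu)\Bigr)\le\sum_k\chi_k W_2^2(\nu_k,\Gcal(\mu)).
\]
Only indices with $E(\mu)\in B_k$ contribute. For those indices, $W_2(\nu_k,\Gcal(\mu))\le W_2(\nu_k,\Gcal(\mu_k))+W_2(\Gcal(\mu_k),\Gcal(\mu))<2\varepsilon/3$. This yields $W_2(\Ncal_0(\mu),\Gcal(\mu))<2\varepsilon/3<\varepsilon$ uniformly over $\Kcal$. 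The supremum is attained by compactness, or strictness is otherwise secured by the slack in the constants.

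\textbf{Main obstacle.} The main point requiring care is the convexity/mixture step together with the requirement that all local approximations share common atoms. Taking the union of supports resolves the common-atom issue, and the mixture coupling argument resolves the convexity issue. The remaining steps are routine compactness arguments.
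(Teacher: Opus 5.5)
Your proposal is correct and takes essentially the same route as the paper's proof: uniform continuity plus Lemma~\ref{lem:finite-separation} to transfer the tolerance, a finite ball cover of the compact set $E(\Kcal)$ with a distance-based partition of unity, local finitely supported approximations placed on a common atom list, the mixture-coupling bound $W_2^2\le\sum_k\chi_k W_2^2(\nu_k,\Gcal(\mu))$, and a Dugundji extension of the simplex-valued map to $\R^{M+1}$. The differences are only cosmetic: you use an $\varepsilon/3$ budget and an explicit ball radius $r/2$ where the paper uses $\varepsilon/4$, and you add an optional concrete cutoff-based alternative to the Dugundji step.
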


\begin{proof}[Proof of Theorem~\ref{thm:finite-support}]
By uniform continuity of $\Gcal$ and Lemma~\ref{lem:finite-separation}, choose a
finite feature map $E$ and radius $r$ so that nearby features produce outputs
within $\varepsilon/4$. Cover the compact
set $E(\Kcal)$ by finitely many balls centered at $E(\mu_\ell)$.  A continuous
partition of unity $\alpha_\ell$ subordinate to this cover may be obtained
from normalized tent functions.

Finitely supported laws are dense in $\Pcal_2(\R^{d_y})$, so choose
$\widehat\nu_\ell$ with
$W_2(\widehat\nu_\ell,\Gcal(\mu_\ell))<\varepsilon/4$.  Collect the finitely
many atoms of all $\widehat\nu_\ell$ into a common list
$a_1,\ldots,a_N$ and write their weights as $\beta_{\ell j}$.  Define on
$E(\Kcal)$
\begin{equation*}
 q_j(z)=\sum_\ell\alpha_\ell(z)\beta_{\ell j},\qquad
 \Ncal_0(\mu)=\sum_jq_j(E(\mu))\delta_{a_j}.
\end{equation*}
Mixing couplings for the active $\ell$ gives
\begin{equation*}
 W_2^2(\Gcal(\mu),\Ncal_0(\mu))
 \leq\sum_\ell\alpha_\ell(E(\mu))
 W_2^2(\Gcal(\mu),\widehat\nu_\ell)<\varepsilon^2.
\end{equation*}
Finally extend the continuous map $q:E(\Kcal)\to\Delta_N$ to all of
$\R^{M+1}$ using the Dugundji extension theorem (the simplex is convex).
\end{proof}

\subsection{Neural realization and proof of Theorem~\ref{thm:neural-realization}}
\label{app:neural-proof}

\begin{lemma}[Uniform neural approximation of the features]
\label{lem:feature-approx}
Let $\mathfrak K\subset\Pcal_2(\R^d)$ be tight and satisfy
\eqref{eq:ui-second-moment}.  For finitely many $\psi_i\in C_b(\R^d)$ and
every $\rho>0$, bounded neural networks $\widehat\psi_i$ and a bounded neural
network $\widehat q_R$ exist such that
\begin{align*}
 \sup_{\mu\in\mathfrak K}\left|\int(\widehat\psi_i-\psi_i)d\mu\right|&<\rho,
 &
 \sup_{\mu\in\mathfrak K}\left|\int\widehat q_Rd\mu-m_2(\mu)\right|&<\rho.
\end{align*}
\end{lemma}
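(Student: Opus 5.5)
The plan is to combine tightness with the classical universal approximation theorem (A3) on a large ball, and to use boundedness of the approximants to control everything outside that ball. Throughout, set $B_R=\{\|x\|_2\le R\}$ and $\tau(R)=\sup_{\mu\in\mathfrak K}\mu(\R^d\setminus B_R)$. Tightness gives $\tau(R)\to0$, and \eqref{eq:ui-second-moment} gives
\begin{equation*}
u(R)=\sup_{\mu\in\mathfrak K}\int_{\{\|x\|_2>R\}}\|x\|_2^2\,d\mu\to0 .
\end{equation*}

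\textbf{Bounded tests.} Fix $\psi_i$ with $\|\psi_i\|_\infty\le C$ and choose $R$ so that $3C\,\tau(R)<\rho/2$. By (A3), there is a network $g_i$ with $\sup_{B_R}|g_i-\psi_i|<\rho/2$. Let $\widehat\psi_i$ be $g_i$ clipped to $[-C,C]$. Clipping onto an interval containing the range of $\psi_i$ is a contraction toward it, so the uniform bound on $B_R$ survives, and $\|\widehat\psi_i\|_\infty\le C$ holds globally. Then for every $\mu\in\mathfrak K$,
\begin{equation*}
\left|\int(\widehat\psi_i-\psi_i)\,d\mu\right|\le\frac{\rho}{2}+2C\,\tau(R)<\rho .
\end{equation*}
Since only finitely many $i$ are involved, one common $R$ can be used for all of them.

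\textbf{Quadratic feature.} Let $q_R(x)=\min(\|x\|_2^2,R^2)$, which is continuous and bounded by $R^2$. Its error against $q$ is
\begin{equation*}
\left|\int q_R\,d\mu-m_2(\mu)\right|=\int_{\{\|x\|>R\}}\bigl(\|x\|^2-R^2\bigr)\,d\mu\le u(R).
\end{equation*}
Pick $R$ so that $u(R)<\rho/3$. Now approximate $q_R$ on a larger ball $B_{R'}$ by a network $h$ to accuracy $\rho/3$, and clip $h$ to $[0,R^2]$ to obtain $\widehat q_R$; this clipping again preserves the approximation bound. Outside $B_{R'}$ we have $|\widehat q_R-q_R|\le R^2$, so that region contributes at most $R^2\,\tau(R')$. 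With $R$ already fixed, choose $R'\ge R$ so that $R^2\tau(R')<\rho/3$. Summing the three contributions gives a total error below $\rho$.

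\textbf{Main obstacle.} The only delicate point is the order in which the radii are chosen. The clipping level $R^2$ depends on the truncation radius $R$, and it multiplies the tail mass outside the approximation ball. Choosing $R$ first, to control the uniform-integrability error, and only then choosing $R'$ to control the tail mass resolves this. Tightness alone would not be enough, because the truncation error $u(R)$ must vanish uniformly over $\mathfrak K$; that uniformity is exactly what \eqref{eq:ui-second-moment} supplies.
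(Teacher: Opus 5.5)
Your proof is correct and follows the paper's route: approximate each bounded test on a ball, clip it, let tightness control the complement, and use \eqref{eq:ui-second-moment} to control truncation of $\|x\|_2^2$. The only differences are minor: you saturate with $\min(\|x\|_2^2,R^2)$ where the paper uses a cutoff supported in $B_{2R}$, and you explicitly handle the region outside the approximation ball, which the paper leaves implicit (in fact $R'=R$ already suffices, since $R^2\tau(R)\le u(R)$).
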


\begin{proof}
For a bounded test, choose a ball carrying uniformly high probability,
approximate the test uniformly there, and clip the network to a fixed bound;
tightness controls the complement.  For $q(x)=\|x\|_2^2$, choose $R$ so that
the tail integral in \eqref{eq:ui-second-moment} is small, multiply $q$ by a
continuous cutoff supported in the ball of radius $2R$, and uniformly
approximate and clip that compactly supported function.  Its integral differs
from $m_2$ only by the approximation error and the uniformly small tail.
\end{proof}

\begin{lemma}[Common-support stability]
\label{lem:common-support}
For $a_1,\ldots,a_N\in\R^d$, $w,v\in\Delta_N$, and
$D=\max_{j,k}\|a_j-a_k\|_2$,
\begin{equation*}
 W_2^2\!\left(\sum_jw_j\delta_{a_j},\sum_jv_j\delta_{a_j}\right)
 \leq \frac{D^2}{2}\|w-v\|_1.
\end{equation*}
\end{lemma}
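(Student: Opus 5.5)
The plan is to construct an explicit coupling of the two common-support measures that keeps the shared mass in place and moves only the excess mass. Write $\mu_w=\sum_jw_j\delta_{a_j}$ and $\mu_v=\sum_jv_j\delta_{a_j}$, and set $m_j=\min(w_j,v_j)$ with $s=\sum_j(w_j-m_j)=\sum_j(v_j-m_j)$. Both sums equal $s$ because $w,v\in\Delta_N$. Since $\sum_j|w_j-v_j|=\sum_j(w_j-m_j)+\sum_j(v_j-m_j)=2s$, we have
\begin{equation*}
s=\tfrac12\|w-v\|_1 .
\end{equation*}

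Next define the coupling
\begin{equation*}
\pi=\sum_j m_j\,\delta_{(a_j,a_j)}+\pi',
\end{equation*}
where $\pi'$ couples the residual measures $\sum_j(w_j-m_j)\delta_{a_j}$ and $\sum_k(v_k-m_k)\delta_{a_k}$. If $s=0$ then $w=v$ and there is nothing to prove. Otherwise take the product coupling
\begin{equation*}
\pi'=s^{-1}\sum_{j,k}(w_j-m_j)(v_k-m_k)\,\delta_{(a_j,a_k)}.
\end{equation*}
Its first marginal is $\sum_j(w_j-m_j)\delta_{a_j}$ and its second is $\sum_k(v_k-m_k)\delta_{a_k}$. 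Adding the diagonal part shows that $\pi$ has marginals $\mu_w$ and $\mu_v$, so $\pi\in\Pi(\mu_w,\mu_v)$.

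Now bound the cost. The diagonal part contributes nothing, since $\|a_j-a_j\|_2=0$. Every pair $(a_j,a_k)$ charged by $\pi'$ satisfies $\|a_j-a_k\|_2^2\le D^2$, and $\pi'$ has total mass $s$. Hence
\begin{equation*}
W_2^2(\mu_w,\mu_v)\le\int\|x-y\|_2^2\,d\pi\le D^2 s=\frac{D^2}{2}\|w-v\|_1 .
\end{equation*}

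There is no real obstacle here. The only points needing care are the bookkeeping identity $s=\tfrac12\|w-v\|_1$, which uses that both weight vectors have unit mass, and the degenerate case $s=0$, which is handled separately above. Note that the residual coupling can be any coupling of the excess masses, since all off-diagonal costs are bounded by $D^2$; the product coupling is just a convenient explicit choice.
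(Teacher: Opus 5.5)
Your proposal is correct and matches the paper's own argument: both keep the common mass $\min(w_j,v_j)$ in place at zero cost and move the residual mass $\tfrac12\|w-v\|_1$ over distances at most $D$. Your explicit product coupling of the residuals, together with the marginal check, supplies the detail that the paper's one-line proof leaves implicit.
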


\begin{proof}
Match the mass common to $w_j$ and $v_j$ at zero cost and transport the
remaining mass, whose total is $\|w-v\|_1/2$, across distances at most $D$.
\end{proof}

The same neural realization argument applies to a tight input family with
uniformly integrable second moments whenever the continuous finite-support
construction is uniform on that family.

\begin{proof}[Proof of Theorem~\ref{thm:neural-realization}]
Apply Theorem~\ref{thm:finite-support} with error $\varepsilon/2$.
Lemma~\ref{lem:feature-approx} uniformly approximates its feature vector.
The resulting feature vectors lie in a compact neighborhood of $E(\Kcal)$.
Uniform continuity of $p$ makes the induced weight error small.  To realize
$p$ by softmax, use the positive regularization
\begin{equation*}
p_j^\tau=\frac{p_j+\tau}{1+N\tau}.
\end{equation*}
Approximate the continuous logits
$\log p_j^\tau$ uniformly on that compact neighborhood, and apply softmax.
Lemma~\ref{lem:common-support} converts the final weight error to $W_2$ error.
\end{proof}

\subsection{Uniform empirical-feature bound}
\label{app:empirical-bound}

\begin{proposition}[Uniform empirical-feature error bound]
\label{prop:empirical}
Fix any neural realization from Theorem~\ref{thm:neural-realization}, and write
$f_i=\widehat\psi_i$ for $i\leq M$ and
$f_{M+1}=\widehat q_R$.  Define the realized feature ranges
\begin{align*}
 \ell_i&=\inf_{x\in\R^{d_x}}f_i(x), &
 u_i&=\sup_{x\in\R^{d_x}}f_i(x), &
 s_i&=u_i-\ell_i,
\end{align*}
the feature box $\mathcal Z=\prod_{i=1}^{M+1}[\ell_i,u_i]$, and the modulus
of continuity of the output network on this box by
\begin{equation*}
 \omega_{\widehat p,\mathcal Z}(t)
 =\sup_{\substack{z,z'\in\mathcal Z\\
                   \|z-z'\|_\infty\leq t}}
   \|\widehat p(z)-\widehat p(z')\|_1.
\end{equation*}
Let
\begin{equation*}
 \eta_\theta=\sup_{\mu\in\Kcal}
 W_2(\widehat\Ncal_\theta(\mu),\Gcal(\mu)),
 \qquad
 D=\max_{j,k}\|a_j-a_k\|_2.
\end{equation*}
If $X_1,\ldots,X_{N_{\mathrm{in}}}
\stackrel{\mathrm{i.i.d.}}\sim\mu$ and the feature
integrals in \eqref{eq:neural-realization} are replaced by empirical averages,
then, for every $t>0$,
\begin{equation}
 \sup_{\mu\in\Kcal}\Pr\!\left(
 W_2(\widehat\Ncal_{\theta,N_{\mathrm{in}}}(\mu),\Gcal(\mu))
 >\eta_\theta+D\sqrt{\frac{\omega_{\widehat p,\mathcal Z}(t)}{2}}
 \right)
 \leq 2\sum_{\substack{1\leq i\leq M+1\\s_i>0}}
 \exp\!\left(-\frac{2N_{\mathrm{in}}t^2}{s_i^2}\right).
 \label{eq:empirical-feature-tail}
\end{equation}
In particular, with $s_{\max}=\max_i s_i$ and
\begin{equation*}
 t_{N_{\mathrm{in}},\delta}=s_{\max}
 \sqrt{\frac{\log(2(M+1)/\delta)}{2N_{\mathrm{in}}}},
\end{equation*}
the following bound holds with probability at least $1-\delta$, uniformly in
$\mu\in\Kcal$:
\begin{equation}
 \sup_{\mu\in\Kcal}\Pr\!\left(
 W_2(\widehat\Ncal_{\theta,N_{\mathrm{in}}}(\mu),\Gcal(\mu))
 >\eta_\theta
 +D\sqrt{\frac{\omega_{\widehat p,\mathcal Z}
 (t_{N_{\mathrm{in}},\delta})}{2}}
 \right)\leq\delta.
 \label{eq:empirical-feature-delta}
\end{equation}
If $\widehat p$ is $L_p$-Lipschitz from $\ell_\infty$ to $\ell_1$ on
$\mathcal Z$, the second term in \eqref{eq:empirical-feature-delta} is at most
\begin{equation*}
 D\sqrt{\frac{L_p s_{\max}}{2}}
 \left(\frac{\log(2(M+1)/\delta)}{2N_{\mathrm{in}}}\right)^{1/4}.
\end{equation*}
Consequently, for every $\varepsilon>0$ and $\delta\in(0,1)$ the networks and
$N_{\mathrm{in}}$ can be chosen so that
\begin{equation*}
 \sup_{\mu\in\Kcal}\Pr\!\left(
 W_2(\widehat\Ncal_{\theta,N_{\mathrm{in}}}(\mu),\Gcal(\mu))>\varepsilon
 \right)<\delta.
\end{equation*}
\end{proposition}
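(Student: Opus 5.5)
We bound the empirical output by the triangle inequality
\begin{equation*}
W_2(\widehat\Ncal_{\theta,N_{\mathrm{in}}}(\mu),\Gcal(\mu))
\le W_2(\widehat\Ncal_{\theta,N_{\mathrm{in}}}(\mu),\widehat\Ncal_\theta(\mu))
+W_2(\widehat\Ncal_\theta(\mu),\Gcal(\mu)).
\end{equation*}
The second term is at most $\eta_\theta$ by definition. Only the first term, a pure feature-estimation error, needs a probabilistic bound.

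Both measures are supported on the common atoms $a_1,\ldots,a_N$, with weights $\widehat p(\bar Z)$ and $\widehat p(Z)$. Here $Z=(\int f_i\,d\mu)_i$ and $\bar Z=(N_{\mathrm{in}}^{-1}\sum_b f_i(X_b))_i$. Each $f_i$ takes values in $[\ell_i,u_i]$, so both $Z$ and $\bar Z$ lie in the box $\mathcal Z$. Lemma~\ref{lem:common-support} then gives
\begin{equation*}
W_2(\widehat\Ncal_{\theta,N_{\mathrm{in}}}(\mu),\widehat\Ncal_\theta(\mu))\le D\sqrt{\|\widehat p(\bar Z)-\widehat p(Z)\|_1/2}.
\end{equation*}
On the event $\{\|\bar Z-Z\|_\infty\le t\}$ this is at most $D\sqrt{\omega_{\widehat p,\mathcal Z}(t)/2}$ by the definition of the modulus.

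The exceedance event in \eqref{eq:empirical-feature-tail} is therefore contained in $\bigcup_i\{|\bar Z_i-Z_i|>t\}$. Coordinates with $s_i=0$ are constant, so there $\bar Z_i=Z_i$ almost surely and they contribute nothing. For $s_i>0$, Hoeffding's inequality for i.i.d.\ variables with range $s_i$ gives $\Pr(|\bar Z_i-Z_i|>t)\le 2\exp(-2N_{\mathrm{in}}t^2/s_i^2)$. A union bound yields the right-hand side, which does not depend on $\mu$, so the supremum over $\Kcal$ is free.

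For \eqref{eq:empirical-feature-delta}, set $t=t_{N_{\mathrm{in}},\delta}$ and bound each summand using $s_i\le s_{\max}$. Then each exponential equals at most $\delta/(2(M+1))$, and the total is at most $\delta$. The Lipschitz case follows from $\omega_{\widehat p,\mathcal Z}(t)\le L_pt$; substituting $t_{N_{\mathrm{in}},\delta}$ gives the displayed $N_{\mathrm{in}}^{-1/4}$ rate.

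For the final claim:
\begin{enumerate}
\item Choose the networks by Theorem~\ref{thm:neural-realization} so that $\eta_\theta<\varepsilon/2$.
\item Then fix that realization. Its ranges $s_i$ are finite because the features are bounded networks; in particular $\widehat q_R$ is truncated.
\item The softmax network $\widehat p$ is continuous on the compact box $\mathcal Z$, hence uniformly continuous, so $\omega_{\widehat p,\mathcal Z}(t)\to0$ as $t\to0$.
\item Choose $t$ with $D\sqrt{\omega(t)/2}<\varepsilon/2$.
\item Then take $N_{\mathrm{in}}$ large enough that the Hoeffding sum is below $\delta$.
\end{enumerate}

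The argument is essentially routine. The one point to watch is that boundedness of every feature, including the truncated quadratic one, is what makes Hoeffding applicable and $\mathcal Z$ compact. The order of choices also matters: the networks must be fixed before $t$ and $N_{\mathrm{in}}$, since $D$, $s_i$ and $\omega$ depend on the realization.
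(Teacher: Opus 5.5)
Your proof is correct and follows the paper's argument essentially step for step: triangle inequality through $\widehat\Ncal_\theta(\mu)$, Lemma~\ref{lem:common-support} combined with the modulus on the event $\|\bar Z-Z\|_\infty\le t$, coordinatewise Hoeffding with a $\mu$-independent union bound, and the $s_i\le s_{\max}$ calculation at $t_{N_{\mathrm{in}},\delta}$. The only cosmetic difference is in the final claim: you fix $t$ with $D\sqrt{\omega_{\widehat p,\mathcal Z}(t)/2}<\varepsilon/2$ and then enlarge $N_{\mathrm{in}}$ directly in \eqref{eq:empirical-feature-tail}, whereas the paper applies \eqref{eq:empirical-feature-delta} at confidence level $\delta/2$; both choices work equally well.
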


\begin{proof}[Proof of Proposition~\ref{prop:empirical}]
For $\mu\in\Kcal$, set
\begin{equation*}
 z(\mu)=\left(\int f_1\,d\mu,\ldots,\int f_{M+1}\,d\mu\right),
 \qquad
 z_{N_{\mathrm{in}}}(\mu)=\frac1{N_{\mathrm{in}}}
 \sum_{b=1}^{N_{\mathrm{in}}}
 \left(f_1(X_b),\ldots,f_{M+1}(X_b)\right).
\end{equation*}
Both vectors belong to $\mathcal Z$.  For every coordinate with $s_i>0$,
Hoeffding's inequality gives
\begin{equation*}
 \Pr\!\left(|z_{N_{\mathrm{in}},i}(\mu)-z_i(\mu)|>t\right)
 \leq 2\exp\!\left(-\frac{2N_{\mathrm{in}}t^2}{s_i^2}\right).
\end{equation*}
Coordinates with $s_i=0$ have zero empirical error.  Hence a union bound,
whose right-hand side is independent of $\mu$, yields
\begin{equation}
 \sup_{\mu\in\Kcal}\Pr\!\left(
 \|z_{N_{\mathrm{in}}}(\mu)-z(\mu)\|_\infty>t\right)
 \leq 2\sum_{\substack{1\leq i\leq M+1\\s_i>0}}
 \exp\!\left(-\frac{2N_{\mathrm{in}}t^2}{s_i^2}\right).
 \label{eq:proof-feature-concentration}
\end{equation}
On the complementary event, the definition of the modulus gives
\begin{equation*}
 \|\widehat p(z_{N_{\mathrm{in}}}(\mu))-\widehat p(z(\mu))\|_1
 \leq\omega_{\widehat p,\mathcal Z}(t).
\end{equation*}
Applying Lemma~\ref{lem:common-support} to the empirical- and
population-feature outputs therefore gives
\begin{equation*}
 W_2(\widehat\Ncal_{\theta,N_{\mathrm{in}}}(\mu),
     \widehat\Ncal_\theta(\mu))
 \leq D\sqrt{\frac{\omega_{\widehat p,\mathcal Z}(t)}{2}}.
\end{equation*}
The triangle inequality and the definition of $\eta_\theta$, combined with
\eqref{eq:proof-feature-concentration}, prove
\eqref{eq:empirical-feature-tail}.

For $t=t_{N_{\mathrm{in}},\delta}$, each nonzero-range summand on the right of
\eqref{eq:empirical-feature-tail} is at most $\delta/(M+1)$ after including
its leading factor $2$, so their sum is at most $\delta$.  This proves
\eqref{eq:empirical-feature-delta}.  If $\widehat p$ is $L_p$-Lipschitz, then
$\omega_{\widehat p,\mathcal Z}(t)\leq L_pt$, which gives the displayed
Lipschitz specialization.

Finally, apply \eqref{eq:empirical-feature-delta} with confidence level
$\delta/2$.  Theorem~\ref{thm:neural-realization} permits choosing
$\eta_\theta<\varepsilon/2$.  Since $\mathcal Z$ is compact and
$\widehat p$ is continuous,
$\omega_{\widehat p,\mathcal Z}(t)\to0$ as $t\downarrow0$. Taking
$N_{\mathrm{in}}$
sufficiently large makes the second term in
\eqref{eq:empirical-feature-delta} smaller than $\varepsilon/2$, proving the
last assertion.
\end{proof}

\subsection{Hilbert-space reconstruction and proof of Theorem~\ref{thm:path-space}}
\label{app:hilbert-proof}

\begin{lemma}[Lipschitz pushforward and reconstruction]
\label{lem:pushforward}
If $F:\Xcal\to\Ycal$ is Lipschitz, then
\begin{equation*}
W_2^\Ycal(F_\#\mu,F_\#\nu)
\leq\Lip(F)W_2^\Xcal(\mu,\nu).
\end{equation*}
Moreover, for a measurable $P:\Xcal\to\Xcal$,
\begin{equation*}
 W_2^\Xcal(P_\#\mu,\mu)
 \leq\left(\int\|Px-x\|_\Xcal^2d\mu(x)\right)^{1/2}.
\end{equation*}
\end{lemma}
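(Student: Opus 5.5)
Both inequalities follow by pushing a single coupling forward and bounding its transport cost. We treat the Lipschitz statement first and then the reconstruction statement. Throughout, $\mu,\nu\in\Pcal_2(\Xcal)$. We also note that $F_\#\mu\in\Pcal_2(\Ycal)$ whenever $F$ is Lipschitz, since
\begin{equation*}
\|F(x)-F(x_0)\|_\Ycal\leq\Lip(F)\|x-x_0\|_\Xcal,
\end{equation*}
so finite second moments are preserved and the left-hand sides are well defined.

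\textbf{Lipschitz pushforward.} Fix $\pi\in\Pi(\mu,\nu)$. Consider the pushforward $(F\times F)_\#\pi$ of $\pi$ under the product map $(x,y)\mapsto(F(x),F(y))$. Its marginals are $F_\#\mu$ and $F_\#\nu$, so it belongs to $\Pi(F_\#\mu,F_\#\nu)$. By the change-of-variables formula for pushforwards,
\begin{equation*}
\int\|u-v\|_\Ycal^2\,d(F\times F)_\#\pi(u,v)=\int\|F(x)-F(y)\|_\Ycal^2\,d\pi(x,y)\leq\Lip(F)^2\int\|x-y\|_\Xcal^2\,d\pi(x,y).
\end{equation*}
The left-hand side dominates $W_2^\Ycal(F_\#\mu,F_\#\nu)^2$. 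Taking the infimum over $\pi$ on the right and then square roots yields the first inequality. Existence of an optimal coupling is not needed, because the infimum argument suffices.

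\textbf{Reconstruction bound.} If $\int\|Px-x\|^2\,d\mu=\infty$ there is nothing to prove, so assume the integral is finite. Then $P_\#\mu\in\Pcal_2(\Xcal)$ by the triangle inequality $\|Px\|\leq\|Px-x\|+\|x\|$. Use the deterministic coupling $\pi=(P,\mathrm{id})_\#\mu$, which has marginals $P_\#\mu$ and $\mu$; measurability of $(P,\mathrm{id})$ follows from measurability of $P$. Its cost is exactly
\begin{equation*}
\int\|Px-x\|_\Xcal^2\,d\mu(x),
\end{equation*}
and bounding $W_2^2$ by this cost gives the claim.

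\textbf{Main obstacle.} The only point requiring care is the measure-theoretic bookkeeping: checking that the pushed-forward couplings have the stated marginals and that the second moments are finite, so that $W_2$ is defined. Separability of $\Xcal$ and $\Ycal$ guarantees that the Borel $\sigma$-algebra of the product space is the product $\sigma$-algebra, so the product maps are measurable.
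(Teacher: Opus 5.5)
Your proposal is correct and follows the same route as the paper: push an arbitrary coupling through $(F,F)$ and take the infimum for the Lipschitz bound, and use the deterministic coupling $(P,\mathrm{id})_\#\mu$ for the reconstruction bound. Your extra bookkeeping (finite second moments of the pushforwards and measurability of the product maps via separability) simply fills in details that the paper's two-line proof leaves implicit.
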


\begin{proof}
Push a coupling of $\mu,\nu$ through $(F,F)$ for the first inequality.  For
the second, push $\mu$ through $x\mapsto(Px,x)$.
\end{proof}

\begin{lemma}[Uniform Wasserstein autoencoding in a separable Hilbert space]
\label{lem:hilbert-autoencoding}
Let $\Xcal$ be a separable Hilbert space and let
$X_1\subset X_2\subset\cdots\subset\Xcal$ be finite-dimensional subspaces
whose union is dense.  Write $d_n=\dim X_n$, choose an orthonormal basis
$e_1^n,\ldots,e_{d_n}^n$ of $X_n$, and define
\begin{align*}
 \mathcal E_\Xcal^n x
   &=\bigl(\langle x,e_1^n\rangle_\Xcal,\ldots,
       \langle x,e_{d_n}^n\rangle_\Xcal\bigr),\\
 \mathcal D_\Xcal^n z
   &=\sum_{j=1}^{d_n}z_j e_j^n,
 \qquad
 P_\Xcal^n=\mathcal D_\Xcal^n\circ\mathcal E_\Xcal^n.
\end{align*}
Then $\mathcal E_\Xcal^n$ is a contraction,
$\mathcal D_\Xcal^n$ is an isometry, and $P_\Xcal^n$ is the orthogonal
projection onto $X_n$.  Moreover, every $W_2^\Xcal$-compact set
$\Kcal\subset\Pcal_2(\Xcal)$ satisfies
\begin{equation}
 \lim_{n\to\infty}\sup_{\mu\in\Kcal}
 W_2^\Xcal\bigl((P_\Xcal^n)_\#\mu,\mu\bigr)=0.
 \label{eq:hilbert-wass-autoencoder}
\end{equation}
\end{lemma}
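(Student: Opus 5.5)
The algebraic claims come first and are routine. By Bessel's inequality, $\|\mathcal E_\Xcal^n x\|_2^2=\sum_j\langle x,e_j^n\rangle^2\le\|x\|_\Xcal^2$, so $\mathcal E_\Xcal^n$ is a contraction. Orthonormality gives $\|\mathcal D_\Xcal^n z\|_\Xcal=\|z\|_2$, so $\mathcal D_\Xcal^n$ is an isometry. The composition $P_\Xcal^n x=\sum_j\langle x,e_j^n\rangle e_j^n$ is the standard formula for the orthogonal projection onto $X_n$. Since the $X_n$ are nested, $\|x-P_\Xcal^n x\|_\Xcal=\dist(x,X_n)$ is nonincreasing in $n$. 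Density of $\bigcup_n X_n$ gives $\dist(x,X_n)\to0$ for every $x$, and we always have $\|x-P_\Xcal^n x\|\le\|x\|$.

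For the uniform Wasserstein statement, Lemma~\ref{lem:pushforward} reduces it to showing
\begin{equation*}
 \lim_{n\to\infty}\sup_{\mu\in\Kcal}\int\|P_\Xcal^n x-x\|_\Xcal^2\,d\mu(x)=0.
\end{equation*}
Write $g_n(x)=\|x-P_\Xcal^n x\|^2$. Then $g_n\le\|x\|^2$, $g_n\downarrow0$ pointwise, and each $g_n$ is $2$-Lipschitz in the sense that $|g_n(x)^{1/2}-g_n(y)^{1/2}|\le\|x-y\|$, because $I-P_\Xcal^n$ is a contraction.

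I see two routes and would use a compactness argument, which avoids handling tightness in infinite dimensions directly. Fix $\eta>0$. By $W_2$-compactness, cover $\Kcal$ by finitely many $W_2$-balls of radius $\eta$ centered at $\mu_1,\ldots,\mu_L$. For each single $\mu_\ell$, monotone (or dominated) convergence with the dominating function $\|x\|^2\in L^1(\mu_\ell)$ gives $\int g_n\,d\mu_\ell\to0$, so one $n_0$ works for all $\ell$. Now take any $\mu$ with $W_2(\mu,\mu_\ell)<\eta$ and an optimal coupling $\pi$. Minkowski's inequality in $L^2(\pi)$, applied to $\|(I-P_\Xcal^n)x\|\le\|(I-P_\Xcal^n)y\|+\|x-y\|$, yields
\begin{equation*}
 \Bigl(\int g_n\,d\mu\Bigr)^{1/2}\le\Bigl(\int g_n\,d\mu_\ell\Bigr)^{1/2}+W_2(\mu,\mu_\ell).
\end{equation*}
Hence $\sup_{\mu\in\Kcal}W_2((P_\Xcal^n)_\#\mu,\mu)\le 2\eta$ for all $n\ge n_0$, where monotonicity in $n$ is used to conclude for all larger $n$. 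This proves \eqref{eq:hilbert-wass-autoencoder}.

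The main obstacle is converting pointwise convergence into convergence uniform over $\Kcal$. The Lipschitz-in-$L^2(\pi)$ estimate above handles it cleanly, so I expect the step to be short once that estimate is written down. The alternative route, via tightness plus uniform integrability and Dini's theorem on compacts, would also work, but it requires more care.
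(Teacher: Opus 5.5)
Your proof is correct and follows the paper's argument. The paper likewise bounds $W_2^\Xcal((P_\Xcal^n)_\#\mu,\mu)$ by $r_n(\mu)=\bigl(\int\|(I-P_\Xcal^n)x\|_\Xcal^2\,d\mu(x)\bigr)^{1/2}$, obtains pointwise convergence by dominated convergence, and uses the same $L^2(\pi)$ triangle-inequality estimate to show that each $r_n$ is $1$-Lipschitz in $W_2$. Compactness then upgrades pointwise to uniform convergence; your finite $\eta$-net just writes out that last step the paper states in one line. Two small remarks: it is $g_n^{1/2}$ that is $1$-Lipschitz, not ``$g_n$ is $2$-Lipschitz''; and monotonicity in $n$ is not needed, since you can take $n_0$ directly from the finitely many limits $\int g_n\,d\mu_\ell\to0$.
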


\begin{proof}
The norm claims and the identity of $P_\Xcal^n$ follow from orthonormality.
For $\mu\in\Pcal_2(\Xcal)$, set
\begin{equation*}
 r_n(\mu)=\left(\int_\Xcal
       \|(I-P_\Xcal^n)x\|_\Xcal^2\,d\mu(x)\right)^{1/2}.
\end{equation*}
The coupling $x\mapsto(P_\Xcal^n x,x)$ and
Lemma~\ref{lem:pushforward} give
$W_2^\Xcal((P_\Xcal^n)_\#\mu,\mu)\leq r_n(\mu)$.
Because $P_\Xcal^n x\to x$ and
$\|(I-P_\Xcal^n)x\|_\Xcal\leq\|x\|_\Xcal$, dominated convergence gives
$r_n(\mu)\to0$ for each $\mu$.

The convergence is uniform on $\Kcal$.  Indeed, for any coupling $\pi$ of
$\mu$ and $\nu$, the reverse triangle inequality in $L^2(\pi;\Xcal)$ and the
contractivity of $I-P_\Xcal^n$ yield
\begin{equation*}
 |r_n(\mu)-r_n(\nu)|
 \leq\left(\int\|(I-P_\Xcal^n)(x-y)\|_\Xcal^2
       \,d\pi(x,y)\right)^{1/2}
 \leq\left(\int\|x-y\|_\Xcal^2\,d\pi(x,y)\right)^{1/2}.
\end{equation*}
Taking the infimum over $\pi$ shows that every $r_n$ is $1$-Lipschitz on
$\Pcal_2(\Xcal)$.  Pointwise convergence together with this common Lipschitz
bound and compactness of $\Kcal$ implies
$\sup_{\mu\in\Kcal}r_n(\mu)\to0$, proving
\eqref{eq:hilbert-wass-autoencoder}.
\end{proof}

\begin{proof}[Proof of Theorem~\ref{thm:path-space}]
Write $P_\Xcal^n=\mathcal D_\Xcal^n\circ\mathcal E_\Xcal^n$ and define
$P_\Ycal^m$ similarly.  Lemma~\ref{lem:hilbert-autoencoding} gives
\begin{equation}
 \sup_{\mu\in\Kcal}
 W_2^\Xcal\bigl((P_\Xcal^n)_\#\mu,\mu\bigr)\longrightarrow0.
 \label{eq:input-projection-uniform}
\end{equation}
Since $\Kcal$ is compact and $\Ucal$ is an open neighborhood of $\Kcal$,
\eqref{eq:input-projection-uniform} implies that
$(P_\Xcal^n)_\#\mu\in\Ucal$ for every $\mu\in\Kcal$ and all sufficiently
large $n$.  Continuity of $\Gcal$ then implies
\begin{equation}
 \sup_{\mu\in\Kcal}W_2^\Ycal\bigl(
 \Gcal((P_\Xcal^n)_\#\mu),\Gcal(\mu)\bigr)\longrightarrow0.
 \label{eq:target-input-projection}
\end{equation}
To see the uniformity in \eqref{eq:target-input-projection}, otherwise choose
$n_k\to\infty$ and $\mu_k\in\Kcal$ violating it.  After passing to a
subsequence, $\mu_k\to\mu\in\Kcal$ in $W_2^\Xcal$; then
\eqref{eq:input-projection-uniform} also gives
$(P_\Xcal^{n_k})_\#\mu_k\to\mu$, contradicting continuity of $\Gcal$ at
$\mu$.  Fix $n$ so that the left-hand side of
\eqref{eq:target-input-projection} is less than $\varepsilon/3$.

The family
\begin{equation*}
 \mathfrak L_n=
 \{\Gcal((P_\Xcal^n)_\#\mu):\mu\in\Kcal\}
 \subset\Pcal_2(\Ycal)
\end{equation*}
is compact.  Applying Lemma~\ref{lem:hilbert-autoencoding} in $\Ycal$, choose
$m$ such that
\begin{equation}
 \sup_{\nu\in\mathfrak L_n}
 W_2^\Ycal((P_\Ycal^m)_\#\nu,\nu)<\varepsilon/3.
 \label{eq:output-projection-uniform}
\end{equation}
The encoded input family
$\Kcal_n=(\mathcal E_\Xcal^n)_\#\Kcal$ is compact.  On $\Kcal_n$ define
\begin{equation*}
 \mathcal S_{n,m}(\lambda)=(\mathcal E_\Ycal^m)_\#
 \Gcal((\mathcal D_\Xcal^n)_\#\lambda).
\end{equation*}
Both pushforwards are Lipschitz by Lemma~\ref{lem:pushforward}, so
$\mathcal S_{n,m}$ is a continuous operator between finite-dimensional law
spaces.  Theorem~\ref{thm:neural-realization} therefore provides
$\widehat\Ncal_\theta$ such that
\begin{equation}
 \sup_{\lambda\in\Kcal_n}W_2\bigl(
 \widehat\Ncal_\theta(\lambda),\mathcal S_{n,m}(\lambda)\bigr)
 <\varepsilon/3.
 \label{eq:encoded-neural-error}
\end{equation}
For $\mu\in\Kcal$, put
$\lambda=(\mathcal E_\Xcal^n)_\#\mu$ and
$\nu=\Gcal((P_\Xcal^n)_\#\mu)$.  Since
$\mathcal D_\Ycal^m$ is an isometry,
Lemma~\ref{lem:pushforward}, \eqref{eq:encoded-neural-error},
\eqref{eq:output-projection-uniform}, and
\eqref{eq:target-input-projection} give
\begin{align*}
 W_2^\Ycal(\Ncal_{\theta,n,m}(\mu),\Gcal(\mu))
 &\leq W_2^\Ycal\bigl(
   (\mathcal D_\Ycal^m)_\#\widehat\Ncal_\theta(\lambda),
   (P_\Ycal^m)_\#\nu\bigr)\\
 &\quad+W_2^\Ycal((P_\Ycal^m)_\#\nu,\nu)
   +W_2^\Ycal(\nu,\Gcal(\mu))
 <\varepsilon.
\end{align*}
Taking the supremum over $\mu\in\Kcal$ proves the claim.
\end{proof}

\subsection{Additional path-law construction details}
\label{app:path-law-details}

\input{path_law_theory.tex}

\section{Supplementary Theory Proof For Experiments}
\label{app:structured-output-theory}

\subsection{Structured Output Representations and Neural Realizations}

General finite-support output measures are expressive, but their support size
can grow rapidly with the covering number of the input set and the complexity
of local output laws. Practical problems often provide additional output
structure. We consider common compact support, finite-dimensional parametric
families, generative pushforwards, and conditional densities.

\begin{lemma}[Finite-dimensional factorization]
\label{lem:structured-factorization}
Let $\Kcal\subset\Pcal_2(\R^{d_x})$ be $W_2$-compact, let $\Ycal$ be a Banach
space, and let $F:\Kcal\to\Ycal$ be continuous. For every $\varepsilon>0$,
there are a finite-dimensional encoder $E:\Kcal\to\R^M$ and a continuous map
$H:\R^M\to\Ycal$ such that
\begin{equation*}
\sup_{\mu\in\Kcal}\|H(E(\mu))-F(\mu)\|_{\Ycal}<\varepsilon.
\end{equation*}
If $F(\Kcal)$ lies in a convex set $\Ycal_0$, then $H(E(\Kcal))$ may be
required to lie in $\Ycal_0$.
\end{lemma}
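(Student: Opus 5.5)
The plan is to copy the argument of Theorem~\ref{thm:finite-support}, replacing the finite-support output laws by points of the Banach space $\Ycal$. Three ingredients are used: Lemma~\ref{lem:finite-separation} to build a finite encoder, a partition of unity on the compact feature image, and a Dugundji extension to all of $\R^M$.

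\textbf{Step 1: the encoder.} Since $\Kcal$ is $W_2$-compact and $F$ is continuous, $F$ is uniformly continuous on $\Kcal$. Choose $\eta>0$ such that $W_2(\mu,\nu)<\eta$ implies $\|F(\mu)-F(\nu)\|_\Ycal<\varepsilon/2$. Lemma~\ref{lem:finite-separation} then supplies bounded tests $\psi_1,\ldots,\psi_{M-1}$ and a radius $r>0$. Take
\[
E=(E_{\rm wk},m_2):\Kcal\to\R^M .
\]
Then $\|E(\mu)-E(\nu)\|_\infty<r$ forces $W_2(\mu,\nu)<\eta$, and hence $\|F(\mu)-F(\nu)\|_\Ycal<\varepsilon/2$. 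The map $E$ is $W_2$-continuous, because bounded continuous tests are weakly continuous and $m_2$ is $W_2$-continuous. So $E(\Kcal)$ is compact in $\R^M$.

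\textbf{Step 2: local values and gluing.} Cover $E(\Kcal)$ by finitely many open $\ell_\infty$-balls $B_\ell=B(E(\mu_\ell),r/2)$ with $\mu_\ell\in\Kcal$, $\ell=1,\ldots,L$. Take a continuous partition of unity $\alpha_\ell$ on a neighbourhood of $E(\Kcal)$ subordinate to this cover, for instance normalized tent functions $(r/2-\|z-E(\mu_\ell)\|_\infty)_+$. Define
\[
H_0(z)=\sum_\ell\alpha_\ell(z)F(\mu_\ell).
\]
This is continuous into $\Ycal$, and its values are convex combinations of the $F(\mu_\ell)$, so $H_0$ maps into $\Ycal_0$ whenever $F(\Kcal)\subset\Ycal_0$ with $\Ycal_0$ convex.

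For $\mu\in\Kcal$, every $\ell$ with $\alpha_\ell(E(\mu))>0$ satisfies $\|E(\mu)-E(\mu_\ell)\|_\infty<r/2<r$, so $\|F(\mu)-F(\mu_\ell)\|_\Ycal<\varepsilon/2$. Convexity of the norm then gives
\[
\|H_0(E(\mu))-F(\mu)\|_\Ycal\le\sum_\ell\alpha_\ell(E(\mu))\,\|F(\mu_\ell)-F(\mu)\|_\Ycal<\varepsilon/2<\varepsilon .
\]

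\textbf{Step 3: extension to $\R^M$.} Extend $H_0|_{E(\Kcal)}$ to a continuous $H:\R^M\to\Ycal$ by the Dugundji extension theorem. It applies because $E(\Kcal)$ is closed in the metric space $\R^M$ and $\Ycal$ is a locally convex (normed) space. The extension takes values in the convex hull of $H_0(E(\Kcal))$. For the convex-constraint clause this hull lies in $\operatorname{conv}\{F(\mu_\ell)\}\subset\Ycal_0$. So the strengthened requirement holds even on all of $\R^M$, not only on $E(\Kcal)$.

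Alternatively, one can skip Dugundji: extend the finitely many scalar weights $\alpha_\ell$ continuously to $\R^M$ with values in the simplex. This works by Tietze extension on each coordinate followed by a projection onto the simplex, and then $H=\sum_\ell\alpha_\ell F(\mu_\ell)$.

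\textbf{Main obstacle.} The main point needing care is Step 1. We need the finite encoder to transfer the modulus of continuity of $F$, and this is exactly what Lemma~\ref{lem:finite-separation} provides. One must check that that lemma's implication is uniform over all pairs in $\Kcal$, so that a single radius $r$ works for the whole covering argument. Everything else is routine.
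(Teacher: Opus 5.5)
Your proposal is correct and is essentially the paper's proof: Lemma~\ref{lem:finite-separation} gives the encoder, a partition of unity on the compact feature set gives the convex combination $H=\sum_\ell\alpha_\ell F(\mu_\ell)$, and a Dugundji extension finishes. Your use of the convex-hull property of Dugundji's extension, or alternatively Tietze extension followed by projection onto the simplex, even gives $H(\R^M)\subset\Ycal_0$.

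One small slip needs fixing. Lemma~\ref{lem:finite-separation} is stated with the Euclidean norm on $E_{\rm wk}$, so $\|E(\mu)-E(\nu)\|_\infty<r$ does not by itself trigger it; use $\ell_\infty$-balls of radius $r/\sqrt{M}$ (or Euclidean balls) instead.
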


\begin{proof}
Use Lemma~\ref{lem:finite-separation} and a finite cover of $E(\Kcal)$ with a partition of unity
$\alpha_\ell$. For centers $\mu_\ell$, define
\begin{equation*}
H(z)=\sum_{\ell=1}^{L}\alpha_\ell(z)F(\mu_\ell).
\end{equation*}
Uniform continuity controls the error. This is a convex combination, so it
preserves $\Ycal_0$. Extend $H$ from the compact feature set by the
Tietze--Dugundji theorem.
\end{proof}

\subsubsection{Common Support on a Compact Set}

\begin{proposition}[Shared support points]
\label{prop:structured-shared-support}
Suppose the assumptions of Theorem~\ref{thm:finite-support} hold and there is a compact
$D\subset\R^{d_y}$ such that
$\supp\Gcal(\mu)\subset D$ for every $\mu\in\Kcal$. For every
$\varepsilon>0$, there are common points $a_1,\ldots,a_N\in D$, an encoder
$E$, and continuous simplex-valued weights $p_j$ such that
\begin{equation*}
\Ncal(\mu)=\sum_{j=1}^{N}p_j(E(\mu))\delta_{a_j},\qquad
\sup_{\mu\in\Kcal}W_2(\Ncal(\mu),\Gcal(\mu))<\varepsilon.
\end{equation*}
The support size $N$ may be bounded by the covering number of $D$ at a scale
proportional to $\varepsilon$.
\end{proposition}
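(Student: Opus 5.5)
The plan is to fix the output atoms in advance, independently of the input law, as a finite $\eta$-net of the compact set $D$. Then every output law is replaced by its quantization onto that net. The quantization is continuous in the output law, so composing it with a continuous finite-dimensional factorization of the quantized weight vector (Lemma~\ref{lem:structured-factorization}) gives the simplex-valued weights.

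\textbf{Quantization.} Fix $\eta>0$ and choose an $\eta$-net $a_1,\ldots,a_N\in D$, with $N$ equal to the covering number of $D$ at scale $\eta$. Assigning each point of $D$ to a nearest atom (breaking ties measurably) gives a Voronoi map $T$ with $\|T(y)-y\|_2\le\eta$ on $D$. Lemma~\ref{lem:pushforward} then yields $W_2(T_\#\nu,\nu)\le\eta$ for every $\nu$ supported in $D$. However, $\nu\mapsto T_\#\nu$ is discontinuous, because mass jumps across Voronoi boundaries. I will therefore use a soft assignment instead. Take a continuous partition of unity $\{\phi_j\}$ on $D$ subordinate to the balls $B(a_j,2\eta)$, for example normalized tent functions $\max(0,2\eta-\|y-a_j\|_2)$, and define $w_j(\nu)=\int\phi_j\,d\nu$. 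Then $w(\nu)\in\Delta_N$, and the coupling that sends $y$ to $a_j$ with probability $\phi_j(y)$ gives $W_2(\sum_j w_j(\nu)\delta_{a_j},\nu)\le 2\eta$.

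\textbf{Continuity and factorization.} The map $\mu\mapsto w(\Gcal(\mu))$ is continuous from $(\Kcal,W_2)$ into $\R^N$, since $W_2$-convergence implies weak convergence and each $\phi_j$ is bounded and continuous on $D$ (extend it continuously to $\R^{d_y}$). Its values lie in the convex set $\Delta_N$. Lemma~\ref{lem:structured-factorization}, applied with $\Ycal=\R^N$ (any norm) and $\Ycal_0=\Delta_N$, provides an encoder $E$ of the form \eqref{eq:test-encoder} and a continuous $p:\R^M\to\R^N$ with $p(E(\Kcal))\subset\Delta_N$ and $\sup_{\mu}\|p(E(\mu))-w(\Gcal(\mu))\|_1<\rho$. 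Composing with the Euclidean projection onto $\Delta_N$, which is continuous and the identity on $\Delta_N$, makes $p$ simplex-valued everywhere.

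\textbf{Error bound.} Lemma~\ref{lem:common-support} together with the triangle inequality gives
\[
W_2(\Ncal(\mu),\Gcal(\mu))\le \operatorname{diam}(D)\sqrt{\rho/2}+2\eta .
\]
Choosing $\eta=\varepsilon/4$ and $\rho$ small enough makes the right-hand side less than $\varepsilon$. The support size is then the covering number of $D$ at scale $\varepsilon/4$, which is proportional to $\varepsilon$ as the statement requires.

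The main obstacle is the discontinuity of hard quantization. The soft partition of unity resolves it by turning quantization into integrating bounded continuous tests against $\Gcal(\mu)$. The only remaining care is checking the $2\eta$ transport bound for the soft assignment, which holds because $\phi_j(y)>0$ only when $\|y-a_j\|_2<2\eta$.
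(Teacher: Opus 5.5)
Your proposal is correct and follows essentially the same route as the paper: an $\eta$-net of $D$, soft quantization $\nu\mapsto\sum_j(\int\phi_j\,d\nu)\delta_{a_j}$ through a partition of unity with the coupling $\phi_j(y)\,d\nu(y)$, continuity of the resulting weight map, factorization by Lemma~\ref{lem:structured-factorization} with $\Ycal_0=\Delta_N$, and then Lemma~\ref{lem:common-support} plus the triangle inequality. Your $2\eta$-balls, which keep the normalized tents well defined on all of $D$, and your projection onto $\Delta_N$, which keeps the weights simplex-valued off $E(\Kcal)$, only make explicit details the paper leaves implicit.
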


\begin{proof}
Choose an $r$-net $\{a_j\}_{j=1}^{N}$ of $D$ and a continuous partition of
unity $\{\lambda_j\}$ subordinate to it. For any law $\nu$ on $D$, set
\begin{equation*}
Q(\nu)=\sum_{j=1}^{N}\left(\int_D\lambda_j(y)d\nu(y)\right)\delta_{a_j}.
\end{equation*}
The coupling $\lambda_j(y)d\nu(y)$ gives $W_2(\nu,Q(\nu))\leq r$. The map
$q(\mu)=(\int\lambda_1d\Gcal(\mu),\ldots,\int\lambda_Nd\Gcal(\mu))$ is
continuous. Apply Lemma~\ref{lem:structured-factorization} to factor $q$ through $E$ and choose the weight
error so that the common-support coupling bound is below $\varepsilon-r$.
\end{proof}

\subsubsection{Finite-Dimensional Parametric Output Families}

Let $\mathfrak F=\{\nu_a:a\in\Theta\}\subset\Pcal_2(\R^{d_y})$ be a
parametric family. Suppose $\Gcal(\mu)=\nu_{A(\mu)}$ for a continuous parameter
map $A:\Kcal\to\Theta$.

\begin{proposition}[Parametric output family]
\label{prop:structured-parametric}
Assume $a\mapsto\nu_a$ is $W_2$-continuous on a compact convex set
$K_\Theta\subset\Theta$ containing $A(\Kcal)$. For every $\varepsilon>0$,
there are $E$ and continuous $H:E(\Kcal)\to K_\Theta$ such that
\begin{equation*}
\sup_{\mu\in\Kcal}W_2(\nu_{H(E(\mu))},\Gcal(\mu))<\varepsilon.
\end{equation*}
\end{proposition}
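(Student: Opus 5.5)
The plan is to reduce Proposition~\ref{prop:structured-parametric} to Lemma~\ref{lem:structured-factorization}, applied to the parameter map $A$ with the convex constraint set $K_\Theta$. This is combined with uniform continuity of $a\mapsto\nu_a$ on the compact set $K_\Theta$. Throughout, $\Theta$ is regarded as a subset of a finite-dimensional Euclidean (hence Banach) space. This is implicit in calling the family finite-dimensional, and it is what makes convex combinations of parameters meaningful.

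First, since $a\mapsto\nu_a$ is $W_2$-continuous on the compact set $K_\Theta$, it is uniformly continuous there. So there is $\eta>0$ such that $a,b\in K_\Theta$ with $\|a-b\|<\eta$ imply $W_2(\nu_a,\nu_b)<\varepsilon$.

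Second, apply Lemma~\ref{lem:structured-factorization} with $F=A:\Kcal\to\Theta$, tolerance $\eta$, and convex set $\Ycal_0=K_\Theta\supset A(\Kcal)$. This yields a finite-dimensional encoder $E$ and a continuous $H$ with
\begin{equation*}
\sup_{\mu\in\Kcal}\|H(E(\mu))-A(\mu)\|<\eta,
\qquad H(E(\Kcal))\subset K_\Theta .
\end{equation*}
Concretely, $H(z)=\sum_\ell\alpha_\ell(z)A(\mu_\ell)$ is a convex combination of points of $K_\Theta$, so we may restrict $H$ to $E(\Kcal)$ and regard it as a map into $K_\Theta$. No extension is needed, because the statement only asks for $H$ on $E(\Kcal)$.

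Third, for each $\mu\in\Kcal$, both $H(E(\mu))$ and $A(\mu)$ lie in $K_\Theta$ and are within $\eta$ of each other. Hence
\begin{equation*}
W_2(\nu_{H(E(\mu))},\Gcal(\mu))=W_2(\nu_{H(E(\mu))},\nu_{A(\mu)})<\varepsilon .
\end{equation*}
The bound is uniform in $\mu$, and since $\Kcal$ is compact and the relevant functions are continuous, the supremum also satisfies the strict inequality: take $\eta$ for $\varepsilon/2$ to be safe.

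The only delicate point is ensuring that the approximate parameters remain inside $K_\Theta$, where continuity of the family is assumed; outside $K_\Theta$, $\nu_a$ may not even be defined. This is handled by the convexity clause of Lemma~\ref{lem:structured-factorization}. One should also check that $A$ is uniformly continuous, which follows from continuity of $A$ on the compact set $\Kcal$; the lemma's proof uses this to control $\|H(E(\mu))-A(\mu)\|$ via the partition of unity.
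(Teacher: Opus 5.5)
Your proposal is correct and follows the paper's proof: uniform continuity of $a\mapsto\nu_a$ on the compact set $K_\Theta$ turns a small parameter error into a small $W_2$ error, and Lemma~\ref{lem:structured-factorization} applied to $A$ with the convex constraint $\Ycal_0=K_\Theta$ supplies $E$ and $H$ with $H(E(\Kcal))\subset K_\Theta$. Your additional remarks are valid details of the same argument: $H$ is needed only on $E(\Kcal)$, so no extension is required, and taking $\eta$ for $\varepsilon/2$ keeps the supremum strictly below $\varepsilon$.
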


\begin{proof}
Uniform continuity of $a\mapsto\nu_a$ converts a sufficiently small parameter
error into a $W_2$ error below $\varepsilon$. Apply Lemma~\ref{lem:structured-factorization} to $A$ while
preserving the convex parameter set.
\end{proof}

Continuity of $A$ is an additional assumption: continuity of $\Gcal$ does not
by itself imply a continuous parameter selection. Nonidentifiability, label
switching in mixture models, or a discontinuous inverse parameterization can
prevent such a selection. Legal parameters must also be enforced. For Gaussian
outputs, a Cholesky factor or a positive transform of eigenvalues guarantees a
positive semidefinite covariance.

\begin{corollary}[Neural realization of a parametric output family]
\label{cor:structured-parametric-neural}
Under Proposition~\ref{prop:structured-parametric}, suppose there is $r>0$ such that
\begin{equation*}
C_r=\{a\in\R^p:\dist(a,A(\Kcal))\leq r\}\subset K_\Theta.
\end{equation*}
For every $\varepsilon>0$, there are input-feature networks and a
finite-dimensional network $\widehat H_\theta:\R^M\to\R^p$ such that
\begin{equation*}
\widehat\Ncal_\theta(\mu)=\nu_{\widehat H_\theta(\widehat E(\mu))},\qquad
\sup_{\mu\in\Kcal}W_2(\widehat\Ncal_\theta(\mu),\Gcal(\mu))<\varepsilon.
\end{equation*}
\end{corollary}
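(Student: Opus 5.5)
We combine the continuous parametric construction of Proposition~\ref{prop:structured-parametric} with the feature-approximation step used in the proof of Theorem~\ref{thm:neural-realization}. First, the map $a\mapsto\nu_a$ is continuous on the compact set $K_\Theta$, hence uniformly continuous there. Choose $\kappa\in(0,r)$ such that $a,b\in K_\Theta$ and $\|a-b\|<\kappa$ imply $W_2(\nu_a,\nu_b)<\varepsilon/2$. Apply Proposition~\ref{prop:structured-parametric}, through Lemma~\ref{lem:structured-factorization}, with parameter tolerance $\kappa/3$. This gives bounded tests $\psi_i$, the encoder $E=(E_{\rm wk},m_2)$, and a continuous $H:\R^M\to K_\Theta$ with $\sup_{\mu\in\Kcal}\|H(E(\mu))-A(\mu)\|<\kappa/3$. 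Since $K_\Theta$ is convex, the Dugundji extension keeps $H$ valued in $K_\Theta$ on all of $\R^M$.

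Second, we realize the features by networks. The set $E(\Kcal)$ is compact because $E$ is $W_2$-continuous: bounded continuous tests are weakly continuous, and $m_2$ is $W_2$-continuous. Let $B$ be the closed $1$-neighborhood of $E(\Kcal)$. $H$ is uniformly continuous on $B$, so there is $\rho\in(0,1)$ with $\|z-z'\|<\rho$ on $B$ implying $\|H(z)-H(z')\|<\kappa/3$. By Lemma~\ref{lem:feature-approx}, which applies because (A1) gives tightness and \eqref{eq:ui-second-moment}, there are bounded networks with $\sup_{\mu\in\Kcal}\|\widehat E(\mu)-E(\mu)\|<\rho$. Consequently $\widehat E(\Kcal)\subset B$ and $\|H(\widehat E(\mu))-A(\mu)\|<2\kappa/3$.

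Third, by (A3) choose a network $\widehat H_\theta:\R^M\to\R^p$ with $\sup_{z\in B}\|\widehat H_\theta(z)-H(z)\|<\kappa/3$. Then $\|\widehat H_\theta(\widehat E(\mu))-A(\mu)\|<\kappa<r$ for all $\mu\in\Kcal$. Hence $\widehat H_\theta(\widehat E(\mu))\in C_r\subset K_\Theta$, so the output law $\nu_{\widehat H_\theta(\widehat E(\mu))}$ is a legitimate member of the family. Since both this parameter and $A(\mu)$ lie in $K_\Theta$ and are within $\kappa$, the uniform continuity modulus gives $W_2(\widehat\Ncal_\theta(\mu),\Gcal(\mu))<\varepsilon/2<\varepsilon$.

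The main obstacle is the legality of the network's parameters, since a raw network output $\widehat H_\theta$ can leave $\Theta$. The hypothesis $C_r\subset K_\Theta$ resolves this: we make the total parameter error smaller than $r$, which places the output inside the buffer $C_r$. We also need the modulus of $a\mapsto\nu_a$ on all of $K_\Theta$, not only on $A(\Kcal)$. This holds because $K_\Theta$ is compact and $a\mapsto\nu_a$ is continuous on it.
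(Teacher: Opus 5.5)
Your proof is correct and follows the template the paper uses for Theorem~\ref{thm:neural-realization} and Corollary~\ref{cor:structured-density-neural}; the paper states this corollary without a separate proof. The steps are the same: factor $A$ through the finite encoder, keep $\widehat E(\Kcal)$ inside a compact neighborhood on which $H$ is uniformly continuous, approximate $H$ there by a network, and use the buffer $C_r$ in place of the softmax regularization to guarantee that the predicted parameter is legal. Two minor points: Lemma~\ref{lem:feature-approx} and (A3) give coordinatewise bounds, so apply them with tolerances $\rho/\sqrt{M+1}$ and $\kappa/(3\sqrt p)$ to get your Euclidean estimates; and the $K_\Theta$-valued Dugundji extension of $H$ is harmless but unnecessary, since the final step uses only $C_r\subset K_\Theta$.
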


\subsubsection{Generative Pushforward Representation}

Let $\rho\in\Pcal_2(\R^{d_z})$ be a fixed latent law. Suppose there is a
measurable map $G:\Kcal\times\R^{d_z}\to\R^{d_y}$ with
$\Gcal(\mu)=\pushforward{G(\mu,\cdot)}{\rho}$.

\begin{proposition}[Generative pushforward]
\label{prop:structured-generator}
If $\mu\mapsto G(\mu,\cdot)$ is continuous from $(\Kcal,W_2)$ to
$L^2(\rho;\R^{d_y})$, then for every $\varepsilon>0$ there are an encoder $E$
and continuous $H:E(\Kcal)\to L^2(\rho;\R^{d_y})$ such that
\begin{equation*}
\Ncal_H(\mu)=\pushforward{H(E(\mu))}{\rho},\qquad
\sup_{\mu\in\Kcal}W_2(\Ncal_H(\mu),\Gcal(\mu))<\varepsilon.
\end{equation*}
\end{proposition}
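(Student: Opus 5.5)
The plan is to reduce Proposition~\ref{prop:structured-generator} to Lemma~\ref{lem:structured-factorization}, applied with the Banach space $L^2(\rho;\R^{d_y})$. That lemma needs a single coupling estimate, which we record first. For any two maps $g,h\in L^2(\rho;\R^{d_y})$, pushing $\rho$ through $z\mapsto(g(z),h(z))$ gives a coupling of $g_\#\rho$ and $h_\#\rho$. This is the same device as in Lemma~\ref{lem:pushforward}, and it yields
\begin{equation*}
 W_2\bigl(g_\#\rho,h_\#\rho\bigr)\leq\left(\int\|g(z)-h(z)\|_2^2\,d\rho(z)\right)^{1/2}=\|g-h\|_{L^2(\rho)}.
\end{equation*}
Both pushforwards lie in $\Pcal_2(\R^{d_y})$ because $g,h$ are square integrable. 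So the pushforward map $g\mapsto g_\#\rho$ is $1$-Lipschitz from $L^2(\rho;\R^{d_y})$ into $(\Pcal_2(\R^{d_y}),W_2)$. The statement is therefore a factorization problem in $L^2$, measured by $\|\cdot\|_{L^2(\rho)}$.

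Next I take $F(\mu)=G(\mu,\cdot)$, which is continuous from $(\Kcal,W_2)$ into the Banach space $\Ycal=L^2(\rho;\R^{d_y})$ by hypothesis. Lemma~\ref{lem:structured-factorization} then gives a finite-dimensional encoder $E$, built from the bounded tests and second moment of Lemma~\ref{lem:finite-separation}, and a continuous $H:\R^M\to L^2(\rho;\R^{d_y})$. These satisfy $\sup_{\mu\in\Kcal}\|H(E(\mu))-G(\mu,\cdot)\|_{L^2(\rho)}<\varepsilon$. Restricting $H$ to $E(\Kcal)$ gives the required continuous map. The displayed bound with $g=H(E(\mu))$ and $h=G(\mu,\cdot)$ gives $W_2(\Ncal_H(\mu),\Gcal(\mu))<\varepsilon$ uniformly on $\Kcal$.

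I should check that the lemma's construction is legitimate in this setting. It uses uniform continuity of $F$ on the compact $\Kcal$, Lemma~\ref{lem:finite-separation} to turn a feature tolerance $r$ into a $W_2$ tolerance $\eta$, and a partition of unity $\alpha_\ell$ on the compact set $E(\Kcal)$. The resulting $H(z)=\sum_\ell\alpha_\ell(z)G(\mu_\ell,\cdot)$ is a finite convex combination of $L^2$ maps, so it is well defined in $L^2$ and continuous in $z$. The error bound uses $\|H(E(\mu))-F(\mu)\|\leq\sum_\ell\alpha_\ell(E(\mu))\|F(\mu_\ell)-F(\mu)\|$, where only balls with $\|E(\mu)-E(\mu_\ell)\|<r$ contribute.

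The main obstacle is not the estimate but a subtlety in the representation. $H(E(\mu))$ is a convex combination of \emph{maps}, not of measures. Its pushforward is therefore not the mixture $\sum_\ell\alpha_\ell\,G(\mu_\ell,\cdot)_\#\rho$. Averaging the maps could in principle leave the output near none of the $G(\mu_\ell,\cdot)_\#\rho$. This is harmless, because the error is controlled in the stronger $L^2(\rho)$ norm before pushing forward, and the pushforward is $1$-Lipschitz. Measurability is also fine, since every element of $L^2$ has a Borel representative and the pushforward does not depend on the representative. The extension to all of $\R^M$ via Dugundji is not needed, because the statement only asks for $H$ on $E(\Kcal)$.
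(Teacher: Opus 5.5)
Your proposal is correct and follows the paper's proof: both apply Lemma~\ref{lem:structured-factorization} to $\mu\mapsto G(\mu,\cdot)$ in the Banach space $L^2(\rho;\R^{d_y})$, then bound $W_2(\Ncal_H(\mu),\Gcal(\mu))$ by $\|H(E(\mu))-G(\mu,\cdot)\|_{L^2(\rho)}$ using the coupling obtained by pushing the same $Z\sim\rho$ through both maps. Your extra remarks are accurate but go beyond what the paper needs: that $H(E(\mu))$ is a convex combination of maps rather than of measures, and that the Dugundji extension is unnecessary on $E(\Kcal)$.
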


\begin{proof}
Apply Lemma~\ref{lem:structured-factorization} in $L^2(\rho;\R^{d_y})$ with error $\varepsilon$. Coupling the
two outputs by the same $Z\sim\rho$ gives
\begin{equation*}
W_2^2(\Ncal_H(\mu),\Gcal(\mu))
\leq\|H(E(\mu))-G(\mu,\cdot)\|_{L^2(\rho)}^2.
\end{equation*}
\end{proof}

The existence of a generator for each individual law does not imply a
continuous choice of generators across a family. The pushforward map is many
to one, and mode permutations or phase choices can make a generator selection
discontinuous even when the laws vary continuously.

Polynomial chaos is one example. If $Z\sim\rho$ and
$Y=f(Z)=\sum_{i=0}^{\infty}a_i\psi_i(Z)$ in $L^2(\rho)$, then the truncation
$f_M(Z)=\sum_{i=0}^{M}a_i\psi_i(Z)$ satisfies
\begin{equation*}
W_2^2(\pushforward{f}{\rho},\pushforward{f_M}{\rho})
\leq\|f-f_M\|_{L^2(\rho)}^2.
\end{equation*}
For standard Gaussian $\rho$, the basis may be the Hermite polynomials. If the
coefficients depend continuously on the input law, a network can predict them
from finite distributional features.

\begin{corollary}[Neural-operator realization of a generator]
\label{cor:structured-generator-neural}
Under Proposition~\ref{prop:structured-generator}, for every $\varepsilon>0$ there are input-feature
networks and a deterministic neural operator
$\widehat H_\theta:\R^M\to L^2(\rho;\R^{d_y})$ such that
\begin{equation*}
\widehat\Ncal_\theta(\mu)=
\pushforward{\widehat H_\theta(\widehat E(\mu))}{\rho},
\qquad
\sup_{\mu\in\Kcal}W_2(\widehat\Ncal_\theta(\mu),\Gcal(\mu))<\varepsilon.
\end{equation*}
\end{corollary}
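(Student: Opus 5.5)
We prove Corollary~\ref{cor:structured-generator-neural} in three layers, mirroring the proof of Theorem~\ref{thm:neural-realization}.

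\textbf{Step 1: continuous factorization.} Apply Proposition~\ref{prop:structured-generator} with tolerance $\varepsilon/3$. This gives a feature map $E=(E_{\rm wk},m_2)$ from Lemma~\ref{lem:finite-separation} and a continuous $H:\R^{M+1}\to L^2(\rho;\R^{d_y})$. Via the Dugundji extension used in Lemma~\ref{lem:structured-factorization}, $H$ is defined on all of $\R^{M+1}$ and satisfies
\begin{equation*}
\sup_{\mu\in\Kcal}W_2\bigl(\pushforward{H(E(\mu))}{\rho},\Gcal(\mu)\bigr)<\varepsilon/3.
\end{equation*}

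\textbf{Step 2: replace exact features by network features.} Use Lemma~\ref{lem:feature-approx} to get bounded networks $\widehat\psi_i$ and $\widehat q_R$ with $\sup_{\mu\in\Kcal}\|\widehat E(\mu)-E(\mu)\|_\infty<\rho_0$. Then $\widehat E(\Kcal)$ lies in the compact neighborhood $Z=\{z:\dist_\infty(z,E(\Kcal))\leq1\}$ once $\rho_0\leq1$. Here $E(\Kcal)$ is compact because $E$ is $W_2$-continuous on the compact set $\Kcal$. Since $H$ is uniformly continuous on $Z$ into $L^2(\rho)$, we can pick $\rho_0$ so that $\|H(\widehat E(\mu))-H(E(\mu))\|_{L^2(\rho)}<\varepsilon/3$. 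The same-$Z$ coupling from the proof of Proposition~\ref{prop:structured-generator} turns this into a $W_2$ bound of $\varepsilon/3$.

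\textbf{Step 3: realize $H$ by a neural operator.} This is the main obstacle. We need a network $\widehat H_\theta$, as a function of $(z,\zeta)\in Z\times\R^{d_z}$, such that
\begin{equation*}
\sup_{z\in Z}\|\widehat H_\theta(z)-H(z)\|_{L^2(\rho)}<\varepsilon/3.
\end{equation*}
The difficulty is that $H(z)$ is only an $L^2(\rho)$ element, not a continuous function of $\zeta$. We handle this with a finite-rank reduction.

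First, $H(Z)$ is compact in $L^2(\rho)$. Choose a finite $\varepsilon/12$-net $g_1,\ldots,g_L$ of $H(Z)$ and a partition of unity $\beta_\ell$ on $Z$ such that
\begin{equation*}
\Bigl\|H(z)-\sum_\ell\beta_\ell(z)g_\ell\Bigr\|_{L^2(\rho)}<\varepsilon/12 .
\end{equation*}
Concretely, take $\beta_\ell(z)=\alpha_\ell(H(z))$ for a partition of unity $\alpha_\ell$ subordinate to the net balls. This works because any convex combination of $g_\ell$ with $\|H(z)-g_\ell\|<\varepsilon/12$ stays within $\varepsilon/12$ of $H(z)$.

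Next, continuous compactly supported functions are dense in $L^2(\rho)$. So replace each $g_\ell$ by a compactly supported continuous $\tilde g_\ell$, then by a network $\widehat g_\ell$ that is uniformly close to $\tilde g_\ell$ on a large ball and clipped outside it. Tightness of $\rho$ controls the complement, exactly as in Lemma~\ref{lem:feature-approx}. Likewise approximate each $\beta_\ell$ uniformly on $Z$ by a network $\widehat\beta_\ell$, again using (A3).

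Set
\begin{equation*}
\widehat H_\theta(z)(\zeta)=\sum_\ell\widehat\beta_\ell(z)\,\widehat g_\ell(\zeta).
\end{equation*}
This is a neural network in $(z,\zeta)$. Products are permitted since (A3) gives uniform approximation of continuous maps on compacts, so the product map can itself be approximated. Its error is bounded by
\begin{equation*}
\sum_\ell|\widehat\beta_\ell-\beta_\ell|\,\|\widehat g_\ell\|_{L^2}+\sum_\ell\|\widehat g_\ell-g_\ell\|_{L^2}+\varepsilon/12,
\end{equation*}
which can be made smaller than $\varepsilon/3$. Finally, the output $\widehat H_\theta(\widehat E(\mu))$ is bounded and therefore lies in $L^2(\rho)$, so its pushforward under $\rho$ lies in $\Pcal_2$.

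Combining the three $\varepsilon/3$ bounds with the triangle inequality proves the claim.
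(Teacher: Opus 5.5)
Your proof is correct. The paper states this corollary without a separate proof. By analogy with its proof of Corollary~\ref{cor:structured-density-neural}, the argument it implicitly relies on has four steps:
\begin{itemize}
\item take the factorization of Proposition~\ref{prop:structured-generator};
\item swap $E$ for the network features of Lemma~\ref{lem:feature-approx};
\item approximate the continuous map $H$ uniformly on a compact neighborhood of $E(\Kcal)$ by ``a neural operator'';
\item convert the resulting $L^2(\rho)$ error into a $W_2$ error through the shared-latent coupling.
\end{itemize}
Your Steps~1--2 and the final triangle inequality follow exactly this skeleton.

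The genuine difference is your Step~3. The paper tacitly assumes a universal approximation property for neural operators with values in $L^2(\rho;\R^{d_y})$; you prove one. Your ingredients are compactness of $H(Z)$, a partition of unity $\beta_\ell=\alpha_\ell\circ H$ subordinate to a finite net, density of $C_c$ in $L^2(\rho)$, and clipped trunk networks $\widehat g_\ell$ whose tails are controlled by tightness of $\rho$. This buys a self-contained argument that uses only (A3). It also yields an explicit branch--trunk (DeepONet-type) architecture $\sum_\ell\widehat\beta_\ell(z)\,\widehat g_\ell(\zeta)$. And it deals with two facts the paper's implicit appeal glosses over: $H(z)$ need not be continuous in $\zeta$, and $\rho$ may have unbounded support.

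A few points are worth making explicit when you write it up:
\begin{itemize}
\item Choose the $\widehat g_\ell$ before the $\widehat\beta_\ell$, so the weight tolerance can be scaled by $\max_\ell\|\widehat g_\ell\|_{L^2(\rho)}$.
\item In the final triangle inequality, the Step~3 bound also has to pass through the shared-latent coupling, not only the Step~2 bound. This is valid because $\widehat E(\mu)\in Z$.
\item If you insist on a plain feedforward network in $(z,\zeta)$, the multiplication only needs to be approximated on the compact box fixed by $\sup_Z|\widehat\beta_\ell|$ and the clipping bound of $\widehat g_\ell$. That is a second place, besides the tail estimate in $L^2(\rho)$, where the clipping of $\widehat g_\ell$ is actually used.
\end{itemize}
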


\subsubsection{Conditional Density Representation}

\begin{proposition}[Existence of a conditional-density representation]
\label{prop:structured-density}
Let $D\subset\R^{d_y}$ be compact with positive Lebesgue measure. Suppose every
$\Gcal(\mu)$ is supported on $D$ and has density $p_\mu$, and the map
$P:\Kcal\to L^1(D)$, $P(\mu)=p_\mu$, is continuous. For every
$\varepsilon>0$, there are $E$ and a continuous map
$H:E(\Kcal)\to L^1(D)$ such that every $H(z)$ is a probability density and
\begin{equation*}
\Ncal_H(\mu)(dy)=H(E(\mu))(y)dy,\qquad
\sup_{\mu\in\Kcal}W_2(\Ncal_H(\mu),\Gcal(\mu))<\varepsilon.
\end{equation*}
\end{proposition}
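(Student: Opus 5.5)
The plan is to follow the template of Propositions~\ref{prop:structured-parametric} and~\ref{prop:structured-generator}: factor the continuous density map through a finite encoder using Lemma~\ref{lem:structured-factorization}, and then convert $L^1$ closeness of densities into $W_2$ closeness using the compactness of $D$.

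The first step is to apply Lemma~\ref{lem:structured-factorization} with Banach space $\Ycal=L^1(D)$ and $F=P$. For the convex set I take
\begin{equation*}
\Ycal_0=\Bigl\{f\in L^1(D): f\geq0 \text{ a.e.},\ \int_D f\,dy=1\Bigr\},
\end{equation*}
which contains $P(\Kcal)$ by hypothesis. For any $\eta>0$ this produces an encoder $E$ and a continuous $H$ with $\sup_{\mu}\|H(E(\mu))-p_\mu\|_{L^1(D)}<\eta$. In the construction of the lemma, $H(z)=\sum_\ell\alpha_\ell(z)p_{\mu_\ell}$ is a convex combination of probability densities. It is therefore itself a probability density at every $z$ of the feature set, not only at $z\in E(\Kcal)$. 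Since we only need $H$ on $E(\Kcal)$, the Dugundji extension is not required. If an extension to $\R^M$ is wanted, one can note that $\Ycal_0$ is a closed convex subset of the Banach space $L^1(D)$, so Dugundji's theorem extends $H$ into $\Ycal_0$.

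The second step, which I expect to be the only substantive point, is to bound $W_2$ by the $L^1$ distance of the densities on the compact set $D$. Let $\nu_1(dy)=f_1\,dy$ and $\nu_2(dy)=f_2\,dy$ be probability measures on $D$. Keep the common mass $\min(f_1,f_2)\,dy$ in place at zero cost, and transport the remaining mass, of total $\tfrac12\|f_1-f_2\|_{L^1}$, by any coupling (for instance the product coupling of the normalized excess parts) over distances at most $\diam D$. This is the continuous analogue of Lemma~\ref{lem:common-support} and gives
\begin{equation*}
W_2^2(\nu_1,\nu_2)\leq \frac{(\diam D)^2}{2}\|f_1-f_2\|_{L^1(D)}.
\end{equation*}
Both measures are supported in the compact set $D$, so they lie in $\Pcal_2$.

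Finally, choose $\eta$ so that $(\diam D)^2\eta/2<\varepsilon^2$. If $\diam D=0$, the set $D$ would have measure zero, which is excluded by the positive-measure hypothesis. Combining the two steps gives $\sup_{\mu}W_2(\Ncal_H(\mu),\Gcal(\mu))<\varepsilon$.
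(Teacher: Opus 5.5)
Your proposal is correct and is essentially the paper's proof: apply Lemma~\ref{lem:structured-factorization} to $P$ in $L^1(D)$ while preserving the closed convex set of probability densities, then convert the uniform $L^1$ error into $W_2$ error via the common-mass coupling bound $W_2^2\leq\frac{\diam(D)^2}{2}\|H(E(\mu))-p_\mu\|_{L^1(D)}$. Your two side remarks do not affect the argument: the extension step is unnecessary because $H$ is only needed on $E(\Kcal)$, and the degenerate-diameter case cannot occur.
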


\begin{proof}
The density set
\begin{equation*}
\mathcal A(D)=\left\{p\in L^1(D):p\geq0\text{ a.e.},\ \int_Dp(y)dy=1\right\}
\end{equation*}
is closed and convex. Apply Lemma~\ref{lem:structured-factorization} to $P$ in $L^1(D)$ while preserving
$\mathcal A(D)$. For laws supported on $D$,
\begin{equation*}
W_2^2(\Ncal_H(\mu),\Gcal(\mu))
\leq\frac{\diam(D)^2}{2}\|H(E(\mu))-p_\mu\|_{L^1(D)}.
\end{equation*}
\end{proof}

A normalized exponential preserves positivity and unit mass:
\begin{equation*}
\mathfrak S(h)(y)=\frac{\exp(h(y))}{\int_D\exp(h(\zeta))d\zeta}.
\end{equation*}
Alternatively, for a measurable partition $\{Q_j\}_{j=1}^{J}$ and simplex
weights $\alpha_j$, the bin density
\begin{equation*}
p_\alpha(y)=\sum_{j=1}^{J}\frac{\alpha_j}{|Q_j|}\mathbf1_{Q_j}(y)
\end{equation*}
is automatically valid and converges in $L^1$ as the partition is refined.

\begin{lemma}[Stability of the normalized exponential]
\label{lem:normalized-exponential-stability}
For compact $D$ of positive Lebesgue measure and $h_1,h_2\in C(D)$,
\begin{equation*}
\|\mathfrak S(h_1)-\mathfrak S(h_2)\|_{L^1(D)}
\leq2\|h_1-h_2\|_{C(D)}.
\end{equation*}
\end{lemma}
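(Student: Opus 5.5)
We need to prove that $\|\mathfrak S(h_1)-\mathfrak S(h_2)\|_{L^1(D)}\le 2\|h_1-h_2\|_{C(D)}$. The plan is to interpolate between $h_1$ and $h_2$ and bound the derivative of the normalized exponential along the segment.

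Set $\delta=\|h_1-h_2\|_{C(D)}$ and $h_t=h_2+t(h_1-h_2)$ for $t\in[0,1]$, with $g=h_1-h_2$ and $p_t=\mathfrak S(h_t)$. Each $h_t$ is continuous on the compact set $D$, so $\int_D e^{h_t}$ is finite and positive, because $|D|>0$. Moreover $t\mapsto p_t(y)$ is smooth for each $y$.

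Differentiating the quotient gives the exponential-family identity
\begin{equation*}
\partial_t p_t(y)=p_t(y)\Bigl(g(y)-\int_D g\,p_t\Bigr).
\end{equation*}
Differentiation under the integral sign is justified by the uniform bounds $|g|\le\delta$ and $e^{h_t}\le e^{\max(\|h_1\|_\infty,\|h_2\|_\infty)}$ on $D$.

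The fundamental theorem of calculus pointwise, followed by Tonelli or Fubini (all integrands are bounded on $D\times[0,1]$), gives
\begin{equation*}
\|p_1-p_0\|_{L^1(D)}\le\int_0^1\int_D p_t(y)\,\Bigl|g(y)-\int_D g\,p_t\Bigr|\,dy\,dt.
\end{equation*}
For each $t$, both $g(y)$ and its $p_t$-average lie in $[-\delta,\delta]$, so their difference is at most $2\delta$ in absolute value. Since $p_t$ has unit mass, the inner integral is at most $2\delta$. Integrating over $t\in[0,1]$ yields the claim.

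In fact this step gives the sharper bound $\delta$, because both values lie in an interval of length $\operatorname{osc}(g)\le2\delta$. Here the mean absolute deviation from the mean is at most half the range, but the stated constant $2$ only needs the crude bound.

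The only point requiring care is the justification of differentiating under the integral and exchanging the order of integration. Compactness of $D$ and continuity of $h_1,h_2$ make every quantity uniformly bounded, so dominated convergence handles it.

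As a fallback, a direct argument also works. Write $Z_i=\int_D e^{h_i}$, so that $e^{-\delta}\le Z_1/Z_2\le e^{\delta}$. Splitting
\begin{equation*}
\mathfrak S(h_1)-\mathfrak S(h_2)=\frac{e^{h_1}-e^{h_2}}{Z_1}+e^{h_2}\Bigl(\frac1{Z_1}-\frac1{Z_2}\Bigr)
\end{equation*}
gives an $L^1$ bound of $2(e^{\delta}-1)$. This only yields $2\delta$ for small $\delta$, which is why the interpolation argument is the preferred route.
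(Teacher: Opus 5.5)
Your proof is correct and is the paper's argument: interpolate $h_t=h_2+t(h_1-h_2)$, use $\partial_t p_t=p_t\bigl(g-\int_D g\,p_t\bigr)$, bound its $L^1(D)$ norm by $2\|g\|_{C(D)}$, and integrate over $t\in[0,1]$. You also justify differentiation under the integral and the Tonelli exchange, which the paper leaves implicit, and your remark that the constant improves to $1$ is correct.
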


\begin{proof}
Set $r=h_1-h_2$, $h_t=h_2+tr$, and $p_t=\mathfrak S(h_t)$. Then
\begin{equation*}
\partial_tp_t(y)=p_t(y)\left(r(y)-\int_Dr(\zeta)p_t(\zeta)d\zeta\right),
\end{equation*}
so $\|\partial_tp_t\|_{L^1(D)}\leq2\|r\|_{C(D)}$. Integrating over
$t\in[0,1]$ proves the result.
\end{proof}

\begin{corollary}[Neural realization of normalized-exponential densities]
\label{cor:structured-density-neural}
Under Proposition~\ref{prop:structured-density}, suppose a continuous energy map
$A:\Kcal\to C(D)$ satisfies
\begin{equation*}
p_\mu(y)=\frac{\exp(A(\mu)(y))}{\int_D\exp(A(\mu)(\zeta))d\zeta}.
\end{equation*}
For every $\varepsilon>0$, there are feature networks and a neural operator
$\widehat H_\theta:\R^M\to C(D)$ such that
\begin{align*}
\widehat p_\theta(y\mid\mu)&=
\frac{\exp(\widehat H_\theta(\widehat E(\mu))(y))}
{\int_D\exp(\widehat H_\theta(\widehat E(\mu))(\zeta))d\zeta},\\
\widehat\Ncal_\theta(\mu)(dy)&=\widehat p_\theta(y\mid\mu)dy,
\end{align*}
and
\begin{equation*}
\sup_{\mu\in\Kcal}W_2(\widehat\Ncal_\theta(\mu),\Gcal(\mu))<\varepsilon.
\end{equation*}
\end{corollary}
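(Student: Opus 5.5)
The plan is to argue in three stages: a continuous approximant from Proposition~\ref{prop:structured-density} and Lemma~\ref{lem:structured-factorization}, then neural realization of the energy, then stability of the normalized exponential (Lemma~\ref{lem:normalized-exponential-stability}) to pass to $W_2$.

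\emph{Stage 1: factor the energy map.} The map $A:\Kcal\to C(D)$ is continuous into the Banach space $C(D)$, so Lemma~\ref{lem:structured-factorization} gives an encoder $E(\mu)=(E_{\rm wk}(\mu),m_2(\mu))$ and a continuous $H:\R^{M+1}\to C(D)$ with $\sup_{\mu}\|H(E(\mu))-A(\mu)\|_{C(D)}<\rho$. Here $H$ is the partition-of-unity combination $\sum_\ell\alpha_\ell(z)A(\mu_\ell)$. By Lemma~\ref{lem:normalized-exponential-stability}, $\|\mathfrak S(H(E(\mu)))-p_\mu\|_{L^1(D)}\le 2\rho$. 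The $L^1$-to-$W_2$ bound from the proof of Proposition~\ref{prop:structured-density},
\[
W_2^2\le \tfrac{\diam(D)^2}{2}\|\cdot\|_{L^1(D)},
\]
then yields $W_2\le \diam(D)\sqrt{\rho}$. This is valid because both laws are supported on $D$: $\mathfrak S(h)$ always integrates to one on $D$.

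\emph{Stage 2: neural realization of $H$.} I will use the explicit structure of $H$. The finitely many fixed functions $A(\mu_\ell)\in C(D)$ can be taken as fixed ``basis'' outputs, and the coefficients $\alpha_\ell(z)$ are continuous scalar functions on a compact neighborhood $Z$ of $E(\Kcal)$. By (A3), each $\alpha_\ell$ can be approximated uniformly on $Z$ by a network $\widehat\alpha_\ell$. Set
\[
\widehat H_\theta(z)=\sum_\ell\widehat\alpha_\ell(z)\,g_\ell ,
\]
where the $g_\ell$ are either the $A(\mu_\ell)$ themselves or uniform neural approximations of them on compact $D$ (a DeepONet-type branch/trunk form). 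Then
\[
\sup_{z\in Z}\|\widehat H_\theta(z)-H(z)\|_{C(D)}\le \sum_\ell\|\widehat\alpha_\ell-\alpha_\ell\|_{\infty,Z}\,\|g_\ell\|+\|g_\ell-A(\mu_\ell)\|,
\]
which can be made below $\rho$. The $\widehat\alpha_\ell$ need not stay in the simplex, since $\mathfrak S$ normalizes any continuous energy.

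\emph{Stage 3: replace $E$ by $\widehat E$.} Lemma~\ref{lem:feature-approx} gives $\sup_\mu\|\widehat E(\mu)-E(\mu)\|<\rho'$, so $\widehat E(\Kcal)\subset Z$. Uniform continuity of $H$ on $Z$ then makes $\|H(\widehat E(\mu))-H(E(\mu))\|_{C(D)}$ small uniformly in $\mu$. The triangle inequality in $C(D)$ bounds the total energy error $\|\widehat H_\theta(\widehat E(\mu))-A(\mu)\|_{C(D)}$ by $3\rho$. Stability of $\mathfrak S$ and the $L^1$-to-$W_2$ bound give
\[
W_2\le \diam(D)\sqrt{6\rho},
\]
and choosing $\rho$ small finishes the proof.

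\emph{Main obstacle.} The delicate step is Stage~2, where one must make precise what a ``neural operator $\R^M\to C(D)$'' means and obtain uniform approximation in $C(D)$ rather than merely in $L^1$. I would handle it with the finite expansion above, which reduces everything to scalar universal approximation on compact sets. Uniform continuity of $H$ on a compact neighborhood of $E(\Kcal)$ comes for free because $H$ is a finite sum of continuous coefficient functions times fixed elements.
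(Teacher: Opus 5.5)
Your proposal is correct and follows the paper's proof. You factor $A$ through a finite encoder via Lemma~\ref{lem:structured-factorization}, replace the features and the $C(D)$-valued map by networks to get a uniform energy error, then apply Lemma~\ref{lem:normalized-exponential-stability} and the bound $W_2^2\le\frac{\diam(D)^2}{2}\|\cdot\|_{L^1(D)}$. Your branch--trunk realization $\sum_\ell\widehat\alpha_\ell(z)g_\ell$ and the uniform-continuity argument on a compact neighborhood $Z$ just spell out what the paper compresses into one sentence; your final constant $\sqrt{6\rho}$ can harmlessly be sharpened to $\sqrt{3\rho}$.
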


\begin{proof}
Approximate $A$ after finite-dimensional factorization and input-feature
approximation, obtaining
\begin{equation*}
\sup_{\mu\in\Kcal}
\|\widehat H_\theta(\widehat E(\mu))-A(\mu)\|_{C(D)}<\gamma.
\end{equation*}
Lemma~\ref{lem:normalized-exponential-stability} gives an $L^1$ density error below $2\gamma$, and therefore
\begin{equation*}
W_2^2(\widehat\Ncal_\theta(\mu),\Gcal(\mu))
\leq\diam(D)^2\gamma.
\end{equation*}
Choose $\gamma<\varepsilon^2/\diam(D)^2$.
\end{proof}

If $p_\mu$ is strictly positive and $\mu\mapsto\log p_\mu$ is continuous into
$C(D)$, one may take $A(\mu)=\log p_\mu$. Densities with zeros require a
positive regularization or bin-density approximation first.

\section{Data Generation and Experiments Protocol}
\label{app:experiment-protocols}

\subsection{OU First-Passage Time}
\label{app:ou-details}

\subsubsection{Data Generation}
\label{app:ou-data}

The 1,200 laws are divided equally among three drift regimes:
\begin{equation*}
\begin{array}{lll}
\text{subthreshold}:&m\sim\Unif(0.75,0.95),&400\text{ laws},\\
\text{balanced}:&m\sim\Unif(0.95,1.05),&400\text{ laws},\\
\text{suprathreshold}:&m\sim\Unif(1.05,1.25),&400\text{ laws}.
\end{array}
\end{equation*}
The master seed is zero. Parameter sampling uses seed 101, and $q$ is sampled
independently with
\begin{equation*}
\log q\sim\Unif(\log 0.1,\log 0.35).
\end{equation*}
Each law is assigned a unique identifier from 0 through 1199. For the target,
200 independent reset trials follow
\begin{align*}
dV_t&=(-V_t+m)dt+\sqrt q\,dW_t,\\
V_0&=0,
\end{align*}
using random seed 303, step size $0.002$, threshold 1, and terminal time 8. A
crossing within a step is linearly interpolated; trials without a crossing by
time 8 are censored. Uncensored times are assigned to 48 equal-width bins
with edges $t_b=8b/48$, and censored trials form category 49. Dividing the 49
category counts by 200 gives the target mass.

The input process file is generated from the same parameter table. For every
law, 200 independent paths satisfy
\begin{equation*}
dX_t=m\,dt+\sqrt q\,dW_t,\qquad X_0=0,
\end{equation*}
and are evaluated at $t_k=8k/255$, $k=0,\ldots,255$. Independent Gaussian
increments with variance $8/255$ give the exact transition law on this grid,
and cumulative summation preserves the temporal correlation within each
path. A deterministic stream generated by
\texttt{SeedSequence([0, law\_id])} makes every law reproducible independently
of generation order. The resulting array has shape
$1200\times200\times256$ and is stored in single precision. The input and
target files are joined by their unique law identifiers rather than by row
position, with matched parameters and regime labels verified before use.

A deterministic regime-stratified split with seed zero assigns 1,000 complete
laws to training and 200 to testing; all paths from a law remain in the same
split. There is no validation split, and the test laws are excluded from PCA
fitting, optimization, scheduling, early stopping, and checkpoint selection.

\begin{figure}[htbp]
\centering
\includegraphics[width=0.80\textwidth]{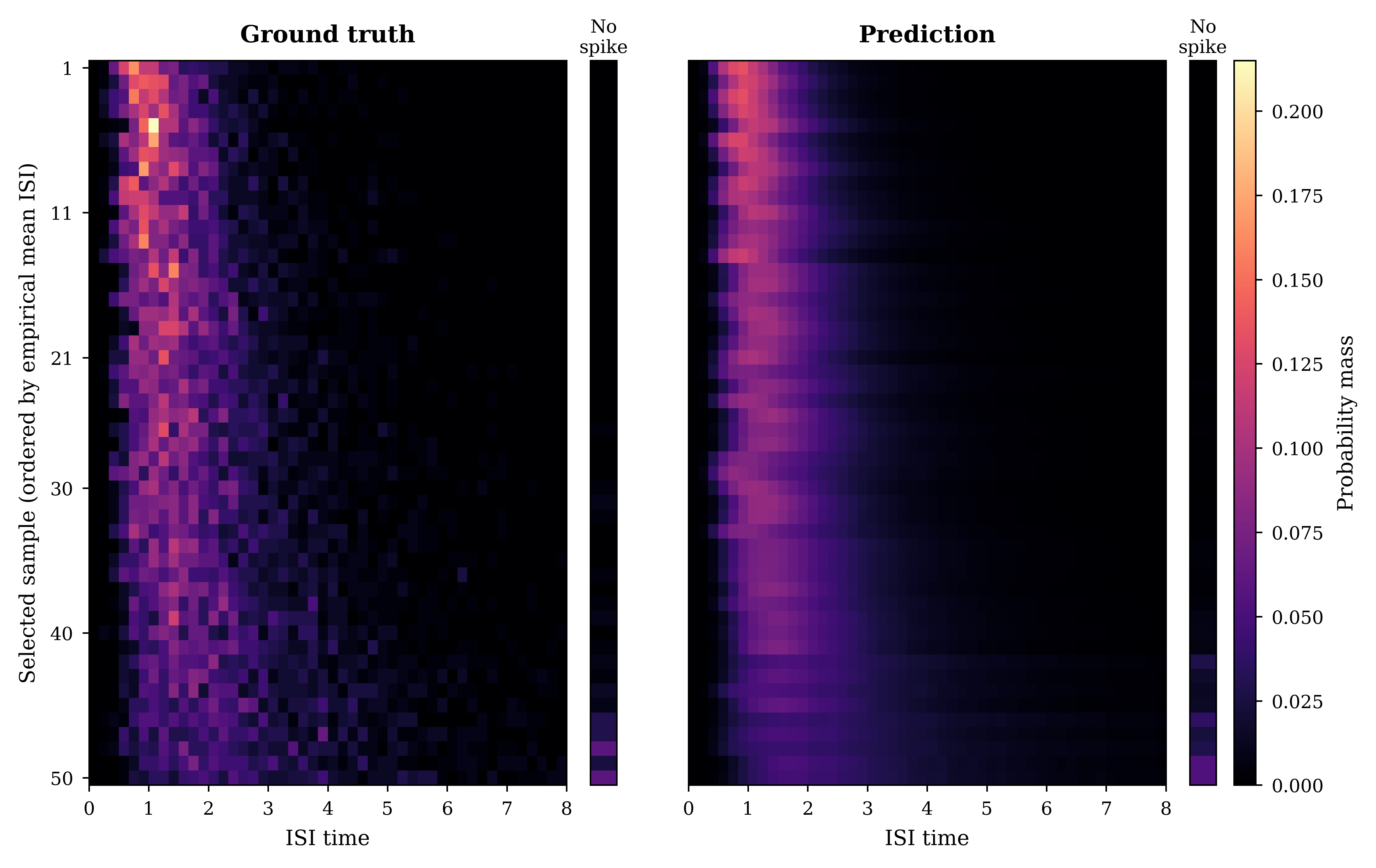}
\caption{Predicted and empirical interspike-interval distributions.
Ground-truth probability masses (left) and those predicted by the Process
Distributional Operator (right) are shown for 50 test examples selected
uniformly at random and ordered by increasing empirical mean interspike
interval (ISI). Each heatmap row represents one input law, and the columns
span the 48 finite ISI bins from 0 to 8. The narrow column beside each heatmap
shows the probability of no threshold crossing before the end of the
observation window. All panels use the same color scale.}
\label{fig:ou-test-heatmap}
\end{figure}

\subsubsection{Model and Baseline Implementations}
\label{app:ou-baselines}
For the Distributional Operator, one global PCA is fitted to the
200,000 training paths after flattening the law and path axes. The 256 time
coordinates are centered using their training means but are not standardized
coordinatewise. The mean and covariance are accumulated in double precision,
and a symmetric eigendecomposition orders the components by decreasing
eigenvalue. The smallest dimension whose cumulative explained-variance ratio
is strictly greater than $0.99$ is 14, with achieved ratio $0.99015011$.
The fitted mean and components are then applied unchanged to both splits. The
PCA state, threshold, split seed, and training-law identifiers are stored with
the checkpoint so evaluation never refits the transform.

The resulting model input has shape
$N_{\mathrm{batch}}\times N_{\mathrm{in}}\times14$. Each compressed
path passes through a shared $14\to32\to32$ SiLU path MLP. A DeepSets element
network then maps $32\to32\to32$ with a SiLU hidden activation and no dropout,
and averages the resulting 32-dimensional embeddings over the input-sample axis.
Finally, a $32\to32\to49$ outer network with a GELU hidden activation returns
the logits. The complete sequence of tensor shapes is
\begin{equation*}
\begin{aligned}
(N_{\mathrm{batch}},N_{\mathrm{in}},256)
&\xrightarrow{\mathrm{PCA}}(N_{\mathrm{batch}},N_{\mathrm{in}},14)\\
&\xrightarrow{\phi_{\mathrm{path}}}(N_{\mathrm{batch}},N_{\mathrm{in}},32)\\
&\xrightarrow{\mathrm{DeepSets\ mean}}(N_{\mathrm{batch}},32)
\xrightarrow{\rho}(N_{\mathrm{batch}},49).
\end{aligned}
\end{equation*}
The neural model has 5,265 trainable parameters. Softmax returns the predicted
bin masses and cumulative summation returns the predicted CDF. It is trained
for 1,000 epochs with AdamW, learning rate $10^{-3}$, weight decay $10^{-2}$,
mini-batches of 64 laws, no learning-rate scheduler, and no early stopping;
the saved final checkpoint is evaluated once on the untouched test split.

\begin{figure}[H]
\centering
\includegraphics[width=\textwidth]{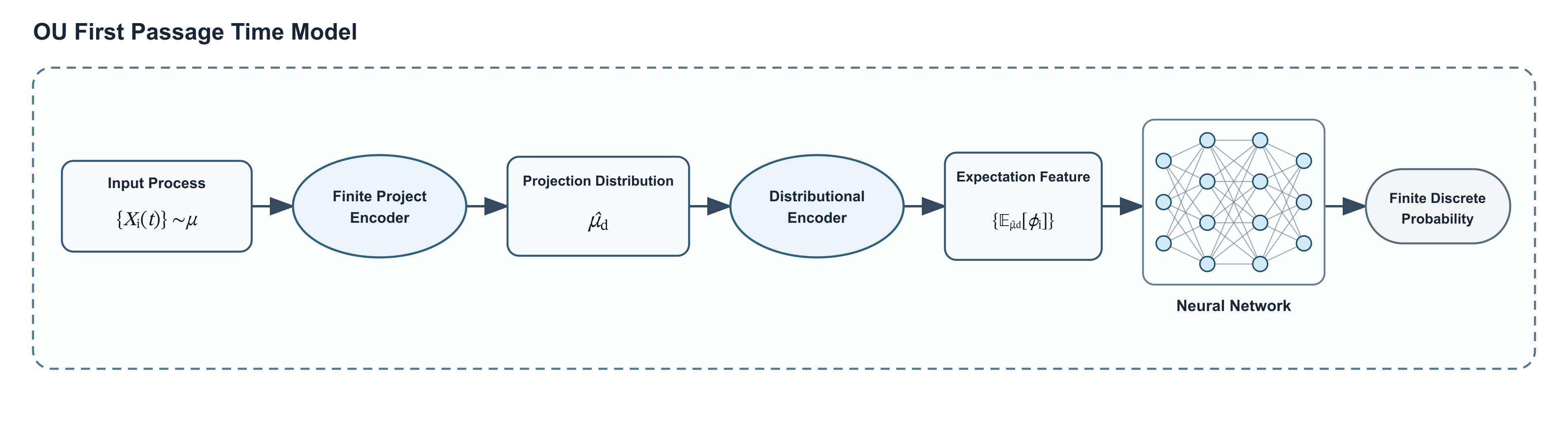}
\caption{Distributional Operator used for the OU first-passage task.
An empirical law of PCA-compressed input paths is encoded by shared path and
DeepSets networks, mapped to categorical output-law parameters, and sampled
when realizations of the predicted first-passage distribution are required.}
\label{fig:ou-distributional-operator-pipeline}
\end{figure}

The kernel baseline converts the input particles into a normalized
32-category histogram over the global training-particle range. For query $X$
and training law $X_i$, let $d(X,X_i)$ be the exact one-dimensional quadratic
Wasserstein distance between the piecewise-uniform histograms. It predicts
\begin{equation*}
\widehat p(X)=\sum_iw_i(X)p_i,\qquad
w_i(X)=\frac{\exp[-d(X,X_i)^2/(2h^2)]}
{\sum_j\exp[-d(X,X_j)^2/(2h^2)]},\qquad h=0.15.
\end{equation*}
Self-matches are excluded for training-law evaluation, and distances are
computed in blocks of 256 training laws.

The fixed-feature MLP replaces learned particle aggregation by the sample mean,
population variance, and eight empirical sine and cosine moments at frequencies
drawn once from $\mathcal N(0,1)$. The resulting 18 features pass through a
linear $18\to32$ map, GELU, and a $32\to49$ logit map. It is trained for
1,000 epochs with mini-batches of 64 laws using the count-weighted categorical
NLL objective.

\subsubsection{Evaluation Metrics}
\label{app:ou-metrics}
Five runs use mini-batches of 64 under the default training configuration.

For normalized target histogram $p_i$ and prediction $\widehat p_i$, we report
count-normalized NLL, Hellinger distance, empirical-to-prediction KL divergence,
tail error, and finite-support $W_2$:
\begin{align*}
\mathrm{NLL}_i&=-\sum_{k=1}^{49}p_{ik}\log\widehat p_{ik},\\
H(p_i,\widehat p_i)&=\left[\frac12\sum_{k=1}^{49}
(\sqrt{p_{ik}}-\sqrt{\widehat p_{ik}})^2\right]^{1/2},\\
D_{\mathrm{KL}}(p_i\Vert\widehat p_i)&=
\sum_{k:p_{ik}>0}p_{ik}\log\frac{p_{ik}}{\widehat p_{ik}},\\
E_{\mathrm{tail}}(p_i,\widehat p_i)&=
|p_{i,49}-\widehat p_{i,49}|.
\end{align*}
For finite-support $W_2$, category 49 is removed, the first 48 masses are
renormalized, and the exact one-dimensional quadratic Wasserstein distance is
computed between the associated piecewise-uniform laws. Metrics are averaged
over the 200 test laws and then summarized across the five runs.

\subsection{Nonlinear Duffing Oscillator}
\label{app:duffing-data}

\subsubsection{Data Generation}

The Duffing coefficients are fixed throughout the dataset at linear stiffness
$a=1$, cubic stiffness $b=1$, and damping $c=0.2$. A forcing-law parameter is
\begin{equation*}
\theta=(A,f,\sigma_\eta,\ell_\eta)\in
[0.5,2.0]\times[0.10,0.30]\times[0.05,0.40]\times[0.10,1.00].
\end{equation*}
For each realization, independently sample $\Phi\sim\Unif[0,2\pi]$ and a
zero-mean Gaussian process $\eta$ with squared-exponential covariance
\begin{equation*}
K_\theta(s,t)=\sigma_\eta^2
\exp\!\left(-\frac{(s-t)^2}{2\ell_\eta^2}\right).
\end{equation*}
The resulting stochastic forcing path is
\begin{equation*}
X(t)=A\sin(2\pi ft+\Phi)+\eta(t).
\end{equation*}
Thus $(A,f,\sigma_\eta,\ell_\eta)$ identifies a probability law, whereas the
phase and Gaussian-process path are realization-level random variables and
are resampled for every particle.

We use a Latin-hypercube design to choose 1,200 parameter vectors over the
four-dimensional box above, providing space-filling coverage near potentially
sharp response transitions. The split is performed at the measure level:
1,000 settings are used for training and 200 for testing.
No realizations from one forcing law appear in multiple splits. An optional
extrapolation benchmark may instead reserve boundary bands in $A$ or $f$, but
it is kept separate from the primary interpolation split.

Both forcing and displacement paths are stored at $N_t=256$ equally spaced
observation times
\begin{equation*}
t_k=\frac{20k}{255},\qquad k=0,\ldots,255.
\end{equation*}
The ODE is not solved directly on this coarse grid. Each forcing realization
is generated on a fine grid with step size
$\Delta t_{\mathrm{solve}}=0.005$ and interpolated inside the first-order
system
\begin{equation*}
\dot y_1=y_2,\qquad
\dot y_2=X(t)-0.2y_2-y_1-y_1^3.
\end{equation*}
We integrate from zero initial conditions with adaptive Dormand--Prince RK45,
using relative tolerance $10^{-8}$ and absolute tolerance $10^{-10}$, and
then sample the solution on the observation grid. Before full generation, a
convergence study on representative parameter settings compares
$\Delta t\in\{0.02,0.01,0.005,0.0025\}$ and requires relative trajectory
error below $10^{-4}$ against the finest reference.

For each parameter vector $\theta_i$, let $\mu_i$ be the corresponding
forcing-process law. Draw two independent ensembles,
\begin{equation*}
X_{i,j}\overset{\mathrm{iid}}{\sim}\mu_i,\qquad
X'_{i,j}\overset{\mathrm{iid}}{\sim}\mu_i,\qquad j=1,\ldots,128,
\end{equation*}
and propagate only the second ensemble through the fixed Duffing solution
operator, $Y_{i,j}=\mathcal S(X'_{i,j})$. The empirical input and target laws
are therefore
\begin{align*}
\widehat\mu_X^{(i)}&=\frac1{128}\sum_{j=1}^{128}\delta_{X_{i,j}},\\
\widehat\mu_Y^{(i)}&=\frac1{128}\sum_{j=1}^{128}\delta_{Y_{i,j}}.
\end{align*}
This deliberately removes trajectory-level correspondence while retaining the
shared measure parameter. The stored arrays $X$ and $Y$ each have shape
$1,200\times128\times256$ in \texttt{measure,sample,time} order, and a
$1,200\times4$ metadata array stores the parameter vectors for diagnostics
and split construction, not as model input.

For the neural configurations, a fixed random permutation assigns the first
1,000 law indices to training and the remaining 200 to testing. The training
loader is shuffled; the test loader is not shuffled. Mini-batches contain eight laws, use no worker
processes, and contain all 128 particles from each selected law.

\subsubsection{Distributional Operator Implementation}
\label{app:duffing-model}

\begin{figure}[H]
\centering
\includegraphics[width=\textwidth]{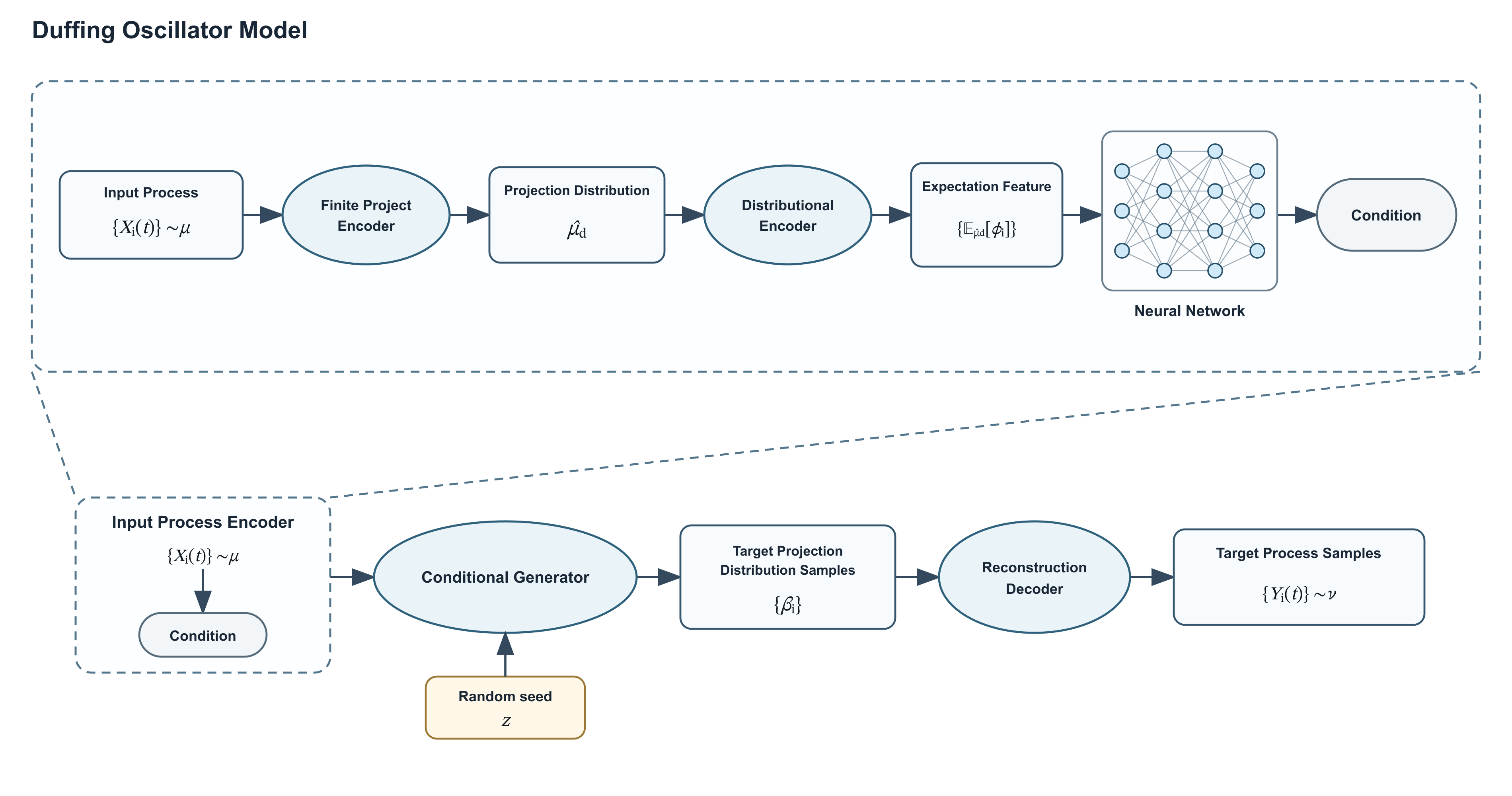}
\caption{Process-valued Distributional Operator used for the Duffing task.
Top: the input-process ensemble is projected to finite-dimensional coordinate
laws and encoded into a condition through expectation features. Bottom: the
condition and independent latent draws drive a conditional generator, whose
projected output samples are reconstructed as response-process paths.}
\label{fig:duffing-process-operator-pipeline}
\end{figure}

Separate PCA transforms are fitted to the pooled training forcing and response
trajectories, never to test data. The 256-dimensional forcing
and response grids are truncated to 24 and 13 principal coordinates,
respectively. PCA fitting accumulates the training mean and scatter matrix in
64-law blocks in double precision and uses a symmetric eigendecomposition.
Each retained coordinate is then standardized using its population mean and
standard deviation over all trajectories in the training laws; test data
reuse these stored transformations. Near-constant coordinates use
unit scale. Inputs to the learned encoder are the normalized 24-dimensional
forcing coefficients. Generated normalized 13-dimensional response
coefficients are unstandardized and reconstructed on the original 256-point
grid before the training loss and reported metrics are evaluated.

The DeepSets particle map and outer map each have two width-256 SiLU hidden
layers. The particle embedding and law context both have dimension 128, and
the aggregation is the sample mean. The conditional RealNVP has six affine
coupling layers with alternating even--odd masks. Each scale--translation
conditioner takes the masked 13-dimensional coefficient vector together with
the 128-dimensional context, uses two width-256 SiLU hidden layers, and has a
zero-initialized output layer. Its log-scale is bounded as
$\widetilde s=2\tanh(s/2)$. The base law is $\mathcal N(0,I_{13})$,
temperature is one, and 128 particles are generated for each input law.

Training minimizes the mean debiased GeomLoss Sinkhorn divergence with $p=2$,
blur $0.01$, scaling $0.5$, automatic backend selection, and no auxiliary mean
loss. AdamW uses learning rate $10^{-3}$ and weight decay $10^{-3}$; the global
gradient norm is clipped to one. Training runs for at most 1,000 epochs.

\subsubsection{Baseline Implementations}
\label{app:duffing-baselines}
The Fixed-Feature MLP operates on the raw 256-point paths without PCA. A
two-hidden-layer width-64 SiLU network encodes each forcing path into 64
features. A DeepSets module with two width-64 SiLU layers maps these features
to 128-dimensional sample embeddings, concatenates their population mean and
standard deviation (with variance stabilization $10^{-6}$), and applies two
more width-64 SiLU layers to produce a 64-dimensional law context. Given this
context and 32-dimensional standard Gaussian noise at temperature one, a
three-hidden-layer width-64 SiLU MLP generates a 32-dimensional output code;
a two-hidden-layer width-64 SiLU decoder maps each code directly to a
256-point response path. It generates 128 paths per law and trains for 200
epochs with batches of eight laws. Its debiased Sinkhorn loss uses $p=2$,
blur $0.05$, scaling $0.5$, and automatic backend selection. AdamW uses
learning rate $10^{-3}$ and weight decay $10^{-2}$, with global gradient norm
clipped to one.

The Nadaraya--Watson baseline computes Sinkhorn divergences $d_k$ between a
query forcing law and each of the 1,000 training laws, then uses
$w_k\propto\exp[-d_k/(2h^2)]$. Its 64 predicted response particles are the
elementwise weighted averages
\begin{equation*}
\widehat Y_j=\sum_{k=1}^{1000}w_kY_{k,j},\qquad j=1,\ldots,64,
\end{equation*}
of the stored target-particle arrays, rather than samples from a discrete
mixture over training laws. The common train--test split and a fixed subset of
64 particles are used for every law. Input-law distances are debiased GeomLoss Sinkhorn
divergences with $p=2$, blur $0.05$, scaling $0.5$, the tensorized backend,
four reference laws per distance chunk, and two query laws per batch. The
bandwidth is fixed at $h=0.005$, and the reported baseline is a single
deterministic fit.

\subsubsection{Evaluation Metrics}
\label{app:duffing-metrics}
For a target empirical law
$\nu=n^{-1}\sum_{i=1}^{n}\delta_{y_i}$ and a predicted empirical law
$\widehat\nu=m^{-1}\sum_{j=1}^{m}\delta_{\widehat y_j}$, let
$\Pi(\widehat\nu,\nu)$ denote their couplings. With ground cost $c$ and
regularization $\varepsilon>0$, define
\begin{align*}
\operatorname{OT}_{\varepsilon}(\widehat\nu,\nu)
&=\inf_{\pi\in\Pi(\widehat\nu,\nu)}
\left\{\int c(x,y)d\pi(x,y)
+\varepsilon\operatorname{KL}(\pi\|\widehat\nu\otimes\nu)\right\},\\
S_{\varepsilon}(\widehat\nu,\nu)
&=\operatorname{OT}_{\varepsilon}(\widehat\nu,\nu)
-\tfrac12\operatorname{OT}_{\varepsilon}(\widehat\nu,\widehat\nu)
-\tfrac12\operatorname{OT}_{\varepsilon}(\nu,\nu).
\end{align*}
For a positive-definite kernel $k$, the biased empirical MMD and empirical
energy distance are
\begin{align*}
\widehat{\operatorname{MMD}}_k
&=\left[\frac1{m^2}\sum_{j,j'}k(\widehat y_j,\widehat y_{j'})
+\frac1{n^2}\sum_{i,i'}k(y_i,y_{i'})
-\frac{2}{mn}\sum_{j,i}k(\widehat y_j,y_i)\right]^{1/2},\\
\widehat{\operatorname{ED}}
&=\frac{2}{mn}\sum_{j,i}\|\widehat y_j-y_i\|_2
-\frac1{m^2}\sum_{j,j'}\|\widehat y_j-\widehat y_{j'}\|_2
-\frac1{n^2}\sum_{i,i'}\|y_i-y_{i'}\|_2.
\end{align*}
For $P_\theta(z)=\langle\theta,z\rangle$ and uniform measure $\sigma$ on
the unit sphere, sliced 2-Wasserstein distance is
\begin{equation*}
\SW_2^2(\widehat\nu,\nu)=\int
W_2^2\!\left((P_\theta)_\#\widehat\nu,
(P_\theta)_\#\nu\right)d\sigma(\theta).
\end{equation*}

The model-specific training losses and schedules are given in
Appendices~\ref{app:duffing-model} and~\ref{app:duffing-baselines}. Final
evaluation generates 128 predicted paths for each target law. We report the
model's configured debiased Sinkhorn divergence, biased RBF
MMD, sliced $W_2$, and empirical energy distance on the reconstructed
256-point paths. For MMD, a separate median-heuristic bandwidth is computed
for each law from all prediction--target squared distances and the nonzero
within-sample squared distances; the reported value is MMD, not MMD squared.
Sliced $W_2$ uses 128 fixed unit-normalized Gaussian projections. Metrics are
averaged over laws. The conditional-flow values
in Table~\ref{tab:duffing} are test-set mean $\pm$ sample standard deviation;
kernel regression is evaluated once.

\section{Supplementary Experiments}
\label{app:supplementary-experiments}

The controlled Gaussian study below is a purely supplementary experiment. Its
complete setup, results, and implementation details are reported here.

\subsection{Controlled Gaussian Distributional Mapping}
\label{sec:controlled-gaussian}

\subsubsection{Problem Setup}

This controlled task learns a map from four-dimensional Gaussian-mixture laws
to four-dimensional Gaussian laws from unpaired ensembles of 200 input and
200 output particles.  The target mean and covariance are nonlinear functions
of analytic input-law moments and Fourier features, while the Distributional
Operator sees only the empirical input law and predicts a full-covariance
Gaussian. Complete data-generation,
baseline, architecture, optimization, and metric definitions are given in
Appendix~\ref{app:controlled-details}.

\subsubsection{Model Instantiation}

The Distributional Operator applies a shared particle encoder
$\phi_\theta:\R^4\to\R^{64}$ to each input sample and averages the resulting
features. The pooled representation is processed by an outer network
$\rho_\theta$ and a Gaussian output head:
\begin{align*}
h^{(b)}&=\frac1{200}\sum_{j=1}^{200}\phi_\theta(x_j^{(b)}),\\
(\widehat\mu^{(b)},\widehat u^{(b)})&=\rho_\theta(h^{(b)}),\\
\widehat L^{(b)}&=\mathcal R_{\mathrm{ltr}}(\widehat u^{(b)}),\qquad
\widehat\Sigma^{(b)}=\widehat L^{(b)}\widehat L^{(b)\top}.
\end{align*}
The particle encoder has two width-64 GELU hidden layers, and the outer network
has three width-128 GELU hidden layers. The Gaussian head outputs four mean
coordinates and ten entries of a lower-triangular Cholesky factor. Softplus
diagonal entries with an offset of $10^{-5}$ guarantee a positive-definite
covariance. Thus the predicted law is
$\mathcal N(\widehat\mu^{(b)},\widehat\Sigma^{(b)})$ and is invariant to the
ordering of the input particles.

\subsubsection{Evaluation}

Training minimizes empirical output-sample NLL. We additionally report the
closed-form Gaussian $W_2$ distance, target-to-prediction KL divergence, and
Hellinger distance, averaged over the 200 test laws. Full metric definitions and
optimization details are in Appendix~\ref{app:controlled-metrics}.

We compare against the Fixed-Feature MLP and distribution-space kernel regression; see
Appendix~\ref{app:controlled-baselines} for their implementations.

\begin{table}[htbp]
\centering
\scriptsize
\resizebox{\columnwidth}{!}{%
\begin{tabular}{lrrrr}
\toprule
Model & NLL & $W_2$ & KL divergence & Hellinger distance\\
\midrule
Distributional Operator & $\boldsymbol{4.016462 \pm 0.002379}$ & $\boldsymbol{0.162049 \pm 0.004262}$ & $\boldsymbol{0.048246 \pm 0.001804}$ & $\boldsymbol{0.099313 \pm 0.002195}$\\
Fixed-Feature MLP & $4.033581 \pm 0.002687$ & $0.187644 \pm 0.003589$ & $0.066297 \pm 0.002362$ & $0.112535 \pm 0.002348$\\
Kernel regression & $4.776576 \pm 0.000000$ & $0.741837 \pm 0.000000$ & $0.806815 \pm 0.000000$ & $0.396992 \pm 0.000000$\\
\bottomrule
\end{tabular}%
}
\caption{Test results for the controlled Gaussian mapping. Entries are the mean
$\pm$ standard deviation over five runs. Kernel regression
is deterministic and therefore has zero
standard deviation.  Lower is better for every metric.}
\label{tab:controlled}
\end{table}

\begin{figure}[htbp]
\centering
\includegraphics[width=0.72\textwidth]{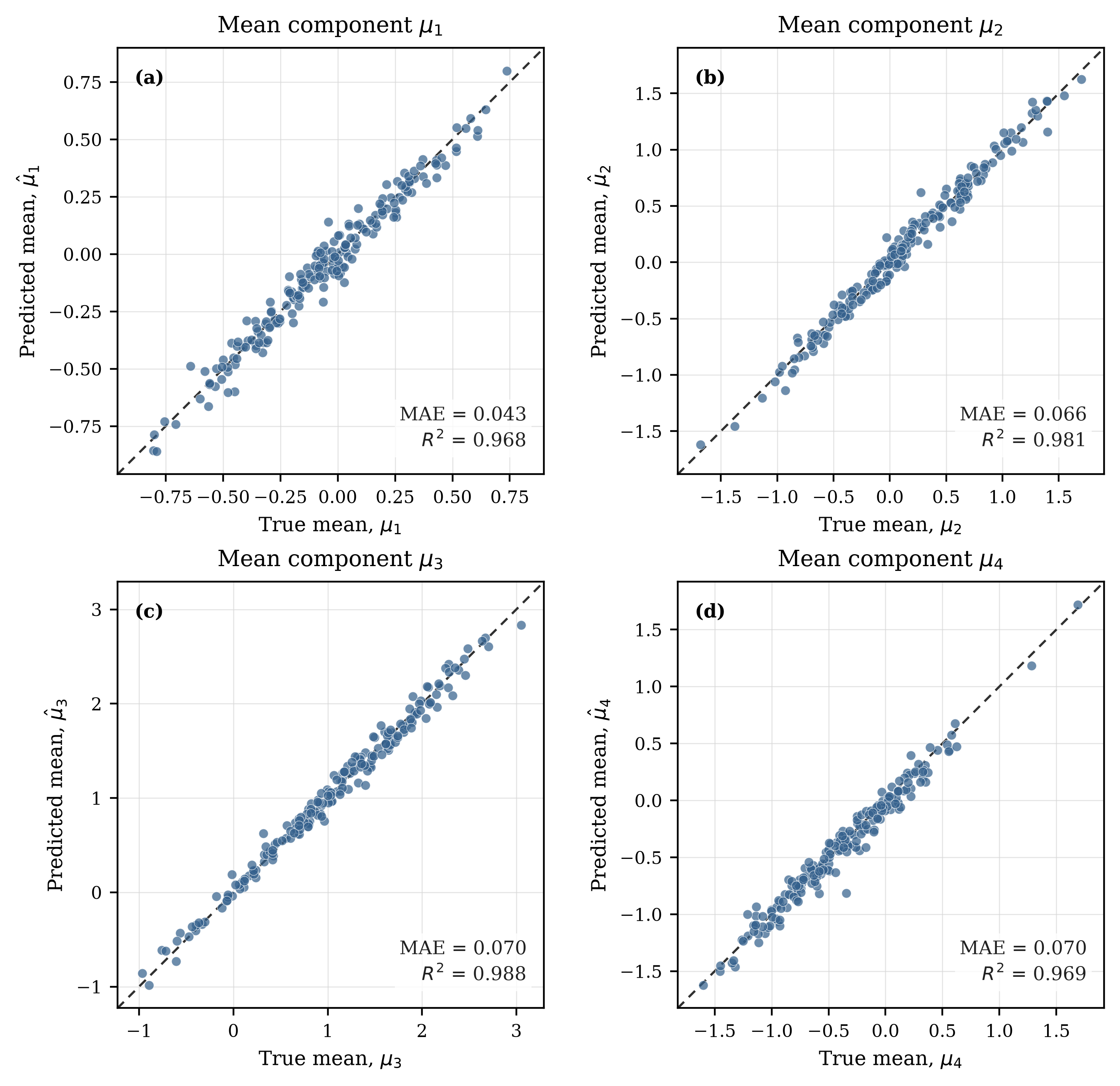}
\caption{Predicted versus true output means on the test set for the trained
Distributional Operator. Each panel
shows one of the four mean components, with one point per test distribution;
the dashed diagonal $y=x$ denotes perfect prediction.}
\label{fig:controlled-mean-scatter}
\end{figure}

\begin{figure}[htbp]
\centering
\includegraphics[width=0.65\textwidth]{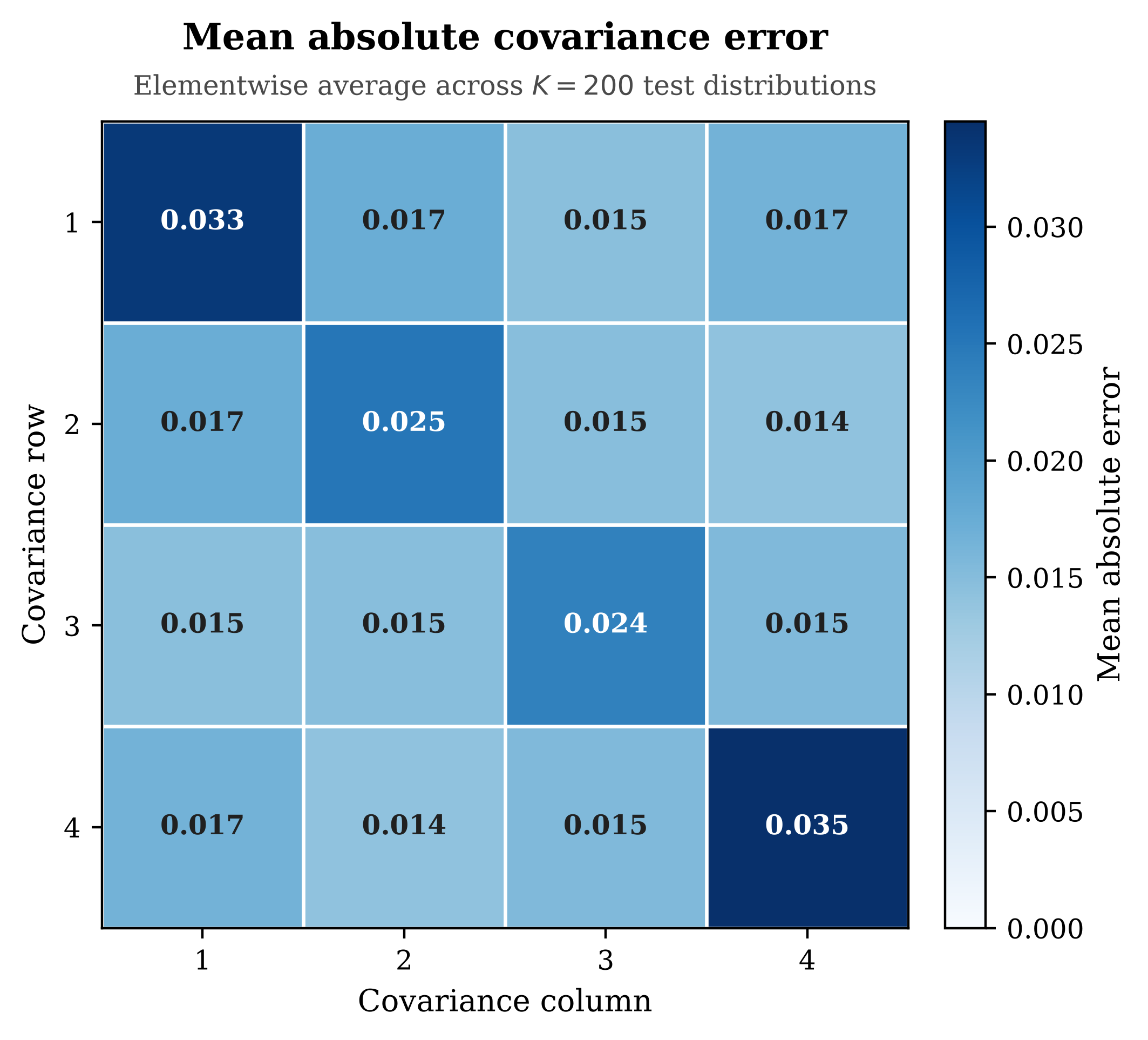}
\caption{Elementwise mean absolute error of the predicted output covariance
matrices on the test set for the trained Distributional Operator. For covariance entry $(i,j)$, the displayed value
is $N_{\mathrm{test}}^{-1}\sum_{k=1}^{N_{\mathrm{test}}}
|\widehat\Sigma_{ij}^{(k)}-\Sigma_{ij}^{(k)}|$; darker errors identify
components that are systematically more difficult to predict.}
\label{fig:controlled-covariance-mae}
\end{figure}

Figure~\ref{fig:controlled-mean-scatter} compares the predicted and true mean
components directly, while Figure~\ref{fig:controlled-covariance-mae}
summarizes covariance error across all test distributions.

\subsubsection{Additional Details}
\label{app:controlled-details}

\paragraph{Data Generation}
\label{app:controlled-data}
For each of the 1,200 laws, the
number of mixture components $C_k$ is sampled uniformly from $\{1,2,3\}$.
Independent unit-rate exponential variables are normalized to give mixture
weights
\begin{equation*}
\pi_{kc}=\frac{u_{kc}}{\sum_{c'=1}^{C_k}u_{kc'}}.
\end{equation*}
Component means and coordinatewise standard deviations are sampled
independently as
\begin{equation*}
m_{kcj}\sim\Unif[-2,2],\qquad s_{kcj}\sim\Unif[0.15,0.80],
\end{equation*}
giving the input law
\begin{equation*}
P_k=\sum_{c=1}^{C_k}\pi_{kc}
\mathcal N\!\left(m_{kc},\diag(s_{kc}^2)\right).
\end{equation*}
We draw $S=200$ independent input particles from each $P_k$.

The deterministic law feature $z_k\in\R^{24}$ concatenates the analytic
mixture mean, elementwise variance, and eight sine and eight cosine Fourier
features. Eight frequency vectors $\omega_r\sim\mathcal N(0,I_4)$ are sampled
once and shared by the dataset. Specifically,
\begin{align*}
\bar m_k&=\sum_c\pi_{kc}m_{kc},\\
v_k&=\sum_c\pi_{kc}(s_{kc}^2+m_{kc}^2)-\bar m_k^2,\\
a_{kr}&=\sum_c\pi_{kc}\exp\!\left(-\frac12\sum_j
s_{kcj}^2\omega_{rj}^2\right)\sin(\omega_r^\top m_{kc}),\\
b_{kr}&=\sum_c\pi_{kc}\exp\!\left(-\frac12\sum_j
s_{kcj}^2\omega_{rj}^2\right)\cos(\omega_r^\top m_{kc}),\\
z_k&=[\bar m_k,v_k,a_k,b_k].
\end{align*}
One fixed random map is shared by all laws. Every entry of $W_\mu,W_d,W_o$
is sampled independently from $\mathcal N(0,1/24)$. The output mean is
$\mu_k=W_\mu z_k$. A lower-triangular matrix $L_k$ is constructed by
\begin{equation*}
(L_k)_{ii}=0.25+0.75\{1+\exp[-(W_dz_k)_i]\}^{-1},
\end{equation*}
and $(L_k)_{ij}=0.2\tanh((W_oz_k)_{ij})$ for $i>j$. Thus
\begin{equation*}
\Sigma_k=L_kL_k^\top+10^{-4}I_4,
\qquad Q_k=\mathcal N(\mu_k,\Sigma_k).
\end{equation*}
Finally, 200 independent output particles are drawn from $Q_k$. The analytic
targets support evaluation but are not used by the principal training loss.

\paragraph{Baseline Implementations}
\label{app:controlled-baselines}
The \emph{Fixed-Feature MLP} computes the empirical mean and unbiased empirical
covariance of each input ensemble, concatenates all four mean and sixteen
covariance entries, and maps the resulting 20 features through four
width-128 GELU layers. The same full-covariance Gaussian head used by the
Distributional Operator returns the predicted mean and covariance. It uses
the same split, objective, optimizer, scheduler, and evaluation procedure as
the main model.

The parameter-free kernel estimator stores empirical input and output means
and covariances for all 1,000 training laws. For a query, it computes Gaussian
$W_2$ distances $d_k$ between its empirical input moments and those of each
training law, then assigns radial-basis weights
\begin{equation*}
\alpha_k=\frac{\exp[-d_k^2/(2h^2)]}
{\sum_{\ell=1}^{1000}\exp[-d_\ell^2/(2h^2)]},\qquad h=1.
\end{equation*}
It interpolates empirical output moments as
\begin{equation*}
\widehat\mu=\sum_k\alpha_k\widetilde m_k^y,
\qquad
\widehat\Sigma=\sum_k\alpha_k\widetilde C_k^y+10^{-5}I_4.
\end{equation*}
The covariance is symmetrized before stabilization and likelihood evaluation.

\paragraph{Evaluation Metrics}
\label{app:controlled-metrics}
A fixed random permutation defines the common train--test split. Five runs use
mini-batches of 64 under the default training configuration.

For target $p=\mathcal N(\mu,\Sigma)$ and prediction
$q=\mathcal N(\widehat\mu,\widehat\Sigma)$, we report empirical output-sample
NLL, Gaussian $W_2$, target-to-prediction KL divergence, and Hellinger
distance. The latter two are
\begin{align*}
D_{\mathrm{KL}}(p\Vert q)
&=\frac12\left[\tr(\widehat\Sigma^{-1}\Sigma)
+(\widehat\mu-\mu)^\top\widehat\Sigma^{-1}(\widehat\mu-\mu)
-4+\log\frac{\det\widehat\Sigma}{\det\Sigma}\right],\\
H(p,q)&=\sqrt{1-\mathrm{BC}(p,q)},\\
\mathrm{BC}(p,q)&=
\frac{(\det\Sigma)^{1/4}(\det\widehat\Sigma)^{1/4}}
{\det((\Sigma+\widehat\Sigma)/2)^{1/2}}
\exp\!\left[-\frac18(\mu-\widehat\mu)^\top
\left(\frac{\Sigma+\widehat\Sigma}{2}\right)^{-1}
(\mu-\widehat\mu)\right].
\end{align*}
Each metric is averaged over the 200 test laws; NLL is additionally
averaged over the 200 observed output particles per law.

\end{document}

%% file: path_law_theory.tex
\subsubsection{Distributional Approximation in Wasserstein Space}
\label{sec:path-law-theory}

From a distributional perspective, a random-field operator does not act
directly on a representation of random variables defined on a fixed
probability space; instead, it acts on probability measures over path spaces.
Following the notation introduced earlier, let
$\mathcal{P}_2(\mathcal{X})$ denote the space of probability measures on
$\mathcal{X}$ with finite second moments, and let $W_2^{\mathcal{X}}$ denote
the Wasserstein distance induced by the norm on $\mathcal{X}$. For a measurable
map $F:\mathcal{X}\to\mathcal{Y}$, write $F_{\#}\mu$ for the pushforward
measure.

In the distributional formulation, a random-field operator is expressed as a
map between probability measures on path spaces. Let $\mathcal{X}$ and
$\mathcal{Y}$ be the path spaces of the input and output random fields,
respectively. We consider a continuous operator of the form
\begin{equation}
  \mathcal{G}:\mathcal{U}\subset\mathcal{P}_2(\mathcal{X})
  \longrightarrow \mathcal{P}_2(\mathcal{Y}),
  \label{eq:path-law-operator}
\end{equation}
where continuity is understood with respect to $W_2^{\mathcal{X}}$ and
$W_2^{\mathcal{Y}}$ on the domain and codomain, respectively. Given a
$W_2^{\mathcal{X}}$-compact set $\mathcal{K}\subset\mathcal{U}$, our goal is
to construct a random-field distributional operator approximator
$\mathcal{N}_{\theta,n,m}$ such that, for every $\varepsilon>0$,
\begin{equation}
  \sup_{\mu\in\mathcal{K}}
  W_2^{\mathcal{Y}}\!\left(
    \mathcal{N}_{\theta,n,m}(\mu),\mathcal{G}(\mu)
  \right)<\varepsilon.
  \label{eq:path-law-approximation-goal}
\end{equation}

The domain is denoted by $\mathcal{U}$ because the subsequent proof must
compare not only $\mathcal{G}(\mu)$ with the output of the approximating
operator, but also $\mathcal{G}(\mu)$ with the output associated with the
input distribution reconstructed by the input autoencoder. The target
operator must therefore be defined and continuous on a set containing both
the original family of input distributions and its family of reconstructed
distributions.

To apply the finite-dimensional distributional operator approximation result
to random-field distributions on path spaces, we use an
encoding--finite-dimensional distributional operator--decoding architecture.
Let
\begin{align}
  \mathcal{E}_{\mathcal{X}}^n &: \mathcal{X}\to\mathbb{R}^n,
  &
  \mathcal{D}_{\mathcal{X}}^n &: \mathbb{R}^n\to\mathcal{X},
  \\
  \mathcal{E}_{\mathcal{Y}}^m &: \mathcal{Y}\to\mathbb{R}^m,
  &
  \mathcal{D}_{\mathcal{Y}}^m &: \mathbb{R}^m\to\mathcal{Y}
\end{align}
be the encoders and decoders for the input and output path spaces,
respectively. Define the corresponding path-space reconstruction operators by
\begin{equation}
  P_{\mathcal{X}}^n
  =\mathcal{D}_{\mathcal{X}}^n\circ\mathcal{E}_{\mathcal{X}}^n,
  \qquad
  P_{\mathcal{Y}}^m
  =\mathcal{D}_{\mathcal{Y}}^m\circ\mathcal{E}_{\mathcal{Y}}^m.
\end{equation}
If $X$ is an $\mathcal{X}$-valued random field with
$\mu=\mathcal{L}(X)$, then the encoded random variable is
$\mathcal{E}_{\mathcal{X}}^n(X)\in\mathbb{R}^n$, whose law is
\begin{equation}
  (\mathcal{E}_{\mathcal{X}}^n)_{\#}\mu
  =\mathcal{L}\!\left(\mathcal{E}_{\mathcal{X}}^n(X)\right)
  \in\mathcal{P}_2(\mathbb{R}^n).
  \label{eq:encoded-path-law}
\end{equation}
Thus, finite-dimensional encoding does not turn the random-field distribution
$\mu$ into a single deterministic vector. Rather, it pushes the path-space
random variable forward to a finite-dimensional random vector while
preserving its probability distribution.

For fixed $n$ and $m$, the finite-dimensional distributional operator
approximation theory developed above provides an abstract
finite-dimensional neural operator module
\begin{equation}
  \widehat{\mathcal{N}}_{\theta}:
  \mathcal{P}_2(\mathbb{R}^n)\longrightarrow
  \mathcal{P}_2(\mathbb{R}^m).
\end{equation}
The random-field distributional operator approximator induced by this module
is defined by
\begin{equation}
  \mathcal{N}_{\theta,n,m}
  = (\mathcal{D}_{\mathcal{Y}}^m)_{\#}
    \circ\widehat{\mathcal{N}}_{\theta}
    \circ(\mathcal{E}_{\mathcal{X}}^n)_{\#}.
  \label{eq:path-law-lifting}
\end{equation}
This architecture can be represented by the following commutative diagram:
\begin{equation}
  \begin{array}{ccc}
    \mathcal{P}_2(\mathcal{X})
      & \xrightarrow{\ \mathcal{G}\ }
      & \mathcal{P}_2(\mathcal{Y})
      \\[0.4em]
    {\scriptstyle(\mathcal{E}_{\mathcal{X}}^n)_{\#}}\Big\downarrow
      &
      & \Big\uparrow{\scriptstyle(\mathcal{D}_{\mathcal{Y}}^m)_{\#}}
      \\[0.4em]
    \mathcal{P}_2(\mathbb{R}^n)
      & \xrightarrow{\ \widehat{\mathcal{N}}_{\theta}\ }
      & \mathcal{P}_2(\mathbb{R}^m).
  \end{array}
  \label{eq:path-law-commutative-diagram}
\end{equation}

In diagram~\eqref{eq:path-law-commutative-diagram}, the upper arrow is the
target random-field distributional operator $\mathcal{G}$; the left arrow
pushes probability measures on the input path space forward to a
finite-dimensional distribution space; the lower arrow is the neural operator
module between finite-dimensional distribution spaces; and the right arrow
decodes the finite-dimensional output distribution back to the output path
space. The central question can therefore be stated as follows: under what
conditions does the approximating path formed by the left, lower, and right
arrows approximate the target operator $\mathcal{G}$ uniformly in
$W_2^{\mathcal{Y}}$?

The proof strategy can be understood by examining the sources of error in the
commutative diagram. First, on the input side, we must control the information
loss caused by finite-dimensional encoding. In other words, after the original
input distribution $\mu$ has been encoded and decoded, the resulting
reconstructed distribution should be close to the original distribution in
$W_2^{\mathcal{X}}$. If this error can be controlled uniformly over the family
of input distributions under consideration, and if the target operator
$\mathcal{G}$ is continuous with respect to Wasserstein distance, then the
finite-dimensional reduction on the input side cannot produce an
uncontrollable error in the output distribution.

Second, once the input has been encoded, the problem becomes one of operator
approximation between finite-dimensional distribution spaces. This part is not
developed again here; it is supplied by the finite-dimensional distributional
operator approximation theory established above and is denoted
uniformly by $\widehat{\mathcal{N}}_{\theta}$. Its role is to approximate the
finite-dimensional distributional map induced by the original operator
$\mathcal{G}$ between the input and output encoding spaces, namely, the lower
arrow in the commutative diagram.

Finally, on the output side, we must control the error introduced when the
finite-dimensional output distribution is decoded back to path space. Even if
the finite-dimensional distributional module gives a good approximation in
$\mathcal{P}_2(\mathbb{R}^m)$, the output decoder must still recover the target
output distribution stably in $W_2^{\mathcal{Y}}$. The output path space must
therefore possess a corresponding uniform autoencoding property in the
Wasserstein sense.

In summary, the approximation error of a random-field distributional operator
can be divided into three components: the input-side autoencoding error, the
finite-dimensional distributional operator approximation error, and the
output-side autoencoding error. The main theorem that follows is organized
precisely around this decomposition. It first controls the reconstruction
errors on the input and output sides through finite-dimensional autoencoding
structures on the path spaces, then invokes the finite-dimensional
distributional operator approximation result established above to
control the error along the lower arrow of the commutative diagram, and
finally obtains uniform Wasserstein approximation of $\mathcal{G}$ by
$\mathcal{N}_{\theta,n,m}$.

It follows that the key to the subsequent theory is not to reconstruct the
finite-dimensional distributional operator, but to prove that the three
errors above can be controlled simultaneously in the Wasserstein sense. This 
formulation does not require the input and output random fields to be paired 
pointwise on the same event space. It is therefore better suited to learning problems in which
random-field distributions are accessed only through sample sets or empirical
distributions. The following theory will make this error decomposition based on
diagram~\eqref{eq:path-law-commutative-diagram} rigorous.